\documentclass{article}

    \PassOptionsToPackage{numbers, compress}{natbib}

\usepackage[preprint]{neurips_2026}
\usepackage[utf8]{inputenc} 
\usepackage[T1]{fontenc}    
\usepackage{hyperref}       
\usepackage{url}            
\usepackage{booktabs}       
\usepackage{amsfonts}       
\usepackage{nicefrac}       
\usepackage{microtype}      
\usepackage{xcolor}         

\usepackage{amsmath}
\usepackage{amssymb}
\usepackage{mathtools}
\usepackage{amsthm}

\usepackage{amsmath,amsfonts,bm}

\def\eqref#1{equation~\ref{#1}}

\def\1{\bm{1}}

\def\rvc{{\mathbf{c}}}

\def\rvg{{\mathbf{g}}}

\def\rvm{{\mathbf{m}}}

\def\rvv{{\mathbf{v}}}
\def\rvw{{\mathbf{w}}}
\def\rvx{{\mathbf{x}}}

\def\mI{{\bm{I}}}

\DeclareMathAlphabet{\mathsfit}{\encodingdefault}{\sfdefault}{m}{sl}
\SetMathAlphabet{\mathsfit}{bold}{\encodingdefault}{\sfdefault}{bx}{n}

\def\gB{{\mathcal{B}}}

\def\gD{{\mathcal{D}}}

\def\gL{{\mathcal{L}}}
\def\gM{{\mathcal{M}}}
\def\gN{{\mathcal{N}}}
\def\gO{{\mathcal{O}}}

\def\gT{{\mathcal{T}}}

\def\gW{{\mathcal{W}}}

\def\sR{{\mathbb{R}}}

\newcommand{\meanp}[2]{\mathbb{E}_{#1} \left\lbrack #2 \right\rbrack}

\newcommand{\cov}[2]{\mathrm{cov}_{#1} \left\lbrack #2 \right\rbrack}

\newcommand{\norm}[1]{\lVert#1\rVert}

\usepackage{subcaption}
\usepackage{algorithm}
\usepackage{algpseudocode}

\def\method{IMCD}
\def\sg{\mathrm{sg}}

\theoremstyle{plain}
\newtheorem{theorem}{Theorem}[section]
\newtheorem{proposition}[theorem]{Proposition}

\newtheorem{corollary}[theorem]{Corollary}
\theoremstyle{definition}
\newtheorem{definition}[theorem]{Definition}

\theoremstyle{remark}
\newtheorem{remark}[theorem]{Remark}
\usepackage[capitalize,noabbrev]{cleveref}
\crefname{corollary}{corollary}{corollaries}
\Crefname{corollary}{Corollary}{Corollaries}
\crefname{definition}{definition}{definitions}
\Crefname{definition}{Definition}{Definitions}
\crefname{proposition}{proposition}{proposition}
\Crefname{proposition}{Proposition}{Proposition}

\title{Integration Matters: Rollout-Based Training for Constrained Diffusion Models}

\author{
Xiaoxuan Liang$^{1\,2}$\quad
Saeid Naderiparizi$^{1 \, 2}$ \quad
Berend Zwartsenberg$^{2}$ \quad
Frank Wood$^{1\,2\,3}$\\[0.5em]
$^1$University of British Columbia \quad
$^2$Inverted AI \quad
$^3$Alberta Machine Intelligence Institute\\
\texttt{liang51@cs.ubc.ca}
}

\begin{document}

\maketitle

\begin{abstract}
    Constrained generative models aim to produce samples that satisfy complex feasibility constraints while remaining faithful to the data distribution. Existing constrained generation methods typically enforce constraints either through training-time optimization or sampling-time correction. Training-time optimization approaches optimize on states induced by the training distribution, which can differ substantially from those encountered during sampling. Sampling-time correction methods instead modify the sampling process at inference, introducing distribution shift and requiring expensive tuning, particularly for few-step sampling. We propose a fine-tuning framework that incorporates constraint guidance obtained through online rollout into the training process, which aligns training with sampling by differentiating through the fixed noise schedule used to numerically integrate the denoising process. This exposes the model to violations that arise along the denoising trajectory and aligns diffusion learning with the sampling process. Experiments across multiple tasks show that our method improves constraint satisfaction while maintaining competitive sampling quality compared to prior methods.
\end{abstract}

\section{Introduction}
In many real-world applications, generative tasks are defined not only by the underlying data distribution but also by complex feasibility constraints that govern the validity of the sample space. In robotics systems, generated states should remain geometrically feasible and avoid collisions with the environment and other agents~\citep{schulman2014motion, carvalho2023motion}. In autonomous driving scenarios, predicted trajectories should remain on the road~\citep{yang2024diffusion, zheng2025diffusion}. Diffusion models~\citep{ho2020denoising, songscore, karras2022elucidating} provide a powerful framework for high-quality generation~\citep{rombach2022high, ho2022video, harvey2022flexible}, but standard denoising objectives do not explicitly enforce feasibility constraints. As a result, generated samples may violate constraints that are critical for deployment.

Specifically, standard diffusion models are trained with denoising objectives evaluated at independently sampled noise levels, without directly enforcing constraints on the final sample. Existing constrained diffusion models address this gap through constraint-aware fine-tuning~\citep{bastek2024physics, naderiparizi2025constrained, liang2026improved} or post-training guidance~\citep{dhariwal2021diffusion, chung2022diffusion, hemanifold, yu2023freedom, lou2023reflected, fishman2023diffusion, ye2024tfg, christopher2024projected, liang2025simultaneous}. Fine-tuning methods incorporate constraint guidance during training, but often optimize constraint losses on forward noising states, which can differ from the denoising states encountered during sampling. Sampling-time correction methods instead apply guidance only at inference, forcing corrections on states the model was not trained to handle~\citep{chung2024cfg++, yang2024guidance}.



We tackle constrained generation with a fine-tuning approach that aligns training with sampling. Instead of enforcing constraints only through sampling-time corrections, we optimize the model through online denoising rollouts that match the inference procedure. Constraint violations are measured on the terminal generated samples, and the resulting objective trains the model on the states it encounters during sampling. To preserve distributional fidelity, we retain the EDM denoising loss as a regularizer.

Our contributions are: 
\begin{itemize}
    \item We introduce a fine-tuning framework that optimizes constraint satisfaction through the reverse denoising trajectory.
    \item We provide formal analysis of the training-time alignment induced by our training objective and show that the terminal denoising state converges towards the feasible region.
    \item We validate our method on two constrained generation tasks: bouncing ball and traffic scene trajectory prediction experiments.
\end{itemize}

\section{Background}
\paragraph{Diffusion model generation}
Diffusion models~\citep{sohl2015deep, song2019generative, ho2020denoising, songscore} generate samples by gradually corrupting data with noise and then learning to reverse this process through denoising. Formally, the forward process perturbs data $\rvx_0\sim p_0$ into a sequence of increasingly noisy states $\rvx_t$, while the reverse process reconstructs clean samples from noise.
The forward process is defined by the Ito stochastic differential equation (SDE):
\begin{align}
    d\rvx_t = f(\rvx_t; t)\,dt + g(t)\,d\rvw_t,
\end{align}
where $f(\cdot; \cdot)$ and $g(\cdot)$ are drift and diffusion coefficients, and $\rvw_t$ is a standard Wiener process. The corresponding reverse-time SDE is given by~\citep{anderson1982reverse}:
\begin{align}
    d\rvx_t = [f(\rvx_t; t) - g^2(t)\nabla \log p_t(\rvx_t)]\,dt + g(t)\,d\bar\rvw_t.
\end{align}
where $\tilde\rvw_t$ denotes a Wiener process evolving backward in time. Since the score function $\nabla_{\rvx_t} \log p_t(\rvx_t)$ is intractable, diffusion models learn a neural approximation $s_\theta(\rvx_t; t)$ trained via denoising score matching (DSM) objective~\citep{vincent2011connection}, which minimizes
\begin{align}
    \gL_{\mathrm{DSM}}(\theta) = \meanp{t, \rvx_0, \rvx_t}{\lambda(t)\norm{s_\theta(\rvx_t; t) - \nabla_{\rvx_t} \log p_t(\rvx_t)}^2},
\end{align}
where $\rvx_t\sim p(\rvx_t\mid \rvx_0)$ is sampled from the forward process.

For the Gaussian perturbation process used in EDM~\citep{karras2022elucidating}, consider the diffusion-only SDE 
\begin{align}
    d\rvx_t = g(t)\,d\rvw_t.
\end{align} 
It has marginals $p(\rvx_t\mid \rvx_0)\sim \gN(\rvx_0, \sigma^2(t)\mI)$, $\sigma^2(t) = \int_0^t g^2(s)\,ds$, where $\sigma(t)$ denotes noise level at time $t$. Choosing $g(t) = \sqrt{2t}$ leads to $\rvx_t \sim \gN(\rvx_0, t^2\mI)$. Thus the noise level satisfies $\sigma(t) = t$, and we use $t$ to denote the noise scale throughout. This equivalence is standard~\citep{songscore, karras2022elucidating}. Through Tweedie's formula~\citep{efron2011tweedie}, score estimation can be recast as a denoising problem. Given a denoiser network $D_\theta(\rvx_t; t)$ that approximates $\meanp{}{\rvx_0\mid \rvx_t}$, the score can be estimated as
\begin{align}
    \nabla_{\rvx_t} \log p_t(\rvx_t) = \frac{\meanp{}{\rvx_0\mid \rvx_t} - \rvx_t}{t^2} \approx \frac{D_\theta(\rvx_t; t) - \rvx_t}{t^2}.
\end{align}
\paragraph{Constrained diffusion generation}
In many applications, generated samples are required to satisfy problem-specific constraints that define a feasible set $\Omega\subseteq \sR^d$. These constraints often do not admit an explicit analytic boundary and are only implicitly specified through task-specified functions. Let $\ell^\Omega(\rvx)$ denote a constraint violation function such that, for a sample $\rvx\in\sR^d$, $\ell^\Omega(\rvx) = 0$ if $\rvx\in\Omega$ and $\ell^\Omega(\rvx) > 0$ otherwise. We assume that the data distribution is supported on the feasible set, $\mathrm{supp} (p_0)\subseteq\Omega$. Constrained diffusion generation seeks to produce samples through a reverse process that remains faithful to $p_0$ while satisfying the feasibility condition of lying in $\Omega$.

However, learning such constraints from data alone is challenging, as training data only consists of feasible examples and lacks explicit information about infeasible regions~\citep{regenwetter2024constraining}. Prior work~\citep{naderiparizidon} further shows that increasing dataset size alone does not eliminate constraint violations, leaving the model's behavior outside the data support and instead determined by inductive biases. Consequently, maximum-likelihood training alone cannot achieve high constraint satisfaction, motivating the need for additional signals that explicitly penalize infeasible generations. 

\section{Limitations of existing approaches to constrained diffusion}
\subsection{Training-time misalignment in constraint fine-tuning}
\label{sec: training-sampling mismatch}
Prior constrained diffusion methods~\citep{bastek2024physics, liang2026improved} derive their training signals from constraint loss evaluated at $D_\theta(\rvx_t; t)$, where $\rvx_t\sim q(\rvx_t\mid \rvx_0)$. At low noise levels, $\rvx_t$ remains close to the data manifold and is typically near-feasible. Since the pretrained denoiser trained on feasible data tends to recover specific training samples~\citep{biroli2024dynamical}, $D_\theta(\rvx_t; t)$ also tends to satisfy the constraints. This makes $\ell^\Omega(D_\theta(\rvx_t; t)) \approx 0$ and causes the correction signal to vanish. In contrast, sampling starts from Gaussian noise and follows discrete denoising steps. High-noise samples often introduce violations under uninformative guidance. These violations can persist until low noise, where correction becomes reliable. However, low-noise infeasible states produced by reverse rollouts are rarely encountered during training. This leads to a misalignment: the model is trained mostly on near-feasible states, yet must correct infeasible rollout states at inference.

This is illustrated in~\cref{fig: infraction-loss}. We evaluate the constraint loss of the one-step lookahead $D_\theta(\rvx_t; t)$ at matched noise levels, where $\rvx_t$ is obtained either from the forward process or reverse sampling using the EDM model later evaluated in~\cref{sec: bouncing-ball}. While both exhibit similar behavior at high noise levels, they diverge significantly at low noise. Forward states remain near-feasible and yield near-zero constraint loss, whereas reverse states incur persistent violations. This indicates that training on forward states underestimates the violations encountered during sampling.


Prior work has studied the discrepancy between forward sampling states $\rvx_t$ and reverse sampling states $\tilde\rvx_t$~\citep{ning2023input, li2023alleviating}. We examine its effect by comparing constraint losses on the denoised lookahead $D_\theta(\rvx_t; t)$ and $D_\theta(\tilde\rvx_t; t)$. This shows that constraint signals remain weak on forward sampled states, but become nontrivial along the reverse denoising trajectory.

\begin{figure}[t]
    \centering
    \begin{minipage}{0.52\textwidth}
        \centering
        \includegraphics[width=\linewidth]{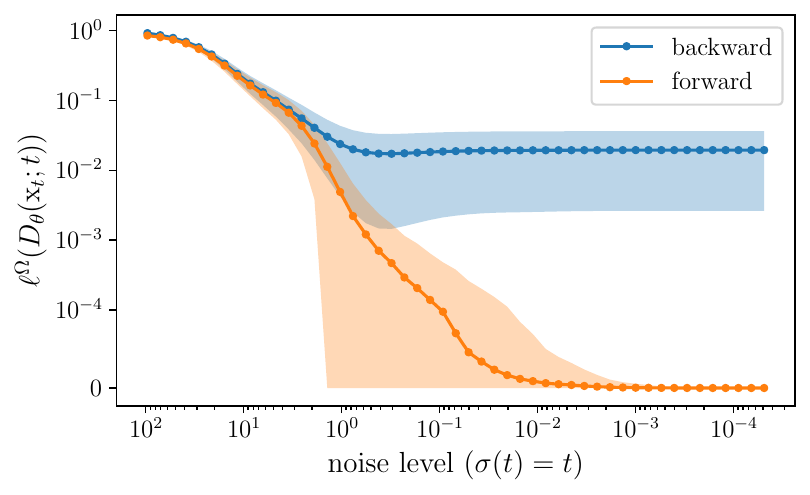}
        \captionof{figure}{Infraction loss of the one-step lookahead prediction as a function of intermediate state $\rvx_t$ and noise level $t$, comparing forward process samples and reverse-time sampling states. Shaded regions denote $\pm 0.2$ standard deviations.}
        \label{fig: infraction-loss}
    \end{minipage}
    \hfill
    \begin{minipage}{0.44\textwidth}
        \centering
          \begin{subfigure}[b]{\textwidth}
            \centering
            \includegraphics[width=\textwidth]{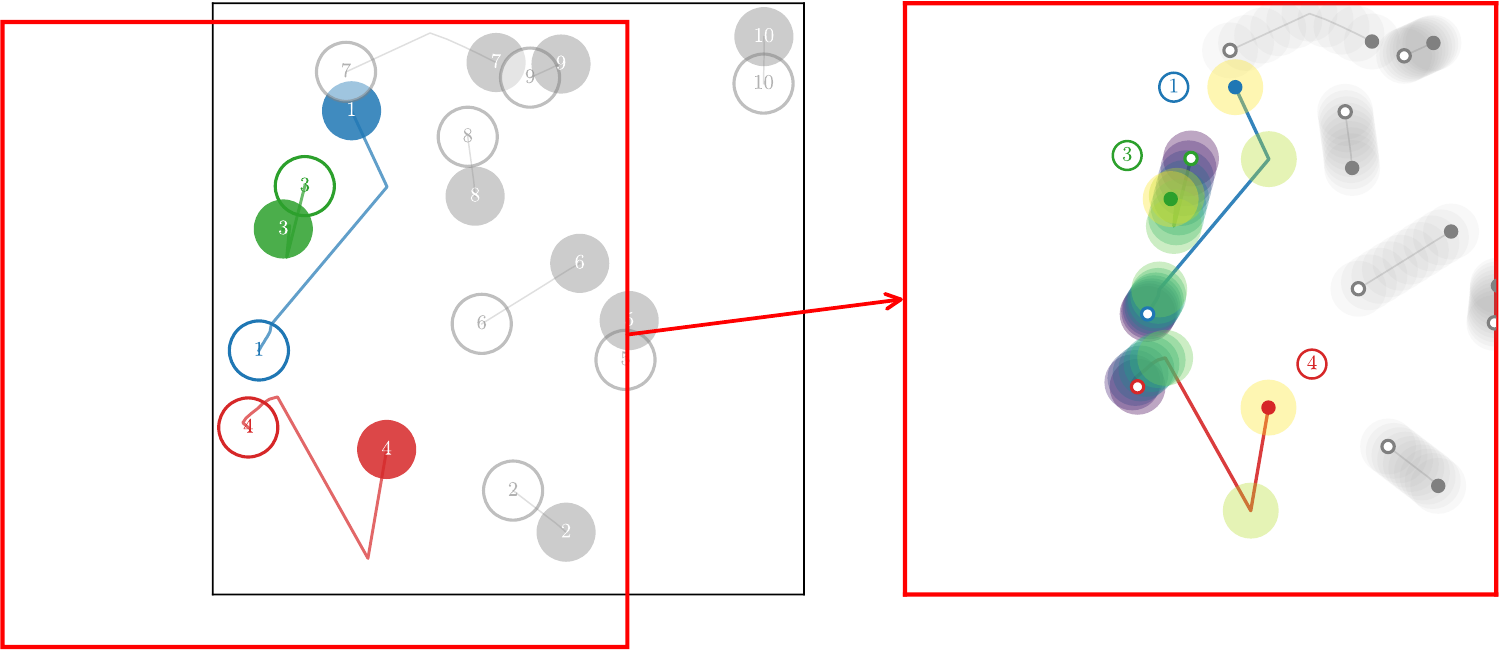}
            \caption{Generated sample}
            \label{fig:poly-inf-rate}
          \end{subfigure}
          \hfill
          \begin{subfigure}[b]{\textwidth}
            \centering
            \includegraphics[width=\textwidth]{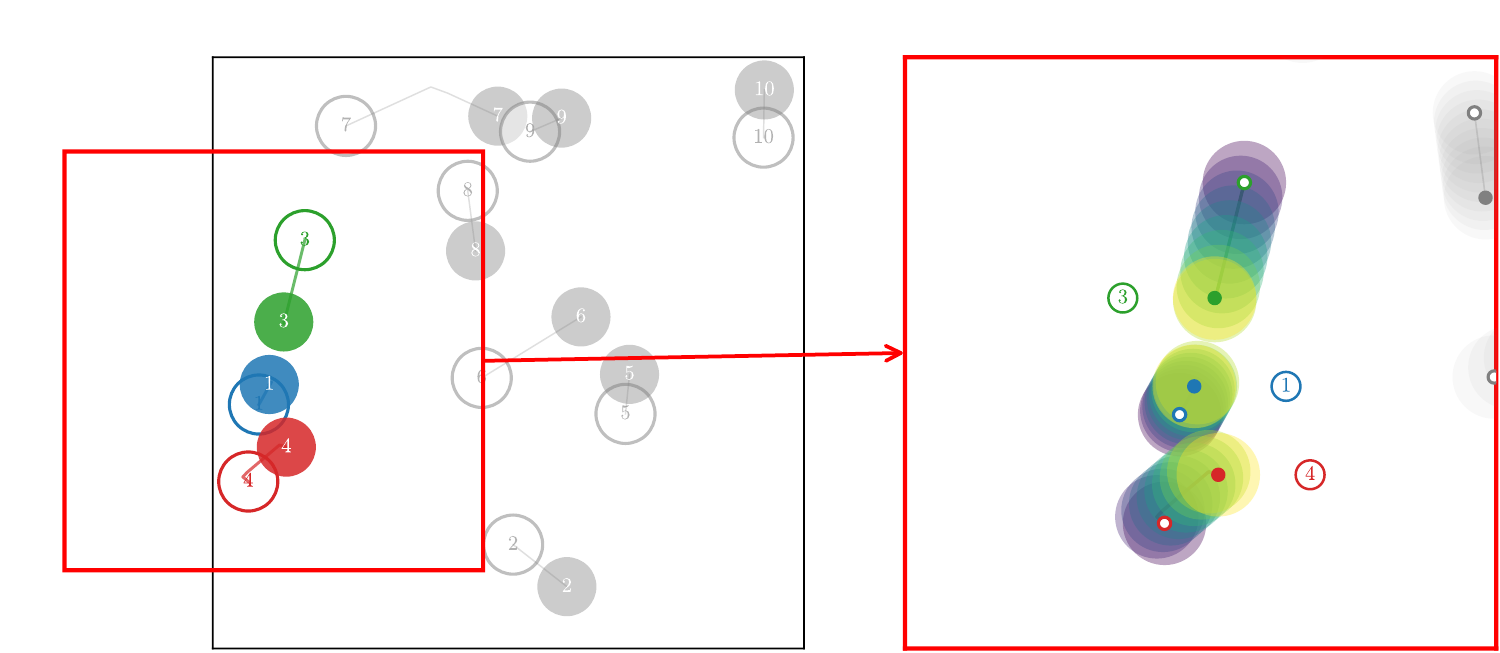}
            \caption{Ground truth sample}
            \label{fig:poly-mse}
          \end{subfigure}
        \captionof{figure}{Generated versus ground-truth bouncing ball trajectories.}
        \label{fig: bouncing-balls}
    \end{minipage}
\end{figure}

\subsection{Sampling-time correction: distribution shift and few-step degradation}
\label{sec: limitation-of-tf}
Another common approach enforces constraints only at sampling time, using either projection~\citep{christopher2024projected, liang2025simultaneous} or gradient-based updates~\citep{hemanifold}
\begin{align}
    d\rvx_t = \left[f(\rvx_t; t) - g^2(t) (s_\theta(\rvx_t; t) -\gamma(t) \nabla \ell^\Omega(\rvx_t; t))\right]\,dt + g(t)\,d\bar\rvw_t,
\end{align}
where $\gamma(t)$ controls the guidance strength. These methods correct violations online during denoising and can reduce the infeasibility rate in practice, but they are highly sensitive to the guidance schedule~\citep{chung2022diffusion}. Weak guidance may fail to remove violations, while strong guidance can push intermediate state $\rvx_t$ away from the data manifold~\citep{dhariwal2021diffusion}, which places samples in regions where the learned score is unreliable. This further widens the misalignment between forward sampled and reverse sampling states. Moreover, because sampling-time correction relies on incremental corrections across the denoising trajectory, it becomes sensitive to sampling budget: many steps allow small corrections to accumulate, while few steps require stronger updates that can distort or destabilize sampling~\citep{shen2024understanding}.


\cref{fig: bouncing-balls} illustrates this behavior using trajectories generated by a sampling-time correction method called MPGD without projection~\citep{hemanifold} under 50-step sampling setting later used in~\cref{sec: bouncing-ball}. The full-scene and zoomed views use hollow and solid circles denoting start and end positions. With gray balls shown for context, balls 1, 3, and 4 are highlighted because of their jittering motion, which produces sharp trajectory changes that avoid collision but distort the motion. In contrast, the ground truth rollout remains smooth and physically consistent. Thus, this method enforces feasibility without preserving realistic motion.

These observations reveal two limitations. Training-time optimization suffers from misalignment between the states encountered from training and sampling distributions. Sampling-time correction introduces distribution shift and becomes fragile under few-step sampling. Together, these limitations motivate training on reverse-time rollout states and learning constraint-aware updates that enforce feasibility.

\section{Methodology}
We present~\method{} (Integration Matters for Constrained Diffusion), a rollout-based fine-tuning framework for constrained generative modeling that addresses the training-time misalignment. \method{} evaluates constraint losses at terminal rollout samples and backpropagates through the full denoising trajectory for model updates, aligning optimization with inference. During each rollout step, it computes constraint-aware corrections using denoised lookahead estimates and applies them through a learnable scaling mechanism, enabling adaptive guidance across noise levels. To preserve data fidelity, \method{} retains the EDM loss and updates only lightweight trainable components.


We assume access to a pretrained denoiser denoted by $D$, where the pretrained model is parameterized by fixed model weights $\theta$. Since $\theta$ remains frozen throughout fine-tuning, we omit it from the notation. The fine-tuned denoiser network is initialized from $\theta$ and denoted by $D_\phi$, where $\phi$ collects the lightweight trainable components introduced during fine-tuning. The score functions induced by these denoisers are denoted by $s(\rvx_t; t)$ and $s_\phi(\rvx_t; t)$, respectively.


\def\guidance{\ensuremath{G^\Omega}}
\def\lossgrad{\ensuremath{\partial_1\ell^\Omega}}
\subsection{Constraint Guidance Mechanism}
A key design choice in constrained diffusion is how to construct a constraint-aware score estimator. Generalizing the definitions in \citep{naderiparizi2025constrained}, we consider estimators of the form
\begin{align}
\label{eq: trainable-score}
    s_\phi^\Omega(\rvx_t; t) = s_\phi(\rvx_t; t) - \gamma(\rvx_t; t) \guidance(\rvx_t; t),
\end{align}
where $\guidance$ produces a direction that reduces violations of the constraint $\Omega$ and $\gamma$ controls its strength. We call $\guidance$ the guidance function and $\gamma$ the scaling function. \cref{tab: scaling-table} summarizes the definition of these functions in our method and in prior work.

\paragraph{Guidance function} We follow \citet{liang2026improved} for the choice of guidance function $\guidance(\rvx_t; t) := \lossgrad(D(\rvx_t; t); t)$ where $\partial_1$ denotes the partial derivative with respect to the first argument. Effectively, this computes the gradient of the constraint loss function at the one-step denoising prediction by a pretrained denoiser $D$. \citet{liang2026improved} show that this leads to a more stable model (both during training and sampling) than computing the gradients at the noisy state $\rvx_t$.

It is important to note that $\lossgrad(D(\rvx_t; t); t)$ differs from $\nabla_{\rvx_t}\ell^\Omega(D(\rvx_t; t); t)$. The latter corresponds more directly to the diffusion formulation, since it fully accounts for the dependence of the denoised prediction on the noisy latent $\rvx_t$ by backpropagating through the denoiser $D$. In contrast, $\lossgrad(D(\rvx_t; t); t)$ ignores this dependency and does not backpropagate gradients through $D$, making it significantly more computationally efficient in practice.

\paragraph{Scaling function} 
The scaling $\gamma$ used in prior work~\citep{naderiparizi2022amortized,liang2026improved} is independent of the sample $\rvx_t$ and is typically selected via grid search. In practice, this tuning procedure can be computationally expensive, as the optimal value of $\gamma$ depends on the sampling regime, noise level, and constraint strength.
We instead propose to learn a scaling function that adapts the correction dynamically to both the current state $\rvx_t$ and diffusion time $t$. This eliminates the need for costly hyperparameter sweeps. This parameterization is also motivated by the exact gradient $\nabla_{\rvx_t}\ell^\Omega(D(\rvx_t; t); t)$. In particular, the discrepancy between $\lossgrad(D(\rvx_t; t); t)$ and the exact gradient induced by backpropagation through the denoiser can be interpreted as a state- and time-dependent rescaling of the guidance direction. More precisely,
\begin{align}
    \nabla_{\rvx_t} \ell^\Omega(D(\rvx_t; t)) = \left[\frac{\partial D(\rvx_t; t)}{\partial \rvx_t}\right]^\top \lossgrad(D(\rvx_t; t); t).
\end{align}
Prior work~\citep{efron2011tweedie, boys2023tweedie} shows that the Jacobian scales with a posterior covariance term of the form $\partial D(\rvx_t; t)/\partial \rvx_t \approx \cov{}{\rvx_0 \mid \rvx_t}/t^2$. The covariance term captures state-dependent uncertainty, while the denominator amplifies corrections as the sample approaches the data manifold.

\begin{definition}
    \label[definition]{definition: learned-scaling}
    (Learned scaling network) Given a trainable score $s_\phi$, a constraint violation function $\ell^\Omega(\rvx_t; t)$, and a learned scaling network $\gamma_\phi: \sR^d\times [0, T]\to \sR^+$, we define:
    \begin{align}
        \label{eq: scaling-network}
        \gamma_\phi(\rvx_t; t) = \alpha_\phi(\rvx_t; t)\cdot t^{\beta_\phi(\rvx_t; t)},
    \end{align}
    where $\alpha_\phi(\rvx_t; t) \geq 0$, $\beta_\phi < -1$. The constrained score is then
    \begin{align}
    \label{eq: constrained-score}
    s_\phi^\Omega = s_\phi(\rvx_t; t) - \gamma_\phi(\rvx_t; t)\lossgrad(D(\rvx_t; t); t),
    \end{align}
\end{definition}

This parameterization keeps the simple power-law guidance form but makes it adaptive to both the current state $\rvx_t$ and noise level $t$.

\begin{proposition}
\label[proposition]{proposition: convergence}
    (Convergence property) The constrained score~\ref{eq: constrained-score} corresponds to the score of 
    \begin{align}
        p^\Omega(\rvx_t; t) \propto p(\rvx_t; t) \exp(-\gamma_\phi(\rvx_t; t)\ell^\Omega(D(\rvx_t; t))).
    \end{align}
    Furthermore, $\gamma_\phi(\rvx_t; t)\to\infty$ as $t\to 0$ for any $\rvx_t$ with $\alpha_\phi(\rvx_t; t) > 0$, such that
    \begin{align}
        \lim_{t\to 0} p^\Omega(\rvx_t; t) \propto \lim_{t\to 0} p(\rvx_t; t) \1_\Omega(\rvx_t),
    \end{align}
    preserving stronger corrections near the data manifold while allowing the magnitude to adapt to the current state and noise level. 
\end{proposition}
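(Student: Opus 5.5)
The plan has two parts, matching the two claims of \cref{proposition: convergence}.

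\textbf{First claim.} Take the logarithm of the tilted density:
\begin{align}
\log p^\Omega(\rvx_t; t) = \log p(\rvx_t; t) - \gamma_\phi(\rvx_t; t)\,\ell^\Omega(D(\rvx_t; t)) - \log Z_t ,
\end{align}
where $Z_t$ does not depend on $\rvx_t$. Differentiating in $\rvx_t$ gives $s(\rvx_t;t)$ minus the gradient of the product $\gamma_\phi\,\ell^\Omega(D(\rvx_t;t))$. This must be matched with \eqref{eq: constrained-score}, and three identifications are needed for that.
\begin{itemize}
\item The base score $s$ is replaced by the fine-tuned score $s_\phi$, since $p$ is read as the density whose score is $s_\phi$.
\item The terms in which the derivative falls on $\gamma_\phi$ are dropped, i.e.\ $\gamma_\phi$ is treated as locally constant in $\rvx_t$, as a stop-gradient factor.
\item The dependence of $D$ on $\rvx_t$ is not propagated, so $\nabla_{\rvx_t}\ell^\Omega(D(\rvx_t;t))$ is replaced by $\lossgrad(D(\rvx_t;t);t)$. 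This is the same convention the paper adopts for the guidance function $\guidance$. The Jacobian $\partial D/\partial\rvx_t\approx \cov{}{\rvx_0\mid\rvx_t}/t^2$ is absorbed into the state- and time-dependent scaling, exactly as in the motivation preceding \cref{definition: learned-scaling}.
\end{itemize}
I would state these conventions explicitly and say that ``corresponds'' is meant up to them. This is the main obstacle: the correspondence is not an exact identity without them, and the cleanest route is to present $s^\Omega_\phi$ as the stop-gradient score of the tilted density.

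\textbf{Second claim, divergence of the scaling.} Fix $\rvx_t$ with $\alpha_\phi>0$. Since $\beta_\phi<-1<0$, we have $t^{\beta_\phi}=\exp(\beta_\phi\log t)\to\infty$ as $t\to0^+$, and therefore $\gamma_\phi=\alpha_\phi t^{\beta_\phi}\to\infty$. For this to hold along the limit I would assume, implicitly as well, that $\alpha_\phi$ stays bounded away from zero and $\beta_\phi$ stays bounded above by some $\bar\beta<0$ as $t\to0$. These hold, for example, if $\alpha_\phi$ and $\beta_\phi$ extend continuously to $t=0$.

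\textbf{Second claim, limiting density.} The exponential tilt is analyzed pointwise.
\begin{itemize}
\item If $D(\rvx_t;t)\in\Omega$, then $\ell^\Omega=0$ and the factor equals $1$.
\item Otherwise $\ell^\Omega>0$, and $\exp(-\gamma_\phi\ell^\Omega)\to0$ because $\gamma_\phi\to\infty$.
\end{itemize}
To convert the condition $D(\rvx_t;t)\in\Omega$ into $\rvx_t\in\Omega$, use that as $t\to0$ the denoiser approaches the identity, $D(\rvx_t;t)\to\rvx_t$, by Tweedie with vanishing noise. To conclude that the factor tends to $0$ when $\rvx_t\notin\Omega$, I would additionally assume that $\ell^\Omega$ is continuous, so that $\ell^\Omega(D(\rvx_t;t))\to\ell^\Omega(\rvx_t)>0$ and the product $\gamma_\phi\ell^\Omega$ diverges. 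This yields the pointwise limit $\1_\Omega(\rvx_t)$ of the tilt.

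The boundary case needs separate care: if $\rvx\in\Omega$ but $D$ lands slightly outside $\Omega$, the product $\gamma\ell$ can be indeterminate. I would handle this by restricting the claim to points of $\Omega$ where the tilt stays bounded, or by noting that $\mathrm{supp}(p_0)\subseteq\Omega$ makes these points negligible for the limiting measure.

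\textbf{Normalization.} Finally, argue that the normalizers converge: $Z_t\to\int p_0\1_\Omega$, which is positive since $\mathrm{supp}(p_0)\subseteq\Omega$. This follows by dominated convergence, because the tilt is bounded by $1$. That gives the stated proportionality of the limits.
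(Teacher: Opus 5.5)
Your outline follows the paper's proof step for step: drop the derivative of $\gamma_\phi$ to read off the score, use $\beta_\phi<-1$ to get $\gamma_\phi\to\infty$, then take the pointwise limit of the tilt. You are also right that the first claim only holds up to the stop-gradient and $\partial_1$ conventions. The problem is the step you add to pass from $\ell^\Omega(D(\rvx_t;t))$ to $\ell^\Omega(\rvx_t)$; the paper's proof simply writes $\ell^\Omega(\rvx_t;t)$ in the limit. Your claim that $D(\rvx_t;t)\to\rvx_t$ ``by Tweedie with vanishing noise'' is false exactly where you need it. Tweedie gives $D(\rvx;t)=\rvx+t^2\nabla\log p_t(\rvx)$, but for $\rvx\notin\mathrm{supp}(p_0)$ the correction $t^2\nabla\log p_t(\rvx)$ does not vanish. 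The posterior $p(\rvx_0\mid\rvx_t=\rvx)\propto p_0(\rvx_0)\exp(-\norm{\rvx-\rvx_0}^2/(2t^2))$ concentrates on the points of $\mathrm{supp}(p_0)$ nearest to $\rvx$. So, when that point is unique, $D(\rvx;t)$ converges to it, and it lies in $\Omega$. Every $\rvx\notin\Omega$ is off the support, so there $\ell^\Omega(D(\rvx;t))\to0$ rather than $\ell^\Omega(\rvx)>0$, and $\gamma_\phi\,\ell^\Omega(D)$ is an $\infty\cdot0$ form. For closed convex $\Omega$ it is worse. $D(\rvx;t)$ is an average of points of $\mathrm{supp}(p_0)$, hence lies in $\Omega$ for every $t$, so the tilt is identically $1$ and cannot produce $\1_\Omega$ (e.g.\ $p_0=\delta_0$ gives $D\equiv0$).

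So $D(\rvx;t)\to\rvx$ has to be an assumption on the \emph{learned} denoiser, not a consequence of Tweedie. For instance, with EDM preconditioning $D(\rvx;t)=c_{\mathrm{skip}}(t)\rvx+c_{\mathrm{out}}(t)F(\cdot)$, where $c_{\mathrm{skip}}\to1$ and $c_{\mathrm{out}}\to0$, it holds if the raw network output stays bounded as $t\to0$. With that and continuity of $\ell^\Omega$, your pointwise argument goes through. Alternatively, define the tilt at $\rvx_t$, as the paper's proof effectively does. You also need to fix what $p$ is. In the first claim you read it as the model density with score $s_\phi$, but the normalization step uses $p(\cdot;t)\to p_0$ with $\mathrm{supp}(p_0)\subseteq\Omega$. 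If $p$ is the true marginal, the conclusion is trivial: $p(\rvx;t)\to0$ off $\mathrm{supp}(p_0)$ whatever the tilt does. If $p$ is the model density, positivity of $\lim Z_t$ is an assumption about the model, not a consequence of $\mathrm{supp}(p_0)\subseteq\Omega$. A minor point: your bound on $\beta_\phi$ is automatic, since $t^{\beta_\phi}>t^{-1}$ for $0<t<1$. The lower bound on $\alpha_\phi$ as $t\to0$ is the assumption genuinely needed, and the paper omits it.
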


The discussion above considers a single constraint gradient. For tasks with multiple constraints, we combine their gradients into a single compatible update direction inspired by PCGrad~\citep{yu2020gradient}, then apply the scaling network to this direction to learn a global correction magnitude. We provide the combination procedure and scaling-network architecture in Appendix~\cref{app: scaling-network}.

\color{black}

\subsection{Rollout-based training}
Now we address the training-time misalignment discussed in~\cref{sec: training-sampling mismatch}. We construct training states by rolling out the reverse-time process from pure Gaussian noise. The training-time rollout shares the same noise sampling schedule as inference such that the intermediate states are generated by the fine-tuned network $D_\phi$.

We use $(\rvx_t; t)$ for continuous-time notation and adopt step-indexed notation $(\rvx_i, t_i)$ for the discretized reverse-time rollout. Let the noise schedule be $T = t_N > t_{N-1} > \dots > t_0 = t_{\min} > 0$. Starting from $\tilde\rvx_N\sim \gN(0, t_N^2 \mI)$, we simulate denoising steps backward along this schedule and denote the resulting reverse rollout trajectory by $\{\tilde\rvx_i\}_{i=1}^N$. At each rollout step $i$, we compute the denoised state guidance $\lossgrad(D(\tilde\rvx_i; t_i); t_i)$ using the pretrained denoiser. The fine-tuned denoiser $D_\phi$ is conditioned on this guidance through a lightweight embedding, and the scaling network $\gamma_\phi(\tilde\rvx_i; t_i)$ defined in~\cref{eq: scaling-network} determines its strength at the current rollout state. We also introduce LoRA adapters~\citep{hu2022lora} within $D_\phi$ to provide adaptation capacity while keeping the pretrained backbone weights fixed. This gives the constraint-aware denoiser:
\begin{align}
\label{eq: constraint-aware fine-tune denoiser}
    D_\phi^\Omega(\tilde\rvx_i; t_i) = D_\phi\left(\tilde\rvx_i; t_i, \lossgrad(D(\tilde\rvx_i; t_i); t_i)\right) + \gamma_\phi(\tilde\rvx_i; t_i) \lossgrad(D(\tilde\rvx_i; t_i); t_i).
\end{align}


We use this constraint-aware denoiser $D_\phi^\Omega$ at each rollout step. Thus the model receives the constraint signal at its input while also applying an output-level correction with a learned, state-dependent strength. Training on the resulting rollout states $\{\tilde\rvx_i\}$ under the same discretization schedule as inference exposes the model to off-manifold states produced by its own sampling process. This alignment directly targets the training-time misalignment identified in~\cref{sec: training-sampling mismatch}, enabling the model to correct constraint violations during generation.

\subsection{Training objective and sampling procedure}
We optimize the fine-tuning parameters $\phi$ with an EDM denoising term and a terminal rollout constraint term, where $\kappa$ balances constraint optimization against data fidelity:
\begin{align}
\label{eq: training-objective}
    \gL(\phi) = \gL_{\mathrm{EDM}}(\phi) + \kappa\gL_{\mathrm{rollout}}(\phi),
\end{align}

\paragraph{EDM denoising loss} To preserve data-consistent behavior, we retain the EDM denoising objective for the constraint-augmented denoiser $D_{\phi}^\Omega$:
\begin{align}
    \gL_{\mathrm{EDM}}(\phi) = \meanp{t, \rvx_0, \rvx_t}{\lambda(t)\norm{D_{\phi}^\Omega(\rvx_t; t) - \rvx_0}^2},
\end{align}
where $\rvx_t\sim q(\rvx_t\mid \rvx_0)$ is sampled from the forward process independently of the rollout, and $\lambda(t)$ is the EDM loss weighting. This objective regularizes the model toward data-consistent denoising while allowing deviations required to satisfy the constraints.

\paragraph{Rollout constraint loss} For constraint optimization, constraint loss is applied at the terminal rollout:
\begin{align}
\label{eq: unscaled-rollout-loss}
    \gL_{\mathrm{rollout}}(\phi) = \meanp{p(\tilde\rvx_T), p_{\phi}(\tilde\rvx_0\mid \tilde\rvx_T)}{\ell^\Omega(\tilde\rvx_0)}.
\end{align}
During optimization, the sampled states $\{\tilde\rvx_i\}$ are not updated by gradients, and optimization is only performed with respect to $\phi$ through the fine-tuned denoiser evaluations. For efficiency, we adopt the EDM discretization~\citep{karras2022elucidating}, which achieves high sampling quality with fewer denoising steps. In addition, we employ gradient checkpointing~\citep{chen2016training} to control memory when propagating through rollout trajectories.

Since supervision is applied only at the terminal state, the constraint signal is sparse and becomes sparser as the infraction rate decreases. We therefore adaptively weight the rollout loss:
\begin{align}
    \kappa =  \sg(\overline{\gL_{EDM}}) / \sg(\overline{\gL_{\mathrm{rollout}}} + \epsilon),\quad \epsilon=10^{-5}
\end{align}
where the overline denotes the batch mean, $\sg(\cdot)$ stops gradients through the scaling factors. The adaptive stop-gradient weight keeps the rollout loss on the same scale as the EDM loss, maintaining effective constraint supervision without introducing additional gradient pathways.

We summarize the full training procedure in~\cref{alg: training}. At inference time, sampling follows the same reverse-time rollout, with the same noise discretization schedule used during training. The rollout starts from a Gaussian initialization at the highest noise level and iteratively denoises to obtain the final sample. The constraint-aware denoiser $D_\phi^\Omega$ is used for sampling, while the pretrained denoiser $D$ provides the denoised estimate for constraint gradient computation. Terminal loss evaluation and parameter updates are omitted. 

\subsection{Theoretical analysis}
We formalize the training-time alignment by viewing sampling as a composition of differentiable denoising maps, showing that rollout backpropagation optimizes constraint violations under the same sampling distribution at inference.
\begin{theorem} (Training-time alignment)
\label{theom: training-sampling-alignment}
    Let $\{t_i\}_{i=0}^N$ be the fixed discretization schedule. Define each denoising step as a deterministic function:
    \begin{align}
        \rvx_{i-1} = f_\phi^{(i)}(\rvx_i; t_i, t_{i-1}, \epsilon_i), \quad \epsilon_i\sim \gN(0, \mI),
    \end{align}
    and $f_\phi$ is differentiable in $\rvx_i$ and $\phi$. For an initial noise sample $\rvx_N$, the terminal sample is obtained by composing the update maps: 
    \begin{align}
        F_\phi = f_\phi^{(1)}\circ f_\phi^{(2)} \circ \cdots \circ f_\phi^{(N)}, \quad s.t. \quad\rvx_0^\phi = F_\phi(\rvx_N)\ \text{is the terminal state}.
    \end{align}
    Then the gradient of unscaled rollout objective~\ref{eq: unscaled-rollout-loss} $\gL_{\mathrm{rollout}}(\phi)$  with respect to $\phi$ is:
    \begin{align}
         \nabla_\phi \gL_\mathrm{rollout}(\phi) = \meanp{p(\rvx_N)}{\sum_{i=1}^N \nabla_{\rvx_0^\phi} \ell^\Omega(\rvx_0^\phi) \frac{\partial \rvx_0^\phi}{\partial \rvx_{i-1}^\phi} \frac{\partial f_\phi^{(i)} (\rvx_i)}{\partial \phi}}.
    \end{align}
    This concludes that backpropagating $\ell^\Omega$ through the rollout computes the exact gradient of $\gL_{\mathrm{rollout}}(\phi)$, directly minimizing constraint violations under the model's own sampling distribution.
\end{theorem}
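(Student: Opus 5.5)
The proof applies the reparameterization trick and then the chain rule, i.e.\ the standard backpropagation-through-time argument. Collect the randomness into $\xi = (\rvx_N, \epsilon_N, \dots, \epsilon_1)$. This vector is drawn from a fixed Gaussian law that does not depend on $\phi$. Since each step $f_\phi^{(i)}$ is a deterministic function of $(\rvx_i, \phi, \epsilon_i)$, the terminal state $\rvx_0^\phi = F_\phi(\rvx_N)$ is a deterministic, differentiable function of $(\xi, \phi)$. The rollout objective~\ref{eq: unscaled-rollout-loss} can therefore be rewritten as $\gL_{\mathrm{rollout}}(\phi) = \meanp{\xi}{\ell^\Omega(F_\phi(\xi))}$. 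In this form the sampling distribution no longer carries $\phi$; all dependence on $\phi$ sits inside the integrand. I will write $p(\rvx_N)$ in the final formula as shorthand for the law of $\xi$, as the statement does.

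The first step is to exchange $\nabla_\phi$ and the expectation. I would justify this by dominated convergence. The needed assumptions are that $\ell^\Omega$ and every $f_\phi^{(i)}$ are $C^1$, and that $\nabla_\phi[\ell^\Omega\circ F_\phi]$ is bounded on a neighbourhood of $\phi$ by an integrable function of $\xi$. Polynomial growth of the derivatives is enough, because Gaussians have all moments. I expect this to be the only delicate point. $\ell^\Omega$ is typically only almost-everywhere differentiable, for example hinge-type violations, so I would state the exchange under an explicit integrable-envelope assumption rather than derive it. Since the statement already assumes differentiability, I would treat $\ell^\Omega$ as differentiable, or note that a.e.\ differentiability plus a Lipschitz bound suffices.

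The second step is to differentiate the composition pathwise, which is the core of the argument. $\phi$ enters $F_\phi$ once in each of the $N$ maps. Define the forward states $\rvx_{i-1}^\phi = f_\phi^{(i)}(\rvx_i^\phi)$ with $\rvx_N^\phi = \rvx_N$, and the tail maps $G_{i} = f_\phi^{(1)}\circ\cdots\circ f_\phi^{(i)}$, so that $\rvx_0^\phi = G_{i-1}(\rvx_{i-1}^\phi)$. Apply the multivariable chain rule by varying $\phi$ only in the $i$-th factor and summing over $i$. This is the product-rule decomposition of the total derivative: telescope $F_{\phi'}-F_\phi$ by replacing one factor at a time and take the limit. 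It gives
\begin{align}
\frac{d\rvx_0^\phi}{d\phi} = \sum_{i=1}^N \frac{\partial \rvx_0^\phi}{\partial \rvx_{i-1}^\phi}\,\frac{\partial f_\phi^{(i)}(\rvx_i^\phi)}{\partial \phi}.
\end{align}
Here $\partial \rvx_0^\phi/\partial \rvx_{i-1}^\phi$ is the Jacobian of $G_{i-1}$. By the chain rule it equals the ordered product of the step Jacobians $\partial f^{(j)}_\phi/\partial \rvx_j$ for $j=1,\dots,i-1$, and for $i=1$ it is the identity. The term $\partial f^{(i)}_\phi/\partial\phi$ is the explicit partial derivative with $\rvx_i$ held fixed. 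Some care is needed here. The statement says the sampled states are not updated, but the stated formula keeps the Jacobians $\partial \rvx_0/\partial\rvx_{i-1}$. I would make clear that these Jacobians account for how upstream parameter changes propagate. Every dependence is counted exactly once: directly in $\partial f^{(i)}/\partial\phi$, and indirectly through $\rvx_i^\phi$ via the earlier summands.

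The final step is to left-multiply by $\nabla_{\rvx_0^\phi}\ell^\Omega(\rvx_0^\phi)$, written as a row vector, and take the expectation over $\xi$. This gives exactly the displayed formula. Since reverse-mode automatic differentiation through the unrolled rollout computes precisely this vector–Jacobian accumulation, the backpropagated quantity is an unbiased single-sample estimate of $\nabla_\phi\gL_{\mathrm{rollout}}$. Moreover, $\xi$ is drawn from the same noise law and schedule used at inference, so the pushforward of $\xi$ under $F_\phi$ is exactly the model's sampling distribution $p_\phi(\rvx_0)$. Hence the objective being minimized is the expected violation under that distribution, which is the alignment claim.
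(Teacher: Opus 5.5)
Your proposal is correct and follows essentially the same argument as the paper. Both fix the noise so the rollout is a deterministic differentiable function of $\phi$, interchange $\nabla_\phi$ with the expectation, and accumulate each step's explicit parameter derivative weighted by the downstream Jacobian $\partial \rvx_0^\phi/\partial \rvx_{i-1}^\phi$. The paper does this by unrolling the recursion $\partial \rvx_i^\phi/\partial\phi = (\partial f_\phi^{(i+1)}/\partial \rvx_{i+1}^\phi)\,\partial \rvx_{i+1}^\phi/\partial\phi + \partial f_\phi^{(i+1)}/\partial\phi$ from $\partial \rvx_N/\partial\phi = 0$, which is the same computation as your one-factor-at-a-time decomposition.

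Your version is somewhat more careful. It puts the $\epsilon_i$ explicitly into the expectation and states an integrable-envelope condition for the interchange, which the paper merely asserts. One caveat: your aside that a.e.\ differentiability plus a Lipschitz bound suffices also needs the law of $\rvx_0^\phi$ not to charge the set where $\ell^\Omega$ is non-differentiable.
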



\begin{remark}
    (Full training objective) The full objective in~\cref{eq: training-objective} augments the rollout loss with $\gL_{\mathrm{EDM}}(\phi)$. Since $\gL_{\mathrm{EDM}}(\phi)$ is evaluated on clean data samples independently of the rollout trajectory, its gradient is simply an additive term outside the rollout computation graph. Thus,~\cref{theom: training-sampling-alignment} still applies to $\gL(\phi)$, with $\nabla_\phi \gL_{\mathrm{EDM}}(\phi)$ added independently.
\end{remark}

\section{Experiments}
We evaluate whether~\method{} improves constraint enforcement while preserving generative fidelity under the same experimental conditions as prior work. To enable direct comparison, we adopt the experimental setup of~\citep{liang2026improved} and focus on isolating the effect of~\method{}. Specifically, we assess whether the proposed design reduces infeasible generation through rollout-based training and improves stability across noise levels.

\subsection{Bouncing balls}
\label{sec: bouncing-ball}
We begin with a synthetic bouncing ball experiment, where multiple balls move inside a closed box with elastic ball-ball and ball-wall collisions. This setting enables controlled evaluation of both physical consistency and constraint satisfaction. Following~\citep{liang2026improved}, we generate simulated trajectories with randomized initial positions and velocities using a physics-based simulator~\citep{gan2015deep}. The dataset contains 100,000 scenarios, each with 10 balls over 100 timesteps. The goal is to model the trajectory distribution while enforcing two constraints: (i) boundary constraints, keeping all balls inside the box, (ii) overlap constraints, preventing overlap between balls.

We compare against a diverse set of baselines spanning unconstrained, sampling-time correction and fine-tuning approaches. EDM~\citep{karras2022elucidating} serves as the unconstrained pretrained diffusion model. MBM~\citep{naderiparizi2025constrained} and MBM++~\citep{liang2026improved} incorporate constraint gradients into the score function with scaling and guidance functions in~\cref{tab: scaling-table} to enforce constraint satisfaction at the clean sample. MPGD~\citep{hemanifold} is a sampling-time correction method that we adapt with the same diverging scaling schedule as MBM++, omitting decoder projection. PIDM~\citep{bastek2024physics} adds denoised-state constraint losses to the DSM objective. Since the original formulation does not specify whether pretraining is used, we evaluate both training-from-scratch and fine-tuning variants, denoted PIDM and PIDM (FT). We also include rollout-based fine-tuning baselines, Adjoint Matching (AM)~\citep{domingo2024adjoint} and DPOK~\citep{fan2023dpok}, which optimize rollout-level objectives while regularizing deviation from the unconstrained pretrained model. All methods use the same sampling configuration with 50 reverse-time steps.

We evaluate constraint satisfaction using overlap and boundary violation rates, and assess distributional fidelity using reweighted ELBO (r-ELBO). To capture physical motion consistency, we introduce three additional metrics. We measure maximum frame-to-frame displacement (F2F) to quantify temporal inconsistency and detect jittering artifacts, maximum contact distance (MCD) to capture worst-case proximity during ball-ball and ball-wall interactions, and maximum energy deviation (MED) to measure violations of kinetic energy conservation over time. We also compute these metrics on ground-truth trajectories for reference. Full definitions of these metrics are provided in the appendix~\cref{app: def-metrics}.


\cref{tab: bb-table} reports results on the bouncing ball experiment. Baselines reveal a trade-off between feasibility, physical realism, and fidelity. MPGD enforces constraints perfectly, but its high F2F and MED indicate abrupt frame-to-frame corrections and jittery motion, as visualized in~\cref{fig: bouncing-balls}. MBM and MBM++ reduce violations but remain unstable or lose physical plausibility under few-step sampling. PIDM and PIDM (FT) preserve plausible dynamics but fail to enforce hard constraints, while DPOK and AM either degrade fidelity or only partially satisfy constraints. Our method avoids these trade-offs: it achieves near-zero boundary and overlap violations while keeping F2F and MED close to the EDM baseline, indicating smooth constraint-aware rollouts rather than abrupt corrections. Its competitive fidelity shows that retaining EDM denoising preserves the training data distribution, yielding the strongest overall balance across metrics. To better understand these gains, we further provide ablation studies on the effect of rollout-based training and the trainable components in~\cref{app: bb-ablation}.

\begin{table}[!t]
    \centering
    \small
    \setlength{\tabcolsep}{4pt}
    \caption{Bouncing ball experiment results. We report constraint violation rates, distributional fidelity (r-ELBO), and physical consistency metrics (F2F, MCD, MED). Ground truth values are provided for reference.}
    \begin{tabular}{lcccccc}
    \toprule
     & \multicolumn{2}{c}{Constraint (\%) $\downarrow$} & Fidelity $\uparrow$ & \multicolumn{3}{c}{Physical plausibility ($\times 10^{-1}$) $\downarrow$} \\
    \cmidrule(lr){2-3} \cmidrule(lr){5-7}
    Method & Boundary rate & Overlap rate & r-ELBO ($\times 10^{-2}$) & F2F & MCD & MED \\
    \midrule
    EDM~\citep{karras2022elucidating} & $6.01\pm 0.02$  & $39.82\pm 0.21$  & $-21.7\pm 0.1$  & $3.1\pm 0.0$ & $1.0\pm 0.0$ & $0.4\pm 0.0$ \\
    MPGD~\citep{hemanifold} & $0.00\pm 0.00$  & $0.00\pm 0.00$ & $-27.0\pm 0.1$ & $8.6\pm 0.4$  & $0.8\pm 0.0$ & $29.7\pm 5.4$ \\
    MBM~\citep{naderiparizi2025constrained} & $0.36\pm 0.00$  & $0.16\pm 0.00$  & $-22.2\pm 0.1$  & $59.7\pm 0.4$  & $1.7\pm 0.0$  & $510.9\pm 11.5$  \\
    MBM++~\citep{liang2026improved} & $0.00\pm 0.00$  & $0.00\pm 0.00$ & $-22.8\pm 0.1$ & $10.3\pm 0.6$ & $0.8\pm 0.0$  & $41.1\pm 5.9$  \\
    PIDM~\citep{bastek2024physics} & $11.07\pm 0.15$  & $40.91\pm 0.19$  & $-26.9\pm 0.1$ & $2.9\pm 0.0$ & $1.1\pm 0.0$ &  $0.3\pm 0.0$\\
    PIDM (FT)~\citep{bastek2024physics} & $6.40\pm 0.04$ & $29.15\pm 0.10$  & $-21.6\pm 0.1$ & $3.0\pm 0.0$ & $0.8\pm 0.0$ &  $0.3\pm 0.0$\\
    AM~\citep{domingo2024adjoint} & $0.49\pm 0.01$ & $4.64\pm 0.06$ & $-39.4\pm 0.2$  & $3.4\pm 0.0$ & $3.8\pm 0.0$ & $0.5\pm 0.0$ \\
    DPOK~\citep{fan2023dpok} & $2.04\pm 0.05$ & $1.69\pm 0.03$ & $-53.5\pm 0.1$ & $2.8\pm 0.0$ & $4.3\pm 0.0$ & $0.3\pm 0.0$ \\
    \midrule
    \method{} (\textbf{Ours}) & $0.01\pm 0.00$  & $0.01\pm 0.00$  & $-22.9\pm 0.0$  & $3.4\pm 0.1$  & $1.2\pm 0.0$ & $0.6\pm 0.0$\\
    \midrule
    Ground truth & $0.00\pm 0.00$  & $0.00\pm 0.00$ & --  & $2.8\pm 0.0$  & $0.4\pm 0.0$ &  $0.3\pm 0.0$\\
    \bottomrule
    \end{tabular}
    \label{tab:placeholder}
\end{table}

\subsection{Traffic scene trajectory prediction}
We next evaluate real-world traffic scene trajectory prediction on the Interaction dataset~\citep{zhan2019interaction}, which contains vehicle trajectories from 11 traffic scenarios. Given a short history of observed states, the model predicts future positions and headings for all agents in the scene. This setting tests multi-agent driving behavior modeling under road geometry and vehicle interactions, including staying within drivable areas and avoiding collisions. Following prior work~\citep{lioutascritic, liang2026improved}, we observe one second of motion and predict a three-second future horizon. 

We compare against the same classes of baselines as in the bouncing ball experiment, excluding MBM and AM, and additionally include CriticSMC~\citep{lioutascritic}. Our bouncing ball results show that MBM underperforms MBM++, and we found that AM converges slowly and incurs high computational cost in this large-scale trajectory prediction setting. 
We use DJINN~\citep{niedoba2024diffusion}, an EDM-based diffusion model for joint trajectory prediction, as the pretrained backbone, fine-tuning it with the corresponding objective for baselines that require training and for our method. We report our main variant in the main text and defer additional rollout/LoRA variants to~\cref{app: method-variants}.


For evaluation, we generate six trajectory samples per scene and report metrics covering constraint satisfaction, trajectory match, physical plausibility and diversity. Constraint satisfaction is measured by offroad and collision rates. Trajectory match is evaluated using ego minADE$_6$/minFDE$_6$, where scene metrics minSADE$_6$/minSFDE$_6$ are computed over all vehicles. Physical plausibility is measured by maximum scene frame-to-frame displacement (SF2F) and maximum scene lateral velocity (SMLV), which capture abrupt motion and unrealistic lateral shift. Diversity is measured by maximum ego final distance (MFD$_6$) across six samples.


\cref{tab: scene-table} reports results on the INTERACTION DR\_DEU\_Merging\_MT scenario. Existing methods remain limited in different ways: EDM and PIDM (FT) better preserve diversity and physical plausibility but retain substantial violations, while MPGD and MBM++ reduce violations but still incur non-negligible offroad errors. In contrast, our~\method{} achieves the strongest feasibility, substantially reducing offroad violations and eliminating collisions while maintaining good trajectory match. Its physical plausibility metrics remain close to EDM, indicating that constraint satisfaction is achieved without severe motion distortion. Overall,~\method{} gives the best balance between feasibility and quality under few-step generation. Additional experiments and visualizations on different locations are reported in~\cref{app: dm-more-results}.

\begin{table}[!t]
    \centering
    \small
    \setlength{\tabcolsep}{4.5pt}
    \caption{
    Traffic scene trajectory prediction results on the INTERACTION DR\_DEU\_Merging\_MT scenario. 
    Ground truth values are provided for reference.
    }
    \label{tab: scene-table}
    \begin{tabular}{lccccccccc}
        \toprule
        & \multicolumn{2}{c}{Constraint (\%) $\downarrow$}
        & \multicolumn{4}{c}{Min traj. match $\downarrow$}
        & \multicolumn{2}{c}{Phys. plaus. $\downarrow$}
        & Diversity $\uparrow$ \\
        \cmidrule(lr){2-3}
        \cmidrule(lr){4-7}
        \cmidrule(lr){8-9}
        \cmidrule(lr){10-10}
        Method
        & Offroad & Collision & ADE$_6$ & FDE$_6$ & SADE$_6$ & SFDE$_6$ & SF2F$_6$ & SMLV$_6$ & MFD$_6$ \\
        \midrule
        EDM~\citep{karras2022elucidating} & $ 9.03$ & $0.42$ & $0.20$ & $0.50$ & $0.27$ & $0.77$ & $1.11$ & $0.41$ & $2.75$ \\
        MPGD~\citep{hemanifold} & $3.78$ & $0.29$ & $0.19$ & $0.46$ & $0.23$ & $0.67$ & $1.04$ & $0.34$ & $1.92$ \\
        MBM++~\citep{liang2026improved} & $3.73$ & $0.29$ & $0.17$& $0.45$ & $0.22$ & $0.66$ & $1.01$ & $0.31$ & $1.88$ \\
        PIDM (FT)~\citep{bastek2024physics} & $8.52$ & $0.34$ & $0.18$ & $0.50$ & $0.24$ & $0.76$ & $0.96$ & $0.26$ & $2.67$ \\
        DPOK~\citep{fan2023dpok} & $5.45$ & $0.03$ & $3.45$ & $9.94$ & $3.34$ & $9.73$ & $0.88$ & $0.40$  & $1.90$ \\
        CriticSMC~\citep{lioutascritic} & -- & $1.03$ & $0.35$ & -- & -- & -- & -- & -- & $2.37$ \\
        \midrule
        \method{} (\textbf{Ours}) & $0.30$ & $0.00$ & $0.18$ & $0.45$ & $0.23$ & $0.66$ & $1.07$ & $0.43$ & $1.91$ \\
        \midrule
        Ground truth & $0.00$ & $0.00$ & -- & -- & -- & -- & $0.83$ & $0.02$ & -- \\
        \bottomrule
    \end{tabular}
\end{table}

\section{Related Work}
\subsection{Diffusion models for constrained generation}
Constrained diffusion has been approached by modifying either the forward or reverse process.~\citet{liu2023learning} employ Doob's h-transform~\citep{rogers2000diffusions} to construct a drift that confines the diffusion process to the constraint domain at training time. Reflected diffusion model~\citep{lou2023reflected} and constrained diffusion framework proposed by~\citet{fishman2023diffusion} alter the forward noising process by reflected Brownian motion or log-barrier metrics to confine samples to feasible regions. Projected diffusion~\citep{christopher2024projected} instead enforces constraints at inference time by projecting noisy samples onto the constraint set at each denoising step, with extensions to multi-robot motion planning~\citep{liang2025simultaneous}. However, repeated projection of noisy intermediate states can distort the denoising trajectory. More importantly, these methods all require constraints to be analytically formulated upfront, which makes them less applicable when feasibility is observed through violations but not available in closed form, as in our setting.

\subsection{Reward-driven fine-tuning and controlled diffusion}
Fine-tuning diffusion models with reward or energy objectives is closely related to our training formulation. We defer a detailed comparison with AM and DPOK to~\cref{para: stoc-control}. A related line of work views diffusion sampling as stochastic optimal control, where methods such as AM, Adjoint sampling (AS)~\citep{havens2025adjoint} and Adjoint Schrodinger bridge Sampler (ASBS)~\citep{liu2025adjoint} learn reverse-time controls that drive samples toward reward- or energy-defined targets. ASBS further extends AS to arbitrary source distributions. However, both methods are designed to optimize reward or energy objective directly, rather than leverage a pretrained model as the base distribution. In contrast, our goal is constrained generation, which fine-tunes a pretrained denoiser through its sampling rollout to improve feasibility while preserving the learned data distribution.

\subsection{Diffusion models for autonomous driving}
Several recent works have explored diffusion models for autonomous driving planning. DJINN~\citep{niedoba2024diffusion} jointly diffuses trajectories of all agents for realistic scenario generation, but is primarily designed for simulation rather than planning, and does not focus on constraint satisfaction. Diffusion-ES~\citep{yang2024diffusion} combines gradient-free evolutionary search with a diffusion trajectory model to optimize non-differentiable test-time reward objectives, but operates on the ego vehicle only without modeling interactions with other agents. Diffusion Planner~\citep{zheng2025diffusion} jointly models ego planning and agent prediction with a unified diffusion architecture, and applies classifier guidance at inference time to achieve constraint satisfaction, but may encounter the sampling-time correction limitation discussed in~\cref{sec: limitation-of-tf}.

\section{Discussion and future work}
Our method~\method{} provides a rollout-based fine-tuning framework for constrained diffusion generation that addresses training-time misalignment. By minimizing constraint violations on terminal rollout samples, the model learns corrections under its own sampling dynamics rather than relying on post hoc guidance. Both theory and experiments show that our method improves constraint satisfaction while preserving distributional fidelity across bouncing ball simulation and autonomous driving tasks. While effective, the current inference procedure still involves constraint-gradient computation during the reverse rollout, adding modest computational overhead. A promising direction is to improve efficiency through distillation, where a standalone denoiser learns to approximate the constraint-aware updates directly. Another important direction is to extend rollout-based fine-tuning to higher-dimensional visual domains and broader application settings where feasibility is specified by task-specific violation signals.

\section*{Acknowledgment}
We acknowledge the support of the Natural Sciences and Engineering Research Council of Canada
(NSERC), the Alberta Machine Intelligence Institute (Amii) through the Canada CIFAR AI Chairs
Program, Inverted AI, Mitacs, and Google. This research was enabled in part by technical support
and computational resources provided by the Digital Research Alliance of Canada (alliancecan.ca),
the Advanced Research Computing at the University of British Columbia (arc.ubc.ca), and Amazon
Web Services.


\bibliography{refs}
\bibliographystyle{plainnat}

\newpage
\appendix

\section{Algorithm}
\begin{algorithm}
    \caption{Rollout-Based Constrained Fine-Tuning}
    \label{alg: training}
    \begin{algorithmic}[1]
    \Require Pretrained denoiser $D$, fine-tuned denoiser $D_\phi$ initialized from $D$, scaling network $\gamma_\phi$, constraint loss $\ell^\Omega$, noise schedule $\{t_i\}_{i=0}^N$, training data $\gD$, division stabilizer $\epsilon=10^{-5}$.
    \While{not converged}
        \State \textbf{// Rollout}
        \State Sample $\rvx_T \sim \gN(0, t_N^2\mI)$
        \For{$i = T,\ldots,1$}
            \State Compute constraint-aware fine-tuned denoiser $D_\phi^\Omega(\tilde\rvx_i; t_i)$ with~\cref{eq: constraint-aware fine-tune denoiser}.
            \State Update the rollout state to $\tilde\rvx_{i-1}$.
        \EndFor

        \State \textbf{// Losses}
        \State $\gL_{\mathrm{rollout}} \leftarrow \ell^\Omega(\tilde\rvx_0)$ \Comment{Terminal state only}
        \State Sample $\rvx_0 \sim \gD$, $\rvx_t \sim q(\rvx_t\mid \rvx_0)$
        \State $\gL_{\mathrm{EDM}} \leftarrow
        \lambda(t)\norm{D_{\phi}^\Omega(\rvx_t;t)-\rvx_0}^2$

        \State \textbf{// Dynamic Scaling and Update}
        \State $\kappa \leftarrow
        \frac{\sg(\overline{\gL_{\mathrm{EDM}}})}
        {\sg(\overline{\gL_{\mathrm{rollout}}})+\epsilon}$
        \State $\gL \leftarrow \gL_{\mathrm{EDM}} + \kappa \gL_{\mathrm{rollout}}$
        \State Update $\phi \leftarrow \phi - \eta\nabla_\phi \gL$ \Comment{Gradients flow only through $\phi$}
    \EndWhile
    \end{algorithmic}
\end{algorithm}

\paragraph{Rollout-based fine-tuning and stochastic optimal control} 
\label{para: stoc-control}
Our rollout-based training admits a natural stochastic optimal control (SOC) interpretation: the reverse denoising process induced by the pretrained denoiser $D(\rvx_t; t)$ and constraint-aware fine-tuned denoiser $D_{\phi}^\Omega$ correspond to the base and controlled stochastic process respectively, and the rollout constraint loss acts as a terminal cost. This connects our approach to Adjoint Matching (AM)~\citep{domingo2024adjoint}, which provides a formal SOC grounding for reward fine-tuning of generative models via rollout-based optimization. It is also related to DPOK~\citep{fan2023dpok}, which frames the denoising chain as a Markov decision process (MDP) and applies policy gradient with KL regularization. Both AM and DPOK approaches regularize the fine-tuned model by minimizing its KL divergence from the pretrained model, treating the pretrained model as the reference. AM additionally incurs significant computational overhead beyond rollout simulation, as each gradient update requires integrating the lean adjoint ODE backward through the trajectory. However, in our setting, the pretrained denoiser can produce samples that violate the constraints, so enforcing proximity to it can perpetuate these errors. Instead, we rely on initialization from the pretrained model, anchor to the data distribution directly via the denoising objective, and combine it with rollout-time constraint losses to drive the sampler toward the feasible manifold, preserving consistency with the data rather than with the pretrained model's potentially infeasible outputs.

\paragraph{Gradient embeddings}
Following MBM++, the constraint gradient is embedded with a lightweight MLP and added once to $\rvx_t$ as input to $D_{\phi}(\rvx_t; t)$. Since this modifies the model input distribution seen by the pretrained backbone, we introduce LoRA adapters~\citep{hu2022lora} in the attention layers of $D_{\phi}$ to provide the necessary adaptation capacity. 

\begin{table}[h]
\centering
\caption{A summary table of scaling and guidance functions for various methods.}
\begin{tabular}{c|ccc}
\toprule
 & MBM \citep{naderiparizi2025constrained} & MBM++ \citep{liang2026improved} & \method{} (ours)\\
\midrule
$\gamma(\rvx_t; t)$ & $c / t^2$ & $c / t^2$ & $\alpha_\phi(\rvx_t; t) \cdot t^{\beta_\phi(\rvx_t; t)}$\\
\midrule
$\guidance(\rvx_t; t)$ & $\lossgrad(\rvx_t; t)$  & $\lossgrad(D(\rvx_t; t); t)$ & $\lossgrad(D(\rvx_t; t); t)$ \\
\bottomrule
\end{tabular}
\label{tab: scaling-table}
\end{table}

\section{Derivation of proofs}
\subsection{Proof of~\Cref{proposition: convergence}}
\begin{proof}
    Given the formulation of $p^\Omega(\rvx_t; t)$, take gradient $\nabla_{\rvx_t} \log p^\Omega(\rvx_t; t)$ and treat $\gamma_t(\rvx_t; t)$ as constant with respect to $\rvx_t$. This yields the score decomposition in~\Cref{definition: learned-scaling}. For the convergence of denoising terminal state to the constrained region, since $\beta_\phi < -1$,  $t^{\beta_\phi} \to \infty$ as $t\to 0$. Given $\alpha_\phi > 0$, we have $\gamma_\phi(\rvx_t; t)\to \infty$ as $t\to 0$. Then
    \begin{align}
        \lim_{t\to 0} \exp(-\gamma_\phi(\rvx_t; t) \ell^\Omega(\rvx_t; t)) = \left\{
        \begin{array}{cc}
            1 & \mathrm{if}\ \ell^\Omega(\rvx_t; t) = 0 \\
            0 & \mathrm{if}\ \ell^\Omega(\rvx_t; t) > 0 
        \end{array}
        \right.
    \end{align}
    Therefore, 
    \begin{align}
        \lim_{t\to 0} p^\Omega(\rvx_t; t) \propto \lim_{t\to 0} p(\rvx_t; t) \1_\Omega(\rvx_t),
    \end{align}
    meaning that as $t\to 0$, the distribution concentrates on the constrained region $\Omega$, ensuring that the terminal denoised sample $\rvx_0\sim p(\rvx_0, 0)$ lies within $\Omega$.
\end{proof}
\subsection{Proof of~\cref{theom: training-sampling-alignment}}
\begin{proof}
    Since each $\epsilon_i$ is independent of $\phi$, the trajectory $\{\rvx_i\}_{i=0}^N$ is a deterministic differentiable function of $\phi$. Because each $f_\phi^{(i)}$ is differentiable, then the composed map $F_\phi$ is differentiable. By chain rule,
    \begin{align}
        \nabla_\phi \ell^\Omega(F_\phi(\rvx_N)) = \nabla_{\rvx_0^\phi} \ell^\Omega(\rvx_0^\phi) \frac{\partial F_\phi(\rvx_N)}{\partial \phi}.
    \end{align}
    Then the gradient of $\gL_{\mathrm{rollout}}(\phi)$ with respect to $\phi$ is
    \begin{align}
        \nabla_\phi \gL_{\mathrm{rollout}}(\phi) &= \nabla_\phi \meanp{p(\rvx_N)}{\ell^\Omega(F_\phi(\rvx_N))} = \meanp{p(\rvx_N)}{\nabla_\phi \ell^\Omega(F_\phi(\rvx_N))} \\
        &= \meanp{p(\rvx_N)}{\nabla_{\rvx_0^\phi} \ell^\Omega(\rvx_0^\phi) \frac{\partial F_\phi(\rvx_N)}{\partial \phi}}
    \end{align}
    by interchanging differentiation with expectation.

    Now we expand $\frac{\partial F_\phi(\rvx_N)}{\partial \phi}$ by applying the chain rule through the individual step $f_\phi^{(i)}$:
    \begin{align}
        \frac{\partial F_\phi(\rvx_N)}{\partial \phi} = \sum_{i=1}^N \frac{\partial \rvx_0^\phi}{\partial \rvx_i^\phi}\cdot \frac{\partial \rvx_i^\phi}{\partial \phi},
    \end{align}
    where $\frac{\partial \rvx_0^\phi}{\partial \rvx_i^\phi}$ is the Jacobian of the terminal state $\rvx_0^\phi$ with respect to the intermediate state $\rvx_i^\phi$. The second term $\frac{\partial \rvx_i^\phi}{\partial_\phi}$, where $\rvx_i^\phi = f_\phi^{(i+1)}(\rvx_{i+1})$, can be further decomposed to:
    \begin{align}
        \frac{\partial \rvx_i^\phi}{\partial \phi} = \frac{\partial f_\phi^{(i+1)}(\rvx_{i+1})}{\partial \rvx_{i+1}^\phi}\cdot \frac{\partial \rvx_{i+1}^\phi}{\partial \phi} + \frac{\partial f_\phi^{(i+1)}(\rvx_{i+1})}{\partial\phi}.
    \end{align}
    Given this recursive pattern with base case $\frac{\partial \rvx_N}{\partial \phi} = 0$ as $\rvx_N$ is independent of $\phi$, we have
    \begin{align}
            \frac{\partial \rvx_{N-1}^\phi}{\partial \phi} &= \frac{\partial f_\phi^{(N)}(\rvx_N)}{\partial \rvx_N^\phi}\cdot \frac{\partial \rvx_N^\phi}{\partial \phi} + \frac{\partial f_\phi^{(N)}(\rvx_N)}{\partial \phi} = \frac{\partial f_\phi^{(N)}(\rvx_N)}{\partial \phi}, \\
            \frac{\partial \rvx_{N-2}^\phi}{\partial \phi} 
            &= \frac{\partial f_\phi^{(N-1)}(\rvx_{N-1})}{\partial \rvx_{N-1}^\phi}\cdot \frac{\partial \rvx_{N-1}^\phi}{\partial\phi} + \frac{\partial f_\phi^{(N-1)}(\rvx_{N-1})}{\partial \phi} \\
            &=  \frac{\partial f_\phi^{(N-1)}(\rvx_{N-1})}{\partial \rvx_{N-1}^\phi} \cdot \frac{\partial f_\phi^{(N)}(\rvx_N)}{\partial \phi} + \frac{\partial f_\phi^{(N-1)}(\rvx_{N-1})}{\partial \phi} \\
            &= \frac{\partial \rvx_{N-2}^\phi}{\partial \rvx_{N-1}^\phi} \cdot \frac{\partial f_\phi^{(N)}(\rvx_N)}{\partial \phi} + \frac{\partial f_\phi^{(N-1)}(\rvx_{N-1})}{\partial \phi}.
    \end{align}
    This gives a general form for $i \in \{0, 1, \cdots, N-1\}$: 
    \begin{align}
        \frac{\partial \rvx_i^\phi}{\partial \phi} &= \sum_{j=i+1}^N \frac{\partial \rvx_i^\phi}{\partial \rvx_{j-1}^\phi}\cdot \frac{\partial f_\phi^{(j)}(\rvx_j)}{\partial \phi}.
    \end{align}
    Then
    \begin{align}
        \frac{\partial F_\phi(\rvx_N)}{\partial \phi} = \frac{\partial \rvx_0^\phi}{\partial \phi} =  \sum_{i=1}^N \frac{\partial \rvx_0^\phi}{\partial \rvx_{i-1}^\phi} \cdot \frac{\partial f_\phi^{(i)}(\rvx_{i})}{\partial \phi}.
    \end{align}
    Therefore,
    \begin{align}
        \nabla_\phi \gL_\mathrm{rollout}(\phi) = \meanp{p(\rvx_N)}{\sum_{i=1}^N \nabla_{\rvx_0^\phi} \ell^\Omega(\rvx_0^\phi) \frac{\partial \rvx_0^\phi}{\partial \rvx_{i-1}^\phi} \frac{\partial f_\phi^{(i)} (\rvx_i)}{\partial \phi}}.
    \end{align}
\end{proof}

\begin{corollary} 
\label[corollary]{corollary: unbiasedness}
    (Unbiased gradient estimator)
    Let $\rvx_N$ be a single initial noise sample and let $\{\rvx_i^\phi\}_{i=0}^{N-1}$ be the trajectory by rolling out $\{f_\phi^{(i)}\}_{i=1}^N$. Then the single sample estimator
    \begin{align}
        h(\phi) = \sum_{i=1}^N \nabla_{\rvx_0^\phi} \ell^\Omega(\rvx_0^\phi) \frac{\partial \rvx_0^\phi}{\partial \rvx_{i-1}^\phi} \frac{\partial f_\phi^{(i)} (\rvx_i)}{\partial \phi}
    \end{align}
    is an unbiased estimator of $\nabla_\phi\gL_{\mathrm{rollout}}(\phi)$.
\end{corollary}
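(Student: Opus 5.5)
The plan is to reduce the corollary to \cref{theom: training-sampling-alignment}. Write $\xi = (\rvx_N, \epsilon_N, \dots, \epsilon_1)$ for all the randomness in a rollout. By assumption $\xi$ has a distribution that does not depend on $\phi$. Conditional on $\xi$, the trajectory $\{\rvx_i^\phi\}$ is a deterministic differentiable function of $\phi$, so $h(\phi)$ is well defined pathwise.

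The first step is to show that $h(\phi) = \nabla_\phi \ell^\Omega(F_\phi(\rvx_N))$ for each fixed realization of $\xi$. This is exactly the chain-rule identity established inside the proof of \cref{theom: training-sampling-alignment}. It expands $\partial \rvx_0^\phi/\partial\phi$ via the recursion
\begin{align}
\frac{\partial \rvx_i^\phi}{\partial\phi} = \frac{\partial f^{(i+1)}_\phi}{\partial \rvx_{i+1}}\frac{\partial \rvx_{i+1}^\phi}{\partial\phi} + \frac{\partial f^{(i+1)}_\phi}{\partial\phi},
\end{align}
with base case $\partial\rvx_N/\partial\phi = 0$. Unrolling this recursion gives the sum over $i$ of $\frac{\partial \rvx_0^\phi}{\partial \rvx_{i-1}^\phi}\frac{\partial f^{(i)}_\phi}{\partial\phi}$. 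Left-multiplying by $\nabla_{\rvx_0^\phi}\ell^\Omega(\rvx_0^\phi)$ then yields $h(\phi)$. No expectation is involved at this stage, so the identity holds sample by sample.

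The second step is to take expectations over $\xi$. Since the law of $\xi$ is $\phi$-independent, the theorem gives
\begin{align}
\meanp{\xi}{h(\phi)} = \meanp{\xi}{\nabla_\phi \ell^\Omega(F_\phi(\rvx_N))} = \nabla_\phi \meanp{\xi}{\ell^\Omega(F_\phi(\rvx_N))} = \nabla_\phi \gL_{\mathrm{rollout}}(\phi).
\end{align}
This is precisely unbiasedness.

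The main obstacle is the interchange of $\nabla_\phi$ and $\mathbb{E}$, which the theorem's proof uses without justification. I would justify it with the dominated convergence theorem (the Leibniz rule). It suffices that $\phi \mapsto \ell^\Omega(F_\phi(\rvx_N))$ is differentiable for almost every $\xi$. It also suffices that, on a neighborhood of $\phi$, $\|h(\phi')\|$ is bounded by an integrable function $g(\xi)$. A natural sufficient condition is that $\ell^\Omega$ is $C^1$ with gradient of polynomial growth, and that each $f^{(i)}_\phi$ has Jacobians in $\rvx$ and $\phi$ bounded by polynomials in $\|\rvx\|$, locally uniformly in $\phi$. Gaussian moments of all orders then give integrability via the mean value theorem.

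One caveat needs flagging. Violation losses such as hinge penalties are only almost-everywhere differentiable. For these I would assume they are locally Lipschitz and that the set of $\xi$ where the terminal state hits a nondifferentiability has measure zero. I would state these regularity assumptions explicitly, as the paper implicitly does.
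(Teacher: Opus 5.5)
Your proposal is correct and follows the same route as the paper: you identify $h(\phi)$ pathwise with $\nabla_\phi \ell^\Omega(F_\phi(\rvx_N))$ via the chain-rule unrolling from \cref{theom: training-sampling-alignment}, and then interchange $\nabla_\phi$ with the $\phi$-independent expectation. Your dominated-convergence justification of that interchange, and your inclusion of the step noises $\epsilon_i$ in the averaged randomness, make explicit what the paper's one-line proof takes for granted; the paper averages only over $p(\rvx_N)$.
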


\begin{proof}
This is an immediate conclusion from~\cref{theom: training-sampling-alignment}. Since $h(\phi)$ is a fixed realization, this result is obtained by taking expectation over $p(\rvx_N)$ and interchanging expectation and differentiation, which is,
\begin{align}
        \nabla_\phi \gL_{\mathrm{rollout}}(\phi) = \meanp{p(\rvx_N)}{h(\phi)}.
\end{align}
\end{proof}

\section{Multiple constraint gradients and scaling network architecture}
\label{app: scaling-network}

To handle potential conflicts between multiple gradients, we apply a symmetric gradient projection inspired by PCGrad~\citep{yu2020gradient}. Let $\{\rvg_j\}_{j=1}^J$ denote the gradients associated with $J$ constraint losses. For each pair of constraints $(\rvg_j, \rvg_k)$, if $\rvg_j^\top \rvg_k < 0$, we remove the conflicting component from both gradients:
\begin{align}
    \rvg_k' &\leftarrow \rvg_k - \frac{\rvg_k^\top \rvg_j}{\norm{\rvg_j}^2 + \epsilon}\, \rvg_j, \\
    \rvg_j' &\leftarrow \rvg_j - \frac{\rvg_j^\top \rvg_k}{\norm{\rvg_k}^2 + \epsilon}\, \rvg_k,
\end{align}
where $\epsilon = 10^{-5}$. If $\rvg_j^\top \rvg_k \geq 0$, the gradients are left unchanged. The corrected gradients are then combined to form the final constraint direction. Therefore, the scaling network learns a global correction magnitude rather than per-constraint weights while mitigating destructive interference between constraints. In both our experiments, $J = 2$, so this reduces to the symmetric two gradient updates described above. 

The scaling network $\gamma_\phi$ is implemented as three MLPs with linear layers and SiLU activations. The first MLP processes the noisy state $\rvx_t$ to a feature embedding. The second MLP maps the noise level $t$ to a time embedding, which is split into scale and shift components for FiLM-style~\citep{perez2018film} conditioning of the feature embedding. The third MLP maps the conditioned features to $\alpha_\phi$ and $\beta_\phi$. Together, these outputs determine the correction magnitude through~\cref{eq: scaling-network}. Experiment-specific parameterizations of $\alpha_\phi$ and $\beta_\phi$ are provided in the corresponding experiment detail sections.

\section{Bouncing balls experiment details}

\subsection{Definition of constraint losses and evaluation metrics}
\label{app: def-metrics}
We provide the full definitions of the constraint losses and evaluation metrics used in the bouncing balls experiment. Each scene contains $B=10$ balls in a $10\times 10$ square box, with ball radius $r = 0.5$. Let $\rvx_{\tau}^{(b)}\in\sR^2$ denote the center position of ball $b$ at frame $\tau$, for $\tau = 1, \cdots, \gT$ and $\gT=100$.

\paragraph{Constraint losses} We first define the boundary and overlap losses used to measure constraint violations. The boundary loss measures ball-wall penetration. Since the center of each ball remains in $[r, 10-r]^2$, we define
\begin{align}
    \ell_{\mathrm{boundary}}(\rvx_{\tau}^{(b)}) = \max\{r - \rvx_{\tau, 1}^{(b)}, \rvx_{\tau, 1}^{(b)} - (10-r), r - \rvx_{\tau, 2}^{(b)}, \rvx_{\tau, 2}^{(b)} - (10-r), 0\}.
\end{align}
The overlap loss measures overlap between pairs of balls:
\begin{align}
    \ell_{\mathrm{overlap}}(\rvx_{\tau}^{(b)}, \rvx_{\tau}^{(b')} = \max(2r - \norm{\rvx_{\tau}^{(b)} - \rvx_{\tau}^{(b')}}, 0).
\end{align}

\paragraph{Reported metrics} The boundary rate is the fraction of frames in which at least one ball has positive boundary loss:
\begin{align}
    \text{Boundary rate} = \frac 1\gT \sum_{t=1}^\gT \1\left[\max_b \ell_{\mathrm{boundary}}\left(\rvx_{\tau}^{(b)}\right) > 0\right].
\end{align}
The overlap rate is the fraction of frames in which at least one pair of balls has positive overlap loss:
\begin{align}
    \text{Overlap rate} = \frac 1\gT \sum_{t=1}^\gT \1\left[\max_{b<b'} \ell_{\mathrm{overlap}}\left(\rvx_{\tau}^{(b)}, \rvx_{\tau}^{(b')}\right) > 0\right].
\end{align}
The maximum frame-to-frame displacement (F2F) measures abrupt motion or jitter:
\begin{align}
    \mathrm{F2F} = \max_{b, t<\gT} \norm{\rvx_{\tau+1}^{(b)} -\rvx_{\tau}^{(b)}}_2.
\end{align}
The maximum contact distance (MCD) measures whether sharp direction changes are supported by nearby contacts. We estimate velocity directly from consecutive positions, which is more reliable than using predicted velocity in generated samples. Specifically, we define
\begin{align}
    \rvv_\tau^{(b)} = \rvx_{\tau+1}^{(b)} - \rvx_{\tau}^{(b)}.
\end{align}
We then mark $(b, \tau)$ as a candidate bounce if the angle between $\rvv_{\tau+1}^{(b)}$ and $\rvv_{\tau}^{(b)}$ exceeds $\frac \pi 6$. Formally,
\begin{align}
    \gB = \left\{(b, \tau): \arccos\left(\frac{{\rvv_{\tau-1}^{(b)}}^\top\rvv_{\tau}^{(b)}}{\norm{\rvv_{\tau-1}^{(b)}}\norm{\rvv_{\tau}^{(b)}}+\epsilon}\right) > \frac{\pi}{6}\right\},
\end{align}
where $\tau=2, \cdots, \gT-1$. For each candidate bounce $(b, \tau)$, we compute the signed distance to the nearest possible contact object over a local temporal window
\begin{align}
    \gW_\tau = \{\tau-1, \tau, \tau+1\}.
\end{align}
This accounts for small discretization offsets between the detected direction change and the actual contact frame. The signed wall distance is
\begin{align}
    d_{\mathrm{wall}}(b, s) = \min\{\rvx_{s, 1}^{(b)} - r, 10 - r - \rvx_{s, 1}^{(b)}, \rvx_{s, 2}^{(b)} - r, 10 - r - \rvx_{s, 2}^{(b)}\},
\end{align}
and the signed ball-ball distance is
\begin{align}
    d_{\mathrm{ball}}(b, s) = \min_{b'\neq b} \left(\norm{\rvx_{s}^{(b)} - \rvx_{s}^{(b')}} - 2r\right).
\end{align}
We collect all candidate signed contact distances in the local window:
\begin{align}
    \gD(b, \tau) = \{d_{\mathrm{wall}}(b, s): s\in\gW_\tau\} \cup \{d_{\mathrm{ball}}(b, s): s\in \gW_\tau\}.
\end{align}
MCD is then defined as
\begin{align}
    \mathrm{MCD} = \max_{(b, \tau)\in \gB}\min_{d\in\gD(b, \tau)} |d|.
\end{align}

Finally, the maximum energy deviation (MED) measures frame-wise inconsistency in system kinetic energy. Assuming unit mass and constant time step, we define the kinetic energy proxy
\begin{align}
    E_\tau = \sum_{b=1}^B \frac 12\norm{\rvx_{\tau+1}^{(b)} - \rvx_{\tau}^{(b)}}_2^2.
\end{align}
The maximum energy deviation is:
\begin{align}
    \mathrm{MED} = \max_{t<\gT-1} |E_{\tau+1} - E_\tau|.
\end{align}
All maximum-based metrics are computed within each generated trajectory and then averaged over evaluation batches. We report the mean and standard deviation over four independent evaluation runs.

\subsection{Ablation study}
\label{app: bb-ablation}
We further conduct ablations to isolate the contribution of each component in our framework. FP training replaces rollout-based training with standard forward process training, which introduces a mismatch between training and sampling dynamics. It keeps the learned scaling network and gradient embeddings trainable. Rollout-only keeps rollout-based training and the same trainable scaling network and gradient embeddings, but removes LoRA adaptations. Finally, \method{} combines rollout-based training approach, the learned scaling network, gradient embeddings, and LoRA adaptation.

\begin{table}[t]
    \centering
    \small
    \setlength{\tabcolsep}{4pt}
    \caption{Ablation study of bouncing ball experiment}
    \begin{tabular}{lcccccc}
    \toprule
    & \multicolumn{2}{c}{Constraint (\%) $\downarrow$} & Fidelity $\uparrow$ & \multicolumn{3}{c}{Physical plausibility ($\times 10^{-1}$) $\downarrow$} \\
    \cmidrule(lr){2-3} \cmidrule(lr){5-7}
    Method & Boundary rate & Overlap rate & r-ELBO ($\times 10^{-2}$) & F2F & MCD & MED \\
    \midrule
     FP training  & $0.78\pm 0.03$ & $2.83\pm 0.06$ & $-21.7\pm 0.1$ & $4.1\pm 0.0$ & $0.5\pm 0.0$ & $1.3\pm 0.1$  \\
     Rollout-only & $0.07\pm 0.01 $ & $0.01\pm 0.00$ & $-23.4\pm 0.0$ & $5.4\pm 0.0$ & $0.9\pm 0.0$ & $1.9\pm 0.1$ \\
     \method{} & $0.01\pm 0.00$ & $0.01\pm 0.00$ & $-22.9\pm 0.0$ & $3.4\pm 0.0$ & $1.2\pm 0.0$ & $0.6\pm 0.0$ \\
    \bottomrule
    \end{tabular}
    \label{tab: bb-table}
\end{table}

\begin{figure}[t]
    \centering
    \includegraphics[width=\linewidth]{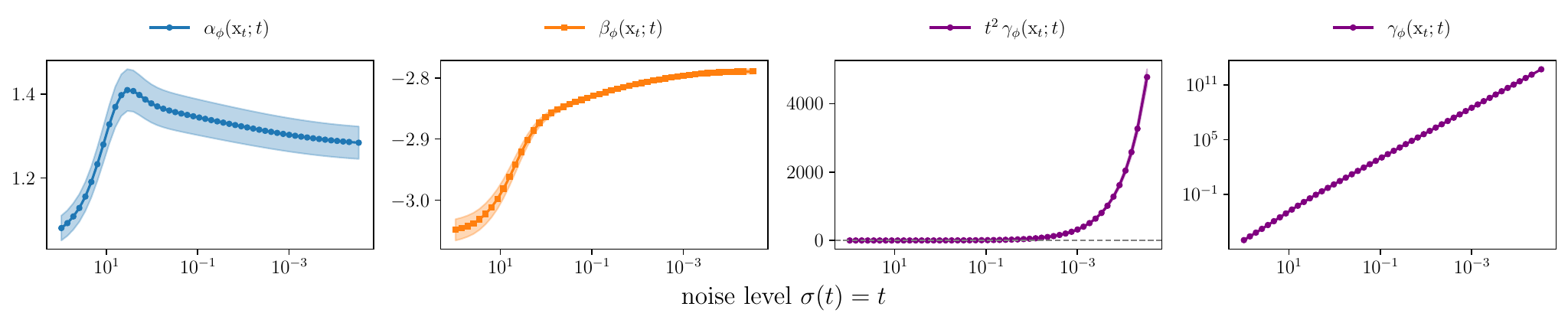}
    \caption{Scaling network output versus noise level $\sigma$ along denoising trajectory for bouncing ball experiment, Means are taken over batches; shaded bands show $\pm 1$ standard error.}
    \label{fig: bb-scalings}
\end{figure}

Rather than performing a discrete ablation on the scaling network, we analyze its learned behavior across noise levels. As shown in~\cref{fig: bb-scalings}, both learned components are noise dependent: $\alpha_\phi$ is not a constant multiplier, and $\beta_\phi$ varies smoothly along the sampling trajectory. To compare with prior schedules of the form $ct^{-2}$ in~\cref{tab: scaling-table}, we plot $t^2\gamma_\phi(\rvx_t; t)$. This quantity would be constant under such a $ct^{-2}$ schedule, but instead changes substantially across noise levels. This suggests that the scaling network learns a nontrivial modulation beyond a fixed power law. Moreover, $\gamma_\phi(\rvx_t; t)$ grows rapidly as the noise level decreases, consistent with stronger constraint enforcement near the data manifold.

\subsection{Effect of sampling steps on sampling-time correction}
MPGD without projection~\citep{hemanifold} performs constraint correction only during sampling, without updating the pretrained model. We evaluate it on the bouncing ball task with $50, 100, 150, 200$ denoising steps, where $50$ matches the main method setting. This ablation tests how sensitive sampling-time correction is to the number of steps for incremental updates along the denoising trajectory.

\begin{table}[t]
    \centering
    \small
    \setlength{\tabcolsep}{4pt}
    \caption{Effect of sampling steps on MPGD without projection in the bouncing ball task}
    \begin{tabular}{lccccc}
    \toprule
    & \multicolumn{2}{c}{Constraint (\%) $\downarrow$} & \multicolumn{3}{c}{Physical plausibility ($\times 10^{-1}$) $\downarrow$} \\
    \cmidrule(lr){2-3} \cmidrule(lr){4-6}
    Steps & Boundary rate & Overlap rate & F2F & MCD & MED \\
    \midrule
     50  & $0.0\pm 0.0$ & $0.0\pm 0.0$ & $8.6\pm 0.4$ & $0.8\pm 0.0$ & $29.7\pm 5.4$  \\
     100 & $0.0\pm 0.0$ & $0.0\pm 0.0$ & $3.8\pm 0.2$ & $0.6\pm 0.0$ & $3.2\pm 0.9$ \\
     150 & $0.0\pm 0.0$ & $0.0\pm 0.0$ & $3.2\pm 0.0$ & $0.6\pm 0.0$ & $0.6\pm 0.1$ \\
     200 & $0.0\pm 0.0$ & $0.0\pm 0.0$ & $3.1\pm 0.0$ & $0.6\pm 0.0$ & $0.4\pm 0.1$ \\
    \bottomrule
    \end{tabular}
    \label{tab:mpgd-sampling-steps}
\end{table}

\cref{tab:mpgd-sampling-steps} shows that MPGD achieves zero boundary and collision violations across all various sampling steps, but its physical plausibility depends strongly on the number of denoising steps. With $50$ steps, matching the main evaluation setting, MPGD exhibits large frame-to-frame changes (F2F) and high energy deviation (MED), indicating that feasibility is obtained through many incremental corrections. This supports the claim that sampling-time correction methods can enforce constraints, but may rely on many incremental updates to avoid physically infeasible trajectories.

\subsection{Scaling network parameterization}
The third MLP outputs $\alpha_\phi$ and a raw exponent parameter $\beta_\phi^{\mathrm{raw}}$. We use a softplus output for $\alpha_\phi$ to ensure a nonnegative magnitude, and map the raw exponent to the effective component as $2\mathrm{sigmoid}(\beta_\phi^{\mathrm{raw}})-4$. This offset incorporates the known scaling of the denoised-state guidance term: the denoiser Jacobian contributes a factor $\cov{}{\rvx_0\mid \rvx_t}/t^2$, and the score parameterization $(\meanp{}{\rvx_0\mid \rvx_t} - \rvx_t)/t^2$ contributes an additional $t^{-2}$ factor. Thus the $t^{-4}$ component is fixed analytically, while the network learns the remaining state- and noise-dependent covariance modulation. This parameterization preserves the desired divergence property as noise level approaches zero without requiring a manually tuned schedule.

\subsection{Training details}
We follow the prior work~\citep{liang2026improved} for the experimental setup, using the same transformer-based diffusion backbone~\citep{vaswani2017attention, zhang2022motiondiffuse, beltagy2020longformer} and gradient embedding MLP. We use the same log-linear noise schedule for sampling with $\sigma_{\min}=3\times 10^{-5}$ and $\sigma_{\max} = 80$ but use 50 sampling steps. The standard EDM model is trained from scratch for $950$k iterations with learning rate $3\times 10^{-4}$ on a Tesla V100 GPU, taking approximately $70$ hours. Our proposed fine-tuned model additionally uses the scaling network described in~\cref{app: scaling-network}, and includes LoRA adapters. Our model and other baseline models are either initialized from scratch or from the pretrained EDM checkpoint, as specified in their method descriptions, and are trained with learning rate $3\times 10^{-5}$ for approximately $72$ hours on an NVIDIA L40S GPU. The number of training iterations varies across methods due to differences in computational cost. All models use the Adam optimizer; our method and DPOK use batch size $16$, while other fine-tuning models use batch size $32$.

\section{Traffic scene trajectory prediction experiment details}
\subsection{Definition of constraint losses and evaluation metrics}
We provide the definition of the constraint losses and evaluation metrics used in the traffic scene prediction experiment. Each scene contains a varying number of agents. Given the first 10 frames observed, the model jointly predicts all agents over a future prediction horizon $\gT=30$ frames. We denote the predicted center position of the oriented bounding box for agent $a$ at frame $\tau$ by $\rvx_{\tau}^{(a)}\in\sR^2$, and denote the corresponding bounding box by $\gO_\tau^{(a)}$. In our implementation, the first agent in the predicted scene tensor is the ego agent.

\paragraph{Constraint losses}
The offroad loss measures how far a vehicle extends outside the drivable region. Given the drivable mesh $\gM$, we compute the four corners of each vehicle box. For a corner $\rvx_{\tau, j}^a$, where $j = \{1, \cdots, 4\}$, we define its distance to drivable mesh as
\begin{align}
    d_{\mathrm{mesh}}\left(\rvc_{\tau, j}^{(a)}\right) = \min_{\rvm\in\gM}\norm{\rvc_{\tau, j}^{(a)} - \rvm}_2^2.
\end{align}
The offroad loss is the maximum distance among corners that lie outside the drivable region:
\begin{align}
    \ell_{\mathrm{offroad}}\left(\gO_\tau^{(a)}\right) = \max_j \1\left[\rvc_{\tau, j}^{(a)}\in\gM\right]d_{\mathrm{mesh}}\left(\rvc_{\tau, j}^{(a)}\right).
\end{align}
The collision loss measures the overlap area between pairs of vehicles. For two agents $a$ and $a'$, we define
\begin{align}
    \ell_{\mathrm{collision}}\left(\gO_{\tau}^{(a)}, \gO_{\tau}^{(a')}\right) = \mathrm{Area}\left(\gO_{\tau}^{(a)}, \gO_{\tau}^{(a')}\right).
\end{align}
\paragraph{Reported metrics}
For each scene, we generate $K=6$ predicted futures. We use $k\in\{1, \cdots K\}$ to index generated samples and $a=0$ to denote the ego agent. The offroad rate is the fraction of scenes in which the ego vehicle goes offroad at least once during the future prediction horizon
\begin{align}
    \text{Offroad rate} = \frac 1K \sum_{k=1}^K \1\left[\max_{\tau=1, \cdots, \gT} \ell_{\mathrm{offroad}}(\gO_{\tau}^{k, (0)})>0\right].
\end{align}

The collision rate is the fraction of scenes in which the ego vehicle collides at least once with another vehicle during the future prediction
\begin{align}
    \text{Collision rate} = \frac 1K \sum_{k=1}^K \1\left[\max_{\tau=1, \cdots, \gT}\ell_{\mathrm{collision}}(\gO_{\tau}^{k, (0)}) > 0\right].
\end{align}
The scene maximum frame-to-frame displacement (SF2F) measures abrupt motion or jitter, same as in the bouncing ball experiment:
\begin{align}
    \mathrm{SF2F}_6 = \max_{k, a, \tau<\gT} \norm{\rvx_{\tau+1}^{k, (a)} - \rvx_{\tau}^{k, (a)}}_2.
\end{align}
Both offroad and collision rates are then averaged over evaluation batches. 

The scene maximum lateral velocity (SMLV) measures sideways motion relative to each vehicle heading. We estimate velocity from consecutive predicted positions:
\begin{align}
    \rvv_{\tau}^{k, (a)} = \rvx_{\tau+1}^{k, (a)} - \rvx_{\tau}^{k, (a)}.
\end{align}
Given heading $h_\tau^{k, (a)}$, the lateral velocity is
\begin{align}
    \rvv_{\mathrm{lat}, \tau}^{k, (a)} = - \rvv_{\tau, x}^{k, (a)}\sin h_\tau^{k, (a)} + \rvv_{\tau}^{k, (a)}\cos h_\tau^{k, (a)}.
\end{align}
We define maximum lateral velocity as
\begin{align}
    \text{SMLV}_6 = \max_{k, a, \tau<\gT} \norm{\rvv_{\mathrm{lat}, \tau}^{k, (a)}}.
\end{align}

The maximum final distance (MFD) measures endpoint diversity among generated ego trajectories. We compute the maximum pairwise distance between final predicted ego position across $K$ generated samples,
\begin{align}
    \text{MFD}_6 = \max_{k, k'} \norm{\rvx_{\gT}^{k, (0)} - \rvx_{\gT}^{k', (0)}}.
\end{align}

SF2F$_6$, SMLV$_6$, MFD$_6$ are computed within $K=6$ generated samples, and then average over evaluation batches.

\begin{figure}[t]
    \centering
    \includegraphics[width=\linewidth]{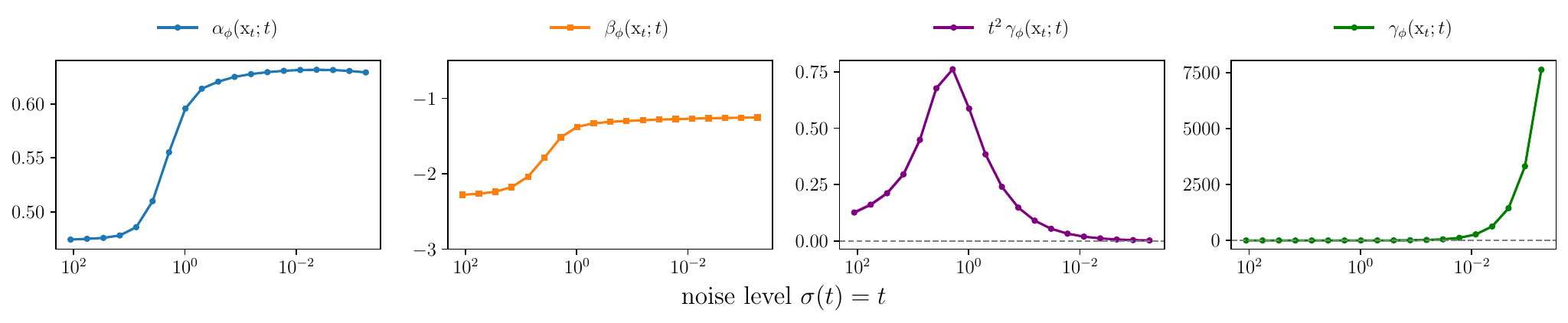}
    \caption{Scaling network output versus noise level $\sigma$ along denoising trajectory for traffic scene trajectory prediction experiment. Means are taken over batches; shaded bands show $\pm 1$ standard error.}
    \label{fig: dm-scalings}
\end{figure}

\subsection{Method variants}
\label{app: method-variants}
We report two additional variants to our~\method{} in~\cref{tab: scene-variant-table}: Rollout-only trains the scaling network and gradient embedding without LoRA adaptation; and~\method{} (staged) uses the same trainable components and training objective as~\method{}, but initializes from the trained Rollout-only checkpoint before enabling LoRA adapters for a second fine-tuning stage.~\method{} is the variant reported in~\cref{tab: scene-table}. As shown in~\cref{tab: scene-variant-table}, \method{} (staged) further reduces the offroad rate, but requires additional training through the initial rollout-only stage. 

\subsection{Effect of scaling network and its parameterization}
As shown in~\cref{fig: dm-scalings}, $\alpha_\phi(\rvx_t; t)$ and $\beta_\phi(\rvx_t; t)$ both change with the noise level, so the learned scaling does not collapse to a constant coefficient. Unlike the bouncing balls case, $t^2\gamma_\phi(\rvx_t; t)$ peaks at an intermediate noise level, suggesting that the model places the strongest relative correction in the middle of the trajectory. The final scale $\gamma_\phi(\rvx_t; t)$ still grows rapidly as $t$ decreases, preserving strong constraint enforcement near the data manifold.

For parameterization, the third MLP outputs $\alpha_\phi$ and a raw exponent parameter $\beta_\phi^{\mathrm{raw}}$ as before. We apply a softplus to $\alpha_\phi$ to keep the correction magnitude nonnegative, and map the raw exponent to the effective exponent as $2\mathrm{sigmoid}(\beta_\phi^{\mathrm{raw}})-3$. This effective exponent, rather than the raw MLP output, is the $\beta_\phi$ used in the scaling function. This retains the $t^{-2}$ factor induced by the score parameterization, while using the remaining $t^{-1}$ factor as a milder base scaling. Compared with the $-4$ offset used in bouncing balls, this reduces the strength of low-noise guidance. We conjecture that this milder scaling is better suited to trajectory prediction because it uses few sampling steps, where overly aggressive low-noise guidance can lead to abrupt late-stage correction. The bounded learned exponent keeps the divergence property as $t$ goes to zero.

\subsection{Training details}
We follow prior work~\citep{liang2026improved} for the experimental setup, using the same transformer-based diffusion backbone~\citep{niedoba2024diffusion} and gradient embedding MLP as the bouncing ball experiment. Our fine-tuned models additionally use the scaling network described in~\cref{app: scaling-network} and LoRA adapters. We use the same log-linear sampling noise schedule with $\sigma_{\max}=80$ and $\sigma_{\min} = 2\times 10^{-4}$, and sample with $20$ steps. The standard diffusion model is trained from scratch with learning rate $3\times 10^{-4}$ on a Tesla V100 GPU, taking approximately 168 hours for about 140 epochs. Our fine-tuned models, and other fine-tuned baselines are trained with learning rate $3\times 10^{-4}$ for approximately $72$ hours on two NVIDIA L40S GPUs, except for PIDM, which uses learning rate $3\times 10^{-5}$, and \method{} (staged). The \method{} (staged) variant follows the staged procedure described above: the Rollout-only model is trained for three days, and the LoRA adapters are then trained for another three days. The number of training iterations varies across methods due to differences in computational cost. All models use the Adam optimizer with batch size $16$.

\subsection{More experimental results}
\label{app: dm-more-results}
\begin{table}[!h]
    \centering
    \small
    \setlength{\tabcolsep}{4.5pt}
    \caption{
    Traffic scene trajectory prediction results on the INTERACTION DR\_DEU\_Merging\_MT scenario. 
    }
    \label{tab: scene-variant-table}
    \begin{tabular}{lccccccccc}
        \toprule
        & \multicolumn{2}{c}{Constraint (\%) $\downarrow$}
        & \multicolumn{4}{c}{Min traj. match $\downarrow$}
        & \multicolumn{2}{c}{Phys. plaus. $\downarrow$}
        & Diversity $\uparrow$ \\
        \cmidrule(lr){2-3}
        \cmidrule(lr){4-7}
        \cmidrule(lr){8-9}
        \cmidrule(lr){10-10}
        Method
        & Offroad & Collision & ADE$_6$ & FDE$_6$ & SADE$_6$ & SFDE$_6$ & SF2F$_6$ & SMLV$_6$ & MFD$_6$ \\
        \midrule
        Rollout-only & $0.49$ & $0.02$ & $0.18$ & $0.45$ & $0.23$ & $0.66$ & $1.10$ & $0.49$ & $1.92$ \\
        \method{} & $0.30$ & $0.00$ & $0.18$ & $0.45$ & $0.23$ & $0.66$ & $1.07$ & $0.43$ & $1.91$ \\
        \method{} (staged) & $0.04$ & $0.00$ & $0.19$ & $0.47$ & $0.24$ & $0.69$ & $1.02$ & $0.37$ & $1.96$ \\
        \bottomrule
    \end{tabular}
\end{table}

\begin{table}[t]
    \centering
    \small
    \setlength{\tabcolsep}{4.5pt}
    \caption{
    Traffic scene trajectory prediction results on the INTERACTION DR\_DEU\_Roundabout\_OF scenario. 
    }
    \begin{tabular}{lccccccccc}
        \toprule
        & \multicolumn{2}{c}{Constraint (\%) $\downarrow$}
        & \multicolumn{4}{c}{Min traj. match $\downarrow$}
        & \multicolumn{2}{c}{Phys. plaus. $\downarrow$}
        & Diversity $\uparrow$ \\
        \cmidrule(lr){2-3}
        \cmidrule(lr){4-7}
        \cmidrule(lr){8-9}
        \cmidrule(lr){10-10}
        Method
        & Offroad & Collision & ADE$_6$ & FDE$_6$ & SADE$_6$ & SFDE$_6$ & SF2F$_6$ & SMLV$_6$ & MFD$_6$ \\
        \midrule
        EDM~\citep{karras2022elucidating} & $23.92$ & $1.70$ & $0.35$ & $1.11$ & $0.50$ & $1.80$ & $1.15$ & $0.49$ & $6.00$  \\
        MPGD~\citep{hemanifold} & $11.96$ & $1.42$ & $0.33$ & $1.04$ & $0.45$ & $1.57$ & $1.24$ & $0.65$ & $3.98$ \\
        MBM++~\citep{liang2026improved} & $12.42$ & $1.40$ & $0.32$ & $1.05$ & $0.44$ & $1.59$ & $1.21$ & $0.61$ & $3.91$\\
        PIDM (FT)~\citep{bastek2024physics} & $14.12$ & $1.35$ & $0.34$ & $1.24$ & $0.49$ & $1.93$ & $0.96$ & $0.33$ & $5.00$ \\
        DPOK~\citep{fan2023dpok} & $3.73$ & $0.11$ & $2.99$ & $8.88$ & $2.96$ & $9.94$ & $0.97$ & $0.56$ & $2.42$ \\
        CriticSMC~\citep{lioutascritic} & -- & $0.08$ & $0.45$ & -- & -- & -- & -- & -- & $3.43$ \\
        \midrule
        Rollout-only & $0.68$ & $0.03$ & $0.37$ & $1.06$ & $0.51$ & $1.63$ & $2.07$ & $1.35$ & $4.07$ \\
        \method{} & $1.22$ & $0.01$ & $0.37$ & $1.07$ & $0.51$ & $1.66$ & $1.69$ & $1.06$ & $4.02$ \\
        \method{} (staged) & $0.37$ & $0.00$ & $0.37$ & $1.08$ & $0.51$ & $1.66$ & $1.68$ & $1.05$ & $3.95$ \\
        \midrule
        Ground truth & $0.00$ & $0.00$ & -- & -- & -- & -- & $0.84$ & $0.06$ & -- \\
        \bottomrule
    \end{tabular}
\end{table}

\begin{table}[!h]
    \centering
    \small
    \setlength{\tabcolsep}{4.5pt}
    \caption{
    Traffic scene trajectory prediction results on the INTERACTION DR\_USA\_Intersection\_MA
    }
    \begin{tabular}{lccccccccc}
        \toprule
        & \multicolumn{2}{c}{Constraint (\%) $\downarrow$}
        & \multicolumn{4}{c}{Min traj. match $\downarrow$}
        & \multicolumn{2}{c}{Phys. plaus. $\downarrow$}
        & Diversity $\uparrow$ \\
        \cmidrule(lr){2-3}
        \cmidrule(lr){4-7}
        \cmidrule(lr){8-9}
        \cmidrule(lr){10-10}
        Method
        & Offroad & Collision & ADE$_6$ & FDE$_6$ & SADE$_6$ & SFDE$_6$ & F2F$_6$ & MLV$_6$  & MFD$_6$ \\
        \midrule
        EDM~\citep{karras2022elucidating} & $1.36$ & $1.54$ & $0.24$ & $0.66$ & $0.40$ & $1.37$ & $1.29$ & $0.46$ & $4.44$ \\
        MPGD~\citep{hemanifold} & $0.49$ & $0.83$ & $0.22$ & $0.64$ & $0.35$ & $1.22$ & $1.24$ & $0.42$ & $3.17$ \\
        MBM++~\citep{liang2026improved} & $0.42$ & $0.83$ & $0.20$ & $0.63$ & $0.33$ & $1.21$ & $1.22$ & $0.36$ & $3.11$ \\
        PIDM (FT)~\citep{bastek2024physics} & $1.15$ & $1.20$ & $0.21$ & $0.67$ & $0.36$ & $1.39$ & $1.20$ & $0.28$ & $4.26$ \\
        DPOK~\citep{fan2023dpok} & $0.37$ & $0.04$ & $2.07$ & $6.28$ & $2.10$ & $6.60$ & $0.96$ & $0.52$ & $1.77$ \\
        CriticSMC~\citep{lioutascritic} & -- & $0.09$ & $0.45$ & -- & -- & -- & -- & -- & $2.87$ \\
        \midrule
        Rollout-only & $0.03$ & $0.00$ & $0.23$ & $0.64$ & $0.35$ & $1.22$ & $1.33$ & $0.53$ & $3.17$ \\
        \method{} & $0.05$ & $0.00$ & $0.22$ & $0.65$ & $0.35$ & $1.23$ & $1.26$ & $0.46$ & $3.17$ \\
        \method{} (staged) & $0.01$ & $0.00$ & $0.23$ & $0.67$ & $0.36$ & $1.26$ & $1.24$ & $0.45$ & $3.17$ \\
        \midrule
        Ground truth & $0.00$ & $0.00$ & -- & -- & -- & -- & $1.11$ & $0.05$ & --  \\
        \bottomrule
    \end{tabular}
\end{table}

\begin{table}[!h]
    \centering
    \small
    \setlength{\tabcolsep}{4.5pt}
    {
    Traffic scene trajectory prediction results on the INTERACTION DR\_USA\_Roundabout\_FT
    }
    \begin{tabular}{lccccccccc}
        \toprule
        & \multicolumn{2}{c}{Constraint (\%) $\downarrow$}
        & \multicolumn{4}{c}{Min traj. match $\downarrow$}
        & \multicolumn{2}{c}{Phys. plaus. $\downarrow$}
        & Diversity $\uparrow$ \\
        \cmidrule(lr){2-3}
        \cmidrule(lr){4-7}
        \cmidrule(lr){8-9}
        \cmidrule(lr){10-10}
        Method
        & Offroad & Collision & ADE$_6$ & FDE$_6$ & SADE$_6$ & SFDE$_6$ & F2F$_6$ & MLV$_6$ & MFD$_6$ \\
        \midrule
        EDM~\citep{karras2022elucidating} & $9.32$ & $1.37$ & $0.25$ & $0.72$ & $0.41$ & $1.44$ & $1.23$ & $0.47$ & $4.26$ \\
        MPGD~\citep{hemanifold} & $2.64$ & $0.93$ & $0.23$ & $0.68$ & $0.35$ & $1.24$ & $1.21$ & $0.50$ & $2.97$ \\
        MBM++~\citep{liang2026improved} & $2.49$ & $0.92$ & $0.21$ & $0.68$ & $0.34$ & $1.23$ & $1.18$ & $0.46$ & $2.92$ \\
        PIDM (FT)~\citep{bastek2024physics} & $5.79$ & $0.86$ & $0.23$ & $0.83$ & $0.39$ & $1.53$ & $1.02$ & $0.31$ & $3.87$ \\
        DPOK~\citep{fan2023dpok} & $1.95$ & $0.06$ & $2.55$ & $7.84$ & $2.65$ & $8.25$ & $1.04$ & $0.58$  & $1.95$ \\
        CriticSMC~\citep{lioutascritic} & -- & $0.07$ & $0.44$ & -- & -- & -- & -- & -- & $2.97$ \\
        \midrule
        Rollout-only & $0.28$ & $0.11$ & $0.24$ & $0.70$ & $0.36$ & $1.26$ & $1.60$ & $1.01$ & $3.01$ \\
        \method & $0.25$ & $0.06$ & $0.24$ & $0.71$ & $0.37$ & $1.28$ & $1.37$ & $0.75$ & $2.94$ \\
        \method{} (staged) & $0.16$ & $0.05$ & $0.24$ & $0.72$ & $0.36$ & $1.29$ & $1.35$ & $0.77$ & $3.00$ \\
        \midrule
        Ground truth & $0.00$ & $0.00$ & -- & -- & -- & -- & $0.95$ & $0.06$ & -- \\
        \bottomrule
    \end{tabular}
\end{table}

In the rendered visualizations below, we show the final predicted frame so that the full trajectory is visible. Gray dot trajectories indicate ground-truth agent motion, and orange trajectories denote model predictions.

\begin{figure*}[!th]
    \centering
    \begin{subfigure}[b]{0.24\textwidth}
        \centering
       \includegraphics[width=1\textwidth]{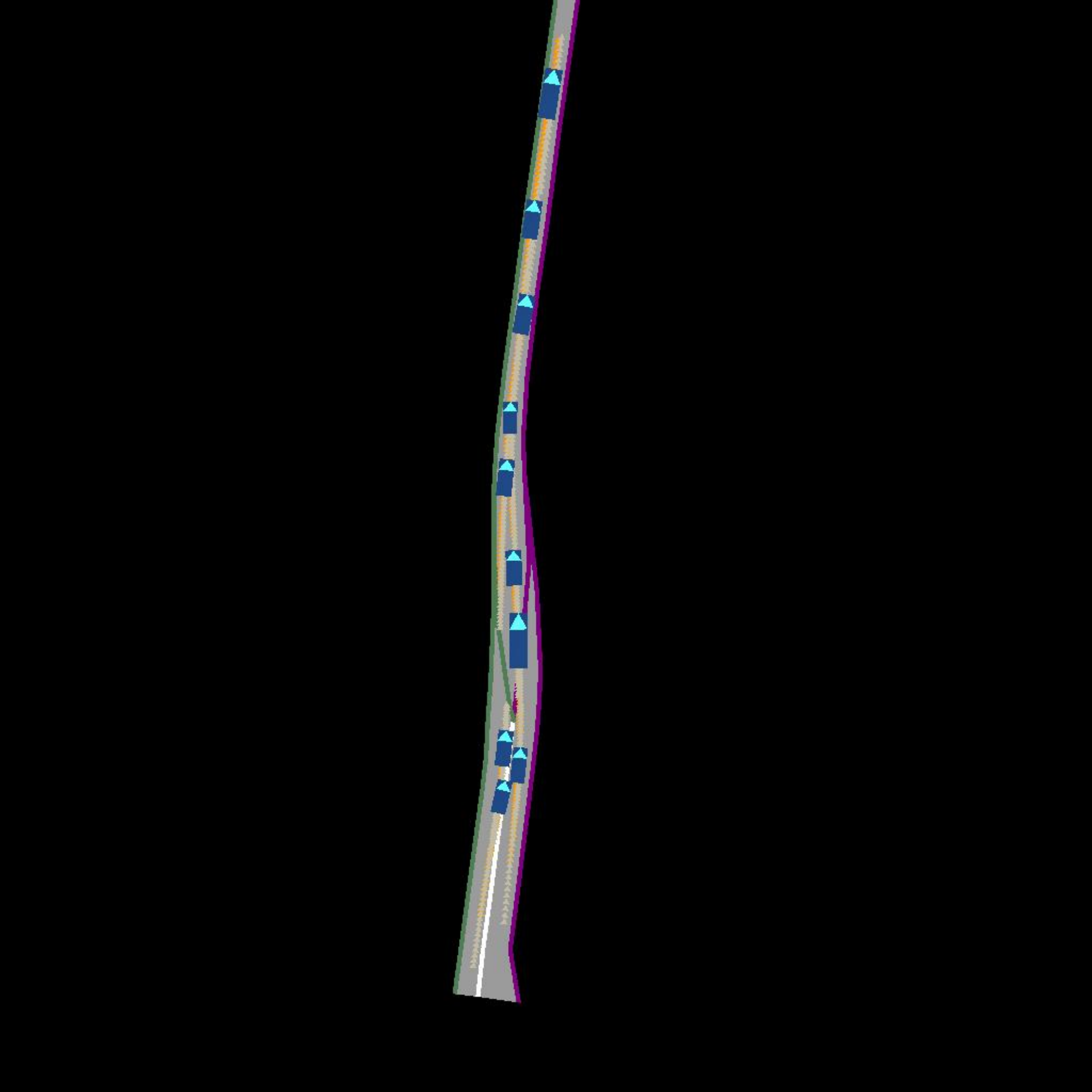}
       \caption*{Standard Diffusion}
        \label{}
    \end{subfigure}
    \hfill
    \begin{subfigure}[b]{0.24\textwidth}
        \centering
        \includegraphics[width=1\textwidth]{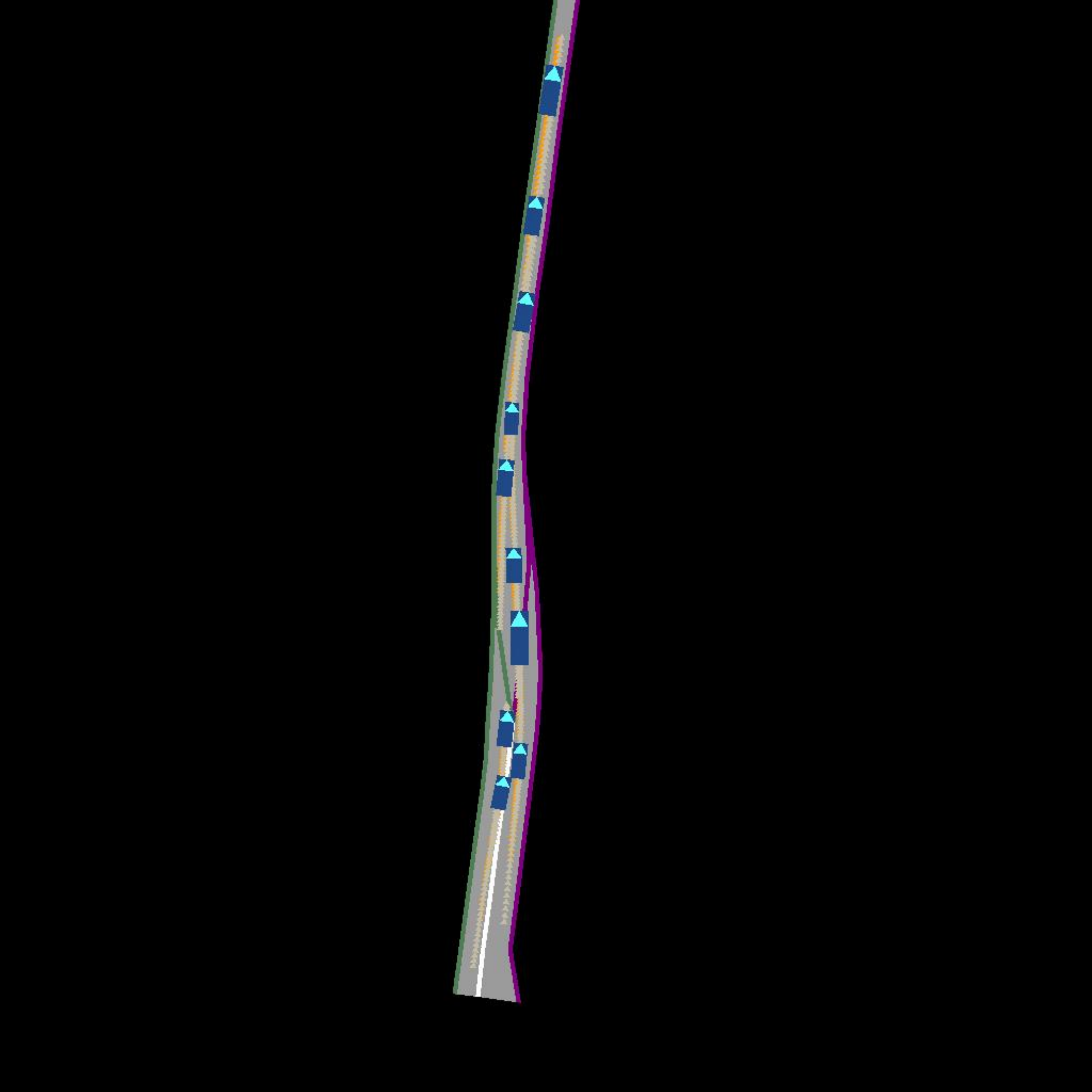}
        \caption*{MPGD w/o projection}
        \label{}
    \end{subfigure}
    \hfill
    \begin{subfigure}[b]{0.24\textwidth}
        \centering
        \includegraphics[width=1\textwidth]{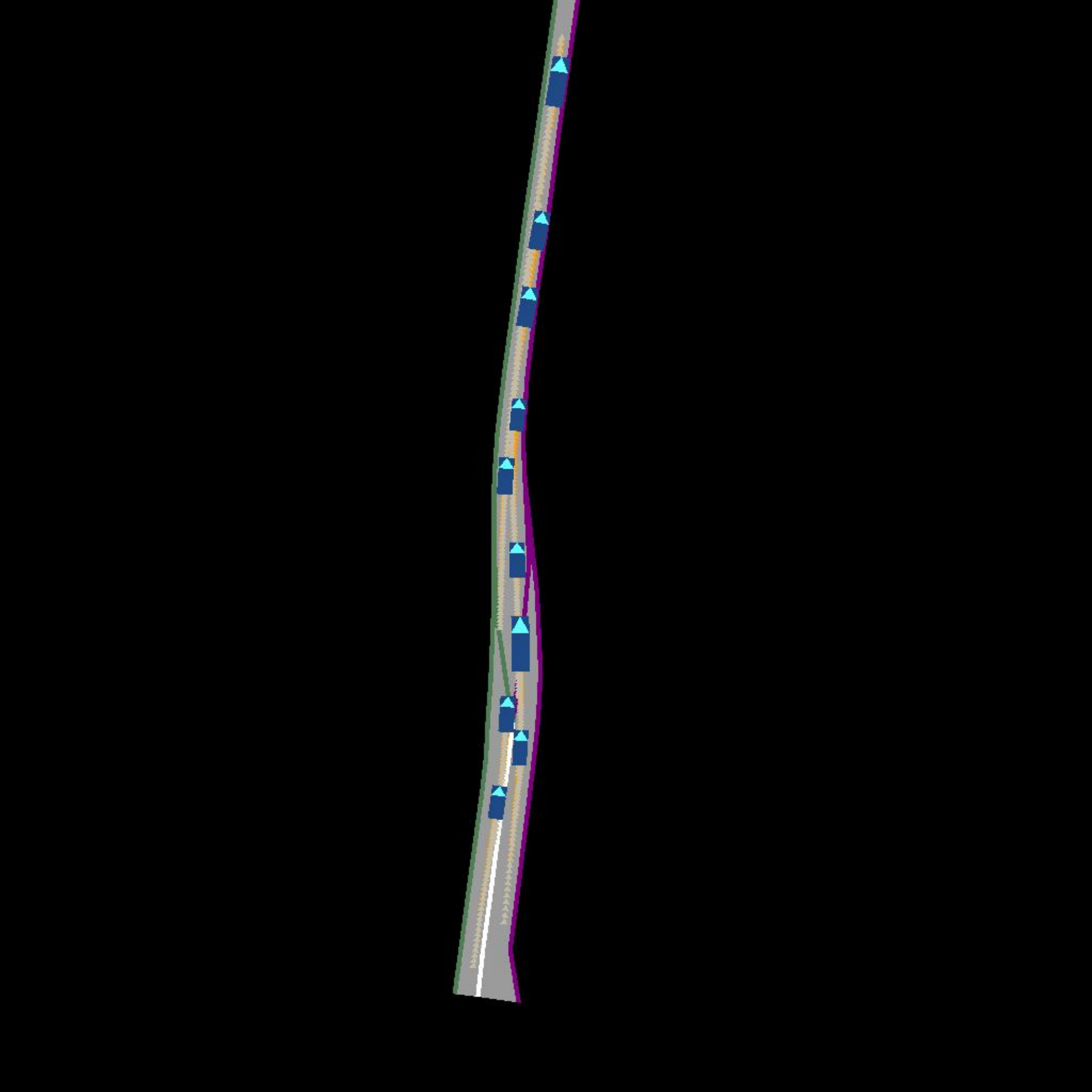}
        \caption*{MBM++}
        \label{}
    \end{subfigure}
    \hfill
    \begin{subfigure}[b]{0.24\textwidth}
        \centering
        \includegraphics[width=1\textwidth]{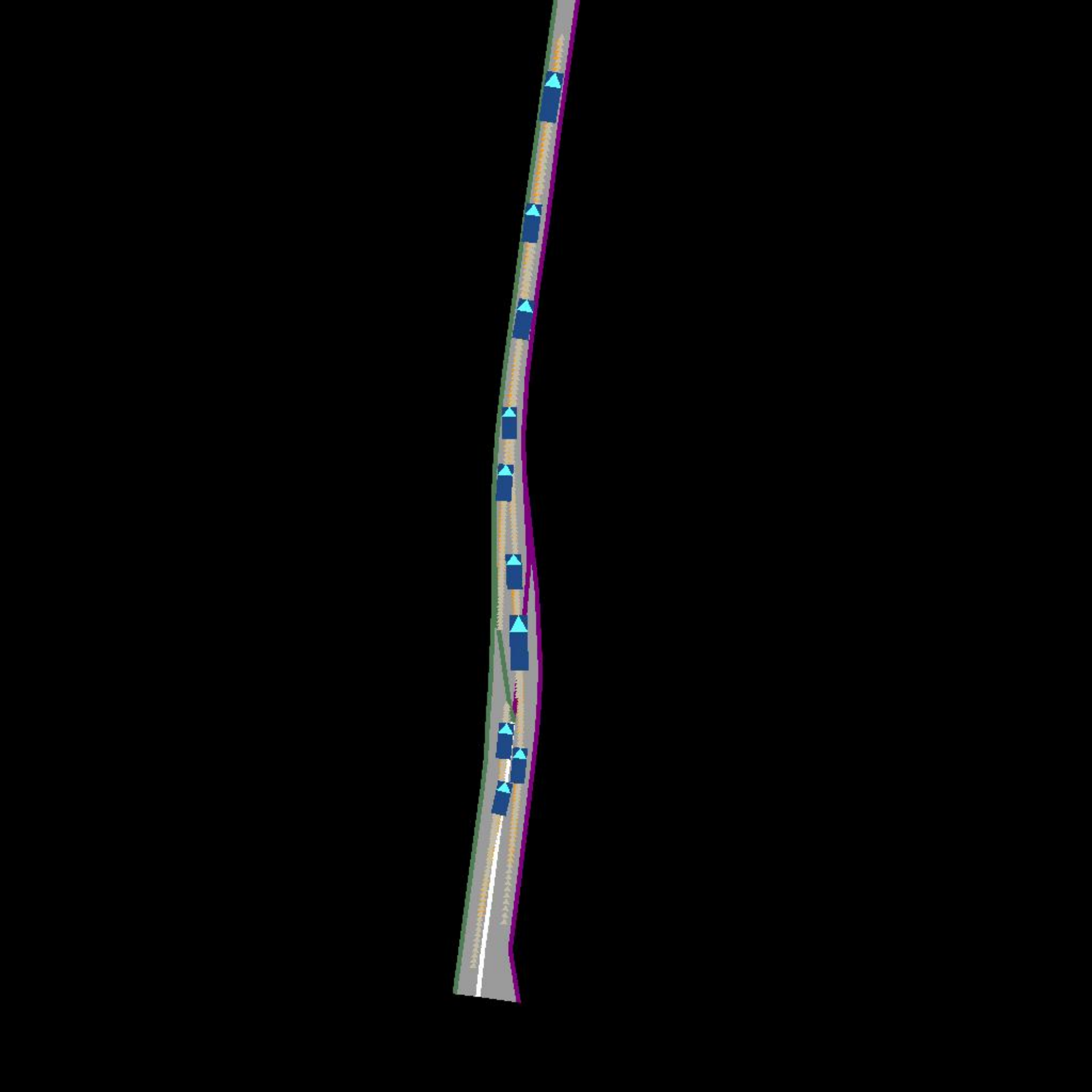}
        \caption*{PIDM}
        \label{}
    \end{subfigure}
    \vspace{0.6em}
    \begin{subfigure}[b]{0.24\textwidth}
        \centering
       \includegraphics[width=1\textwidth]{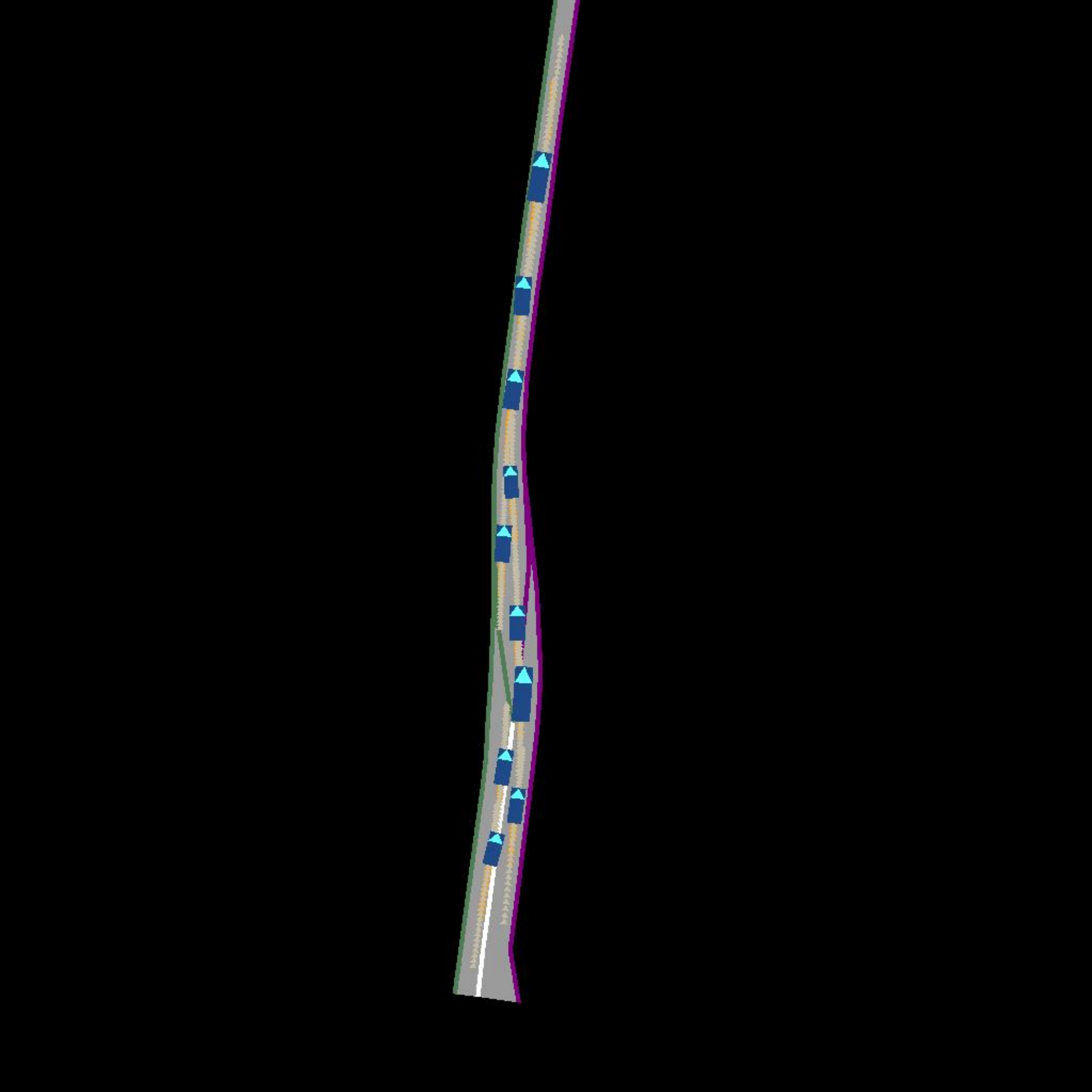}
       \caption*{DPOK}
        \label{}
    \end{subfigure}
    \hfill
    \begin{subfigure}[b]{0.24\textwidth}
        \centering
        \includegraphics[width=1\textwidth]{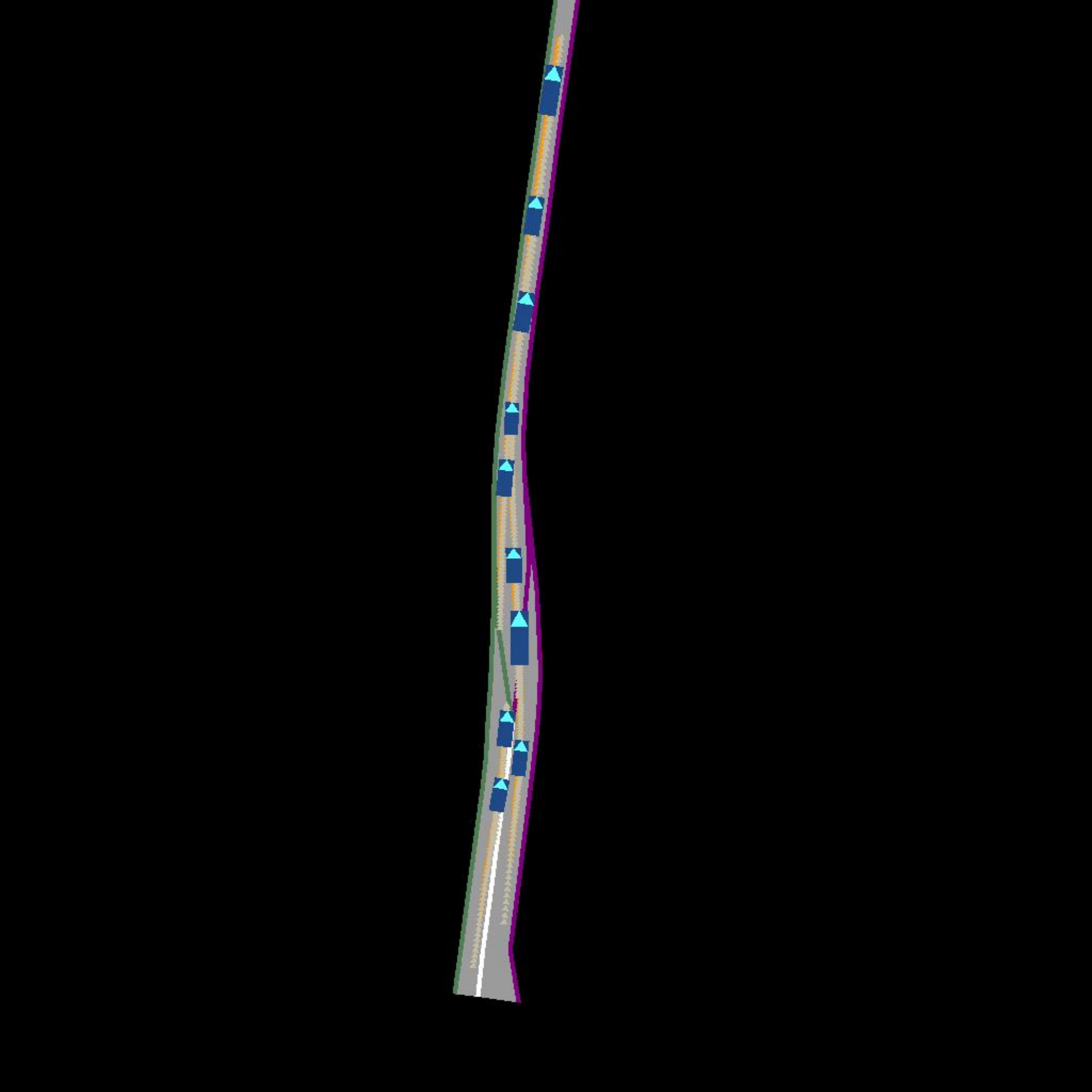}
        \caption*{Rollout-only}
        \label{}
    \end{subfigure}
    \hfill
    \begin{subfigure}[b]{0.24\textwidth}
        \centering
        \includegraphics[width=1\textwidth]{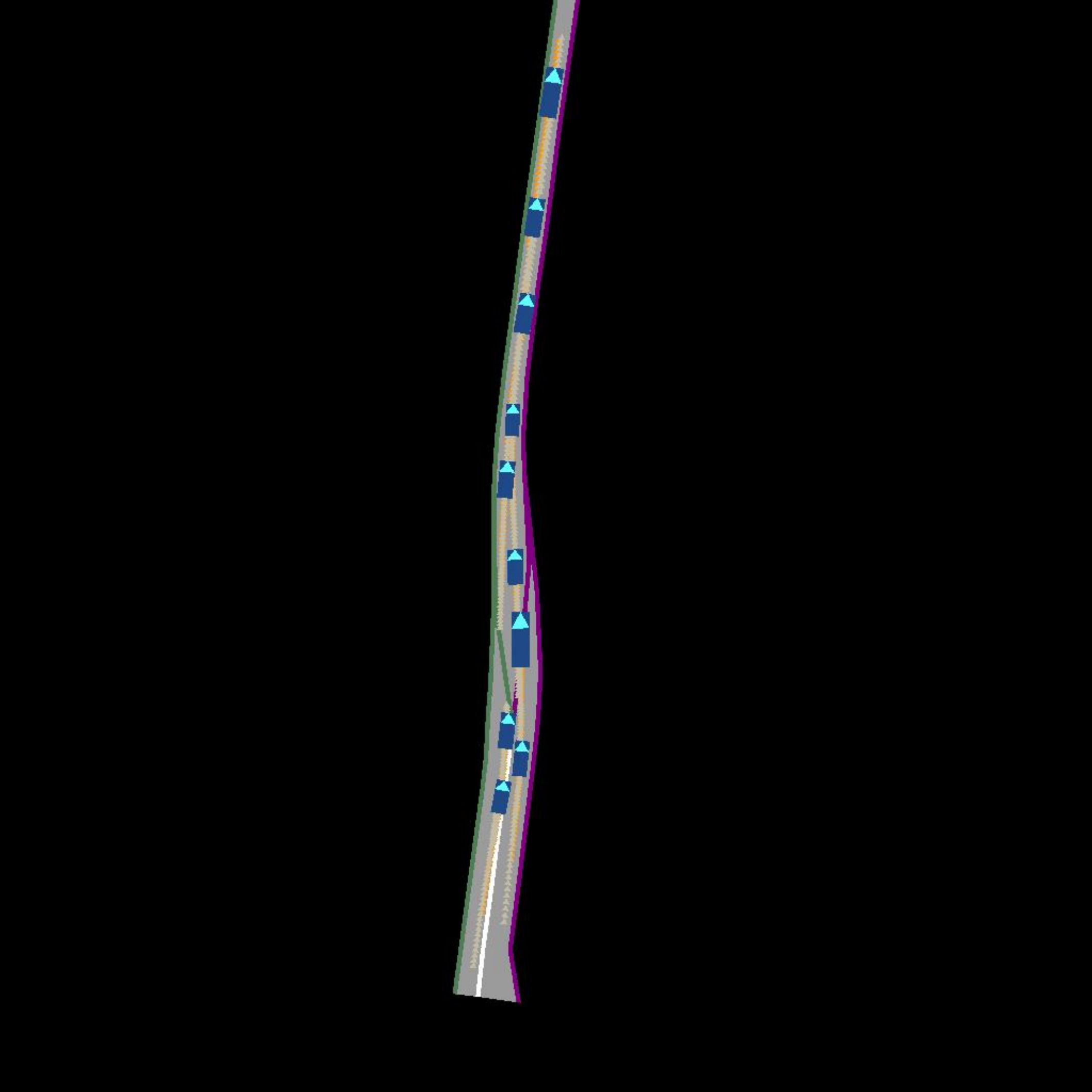}
        \caption*{\method{}}
        \label{}
    \end{subfigure}
    \hfill
    \begin{subfigure}[b]{0.24\textwidth}
        \centering
        \includegraphics[width=1\textwidth]{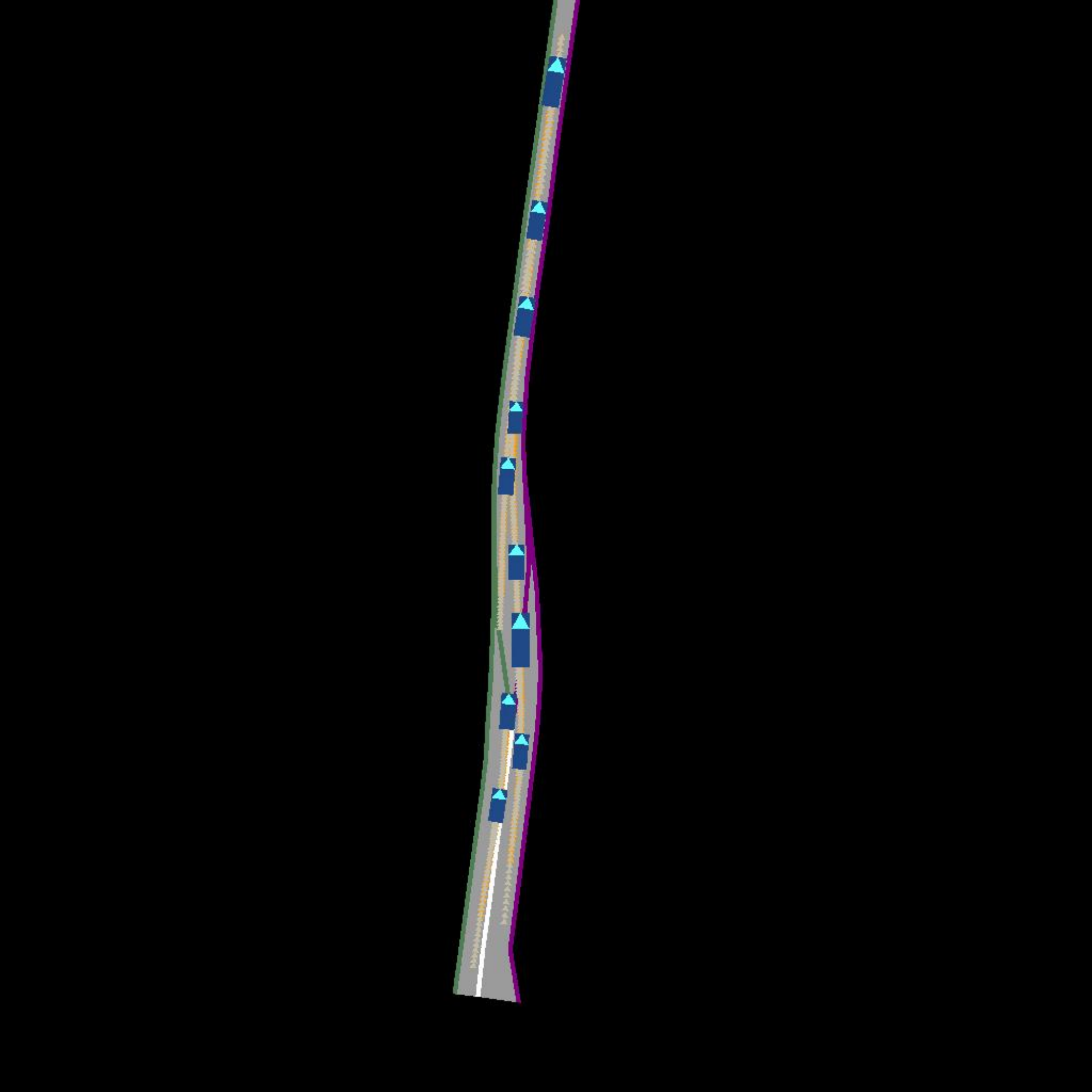}
        \caption*{\method{} (staged)}
        \label{}
    \end{subfigure}
    
\end{figure*}

\begin{figure*}[!h]
    \centering
    \begin{subfigure}[b]{0.24\textwidth}
        \centering
       \includegraphics[width=1\textwidth]{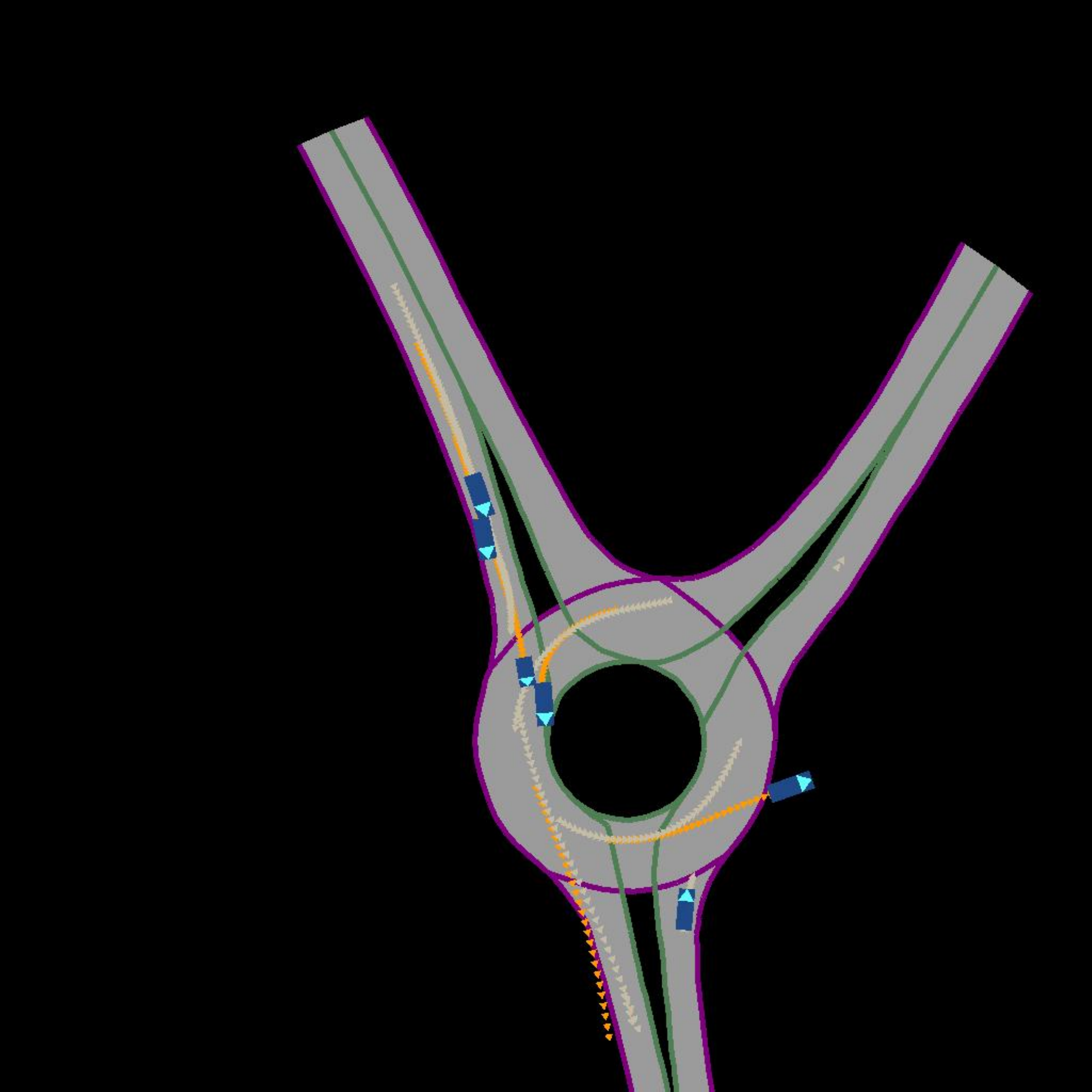}
       \caption*{Standard Diffusion}
        \label{}
    \end{subfigure}
    \hfill
    \begin{subfigure}[b]{0.24\textwidth}
        \centering
        \includegraphics[width=1\textwidth]{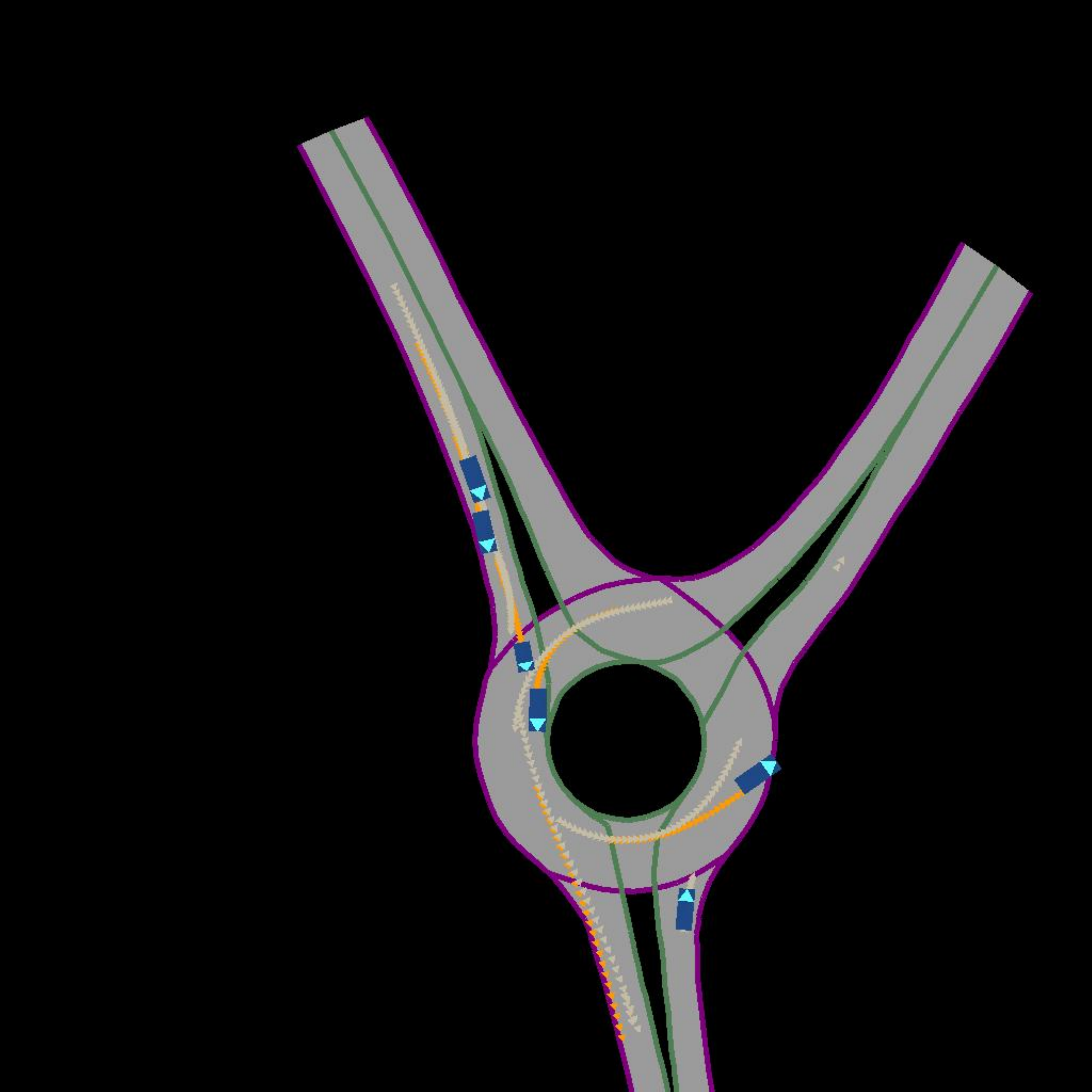}
        \caption*{MPGD w/o projection}
        \label{}
    \end{subfigure}
    \hfill
    \begin{subfigure}[b]{0.24\textwidth}
        \centering
        \includegraphics[width=1\textwidth]{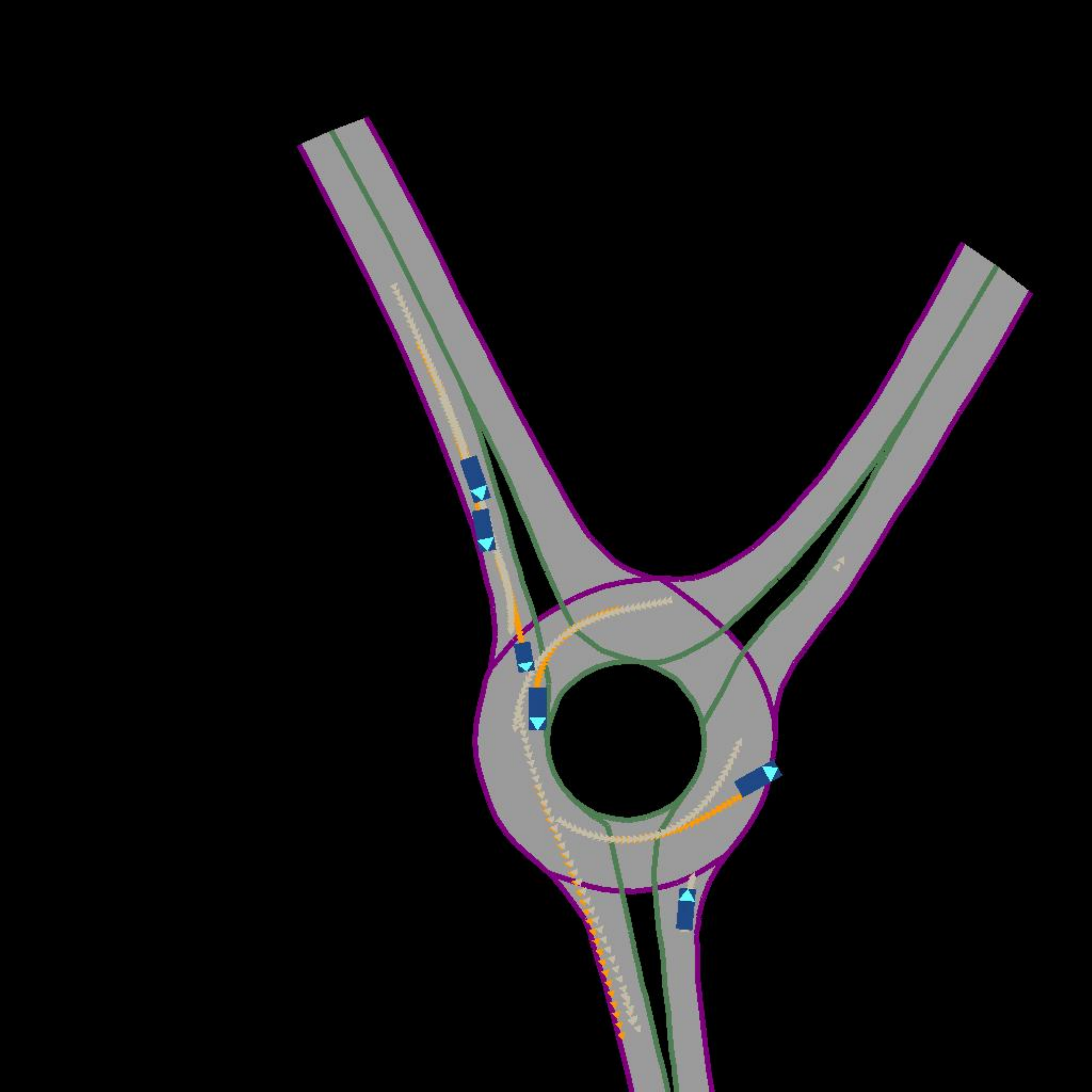}
        \caption*{MBM++}
        \label{}
    \end{subfigure}
    \hfill
    \begin{subfigure}[b]{0.24\textwidth}
        \centering
        \includegraphics[width=1\textwidth]{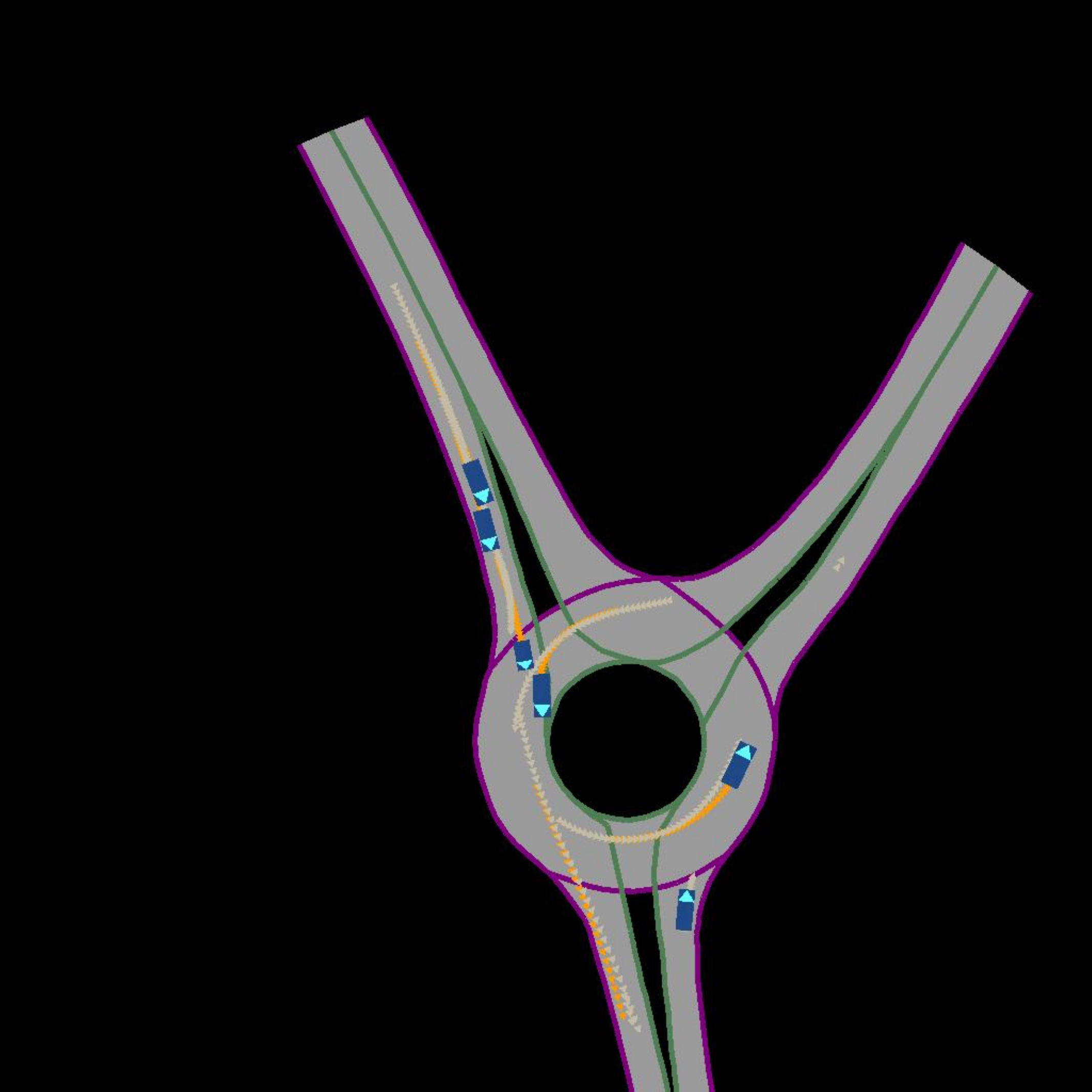}
        \caption*{PIDM}
        \label{}
    \end{subfigure}
    \vspace{0.6em}
    \begin{subfigure}[b]{0.24\textwidth}
        \centering
       \includegraphics[width=1\textwidth]{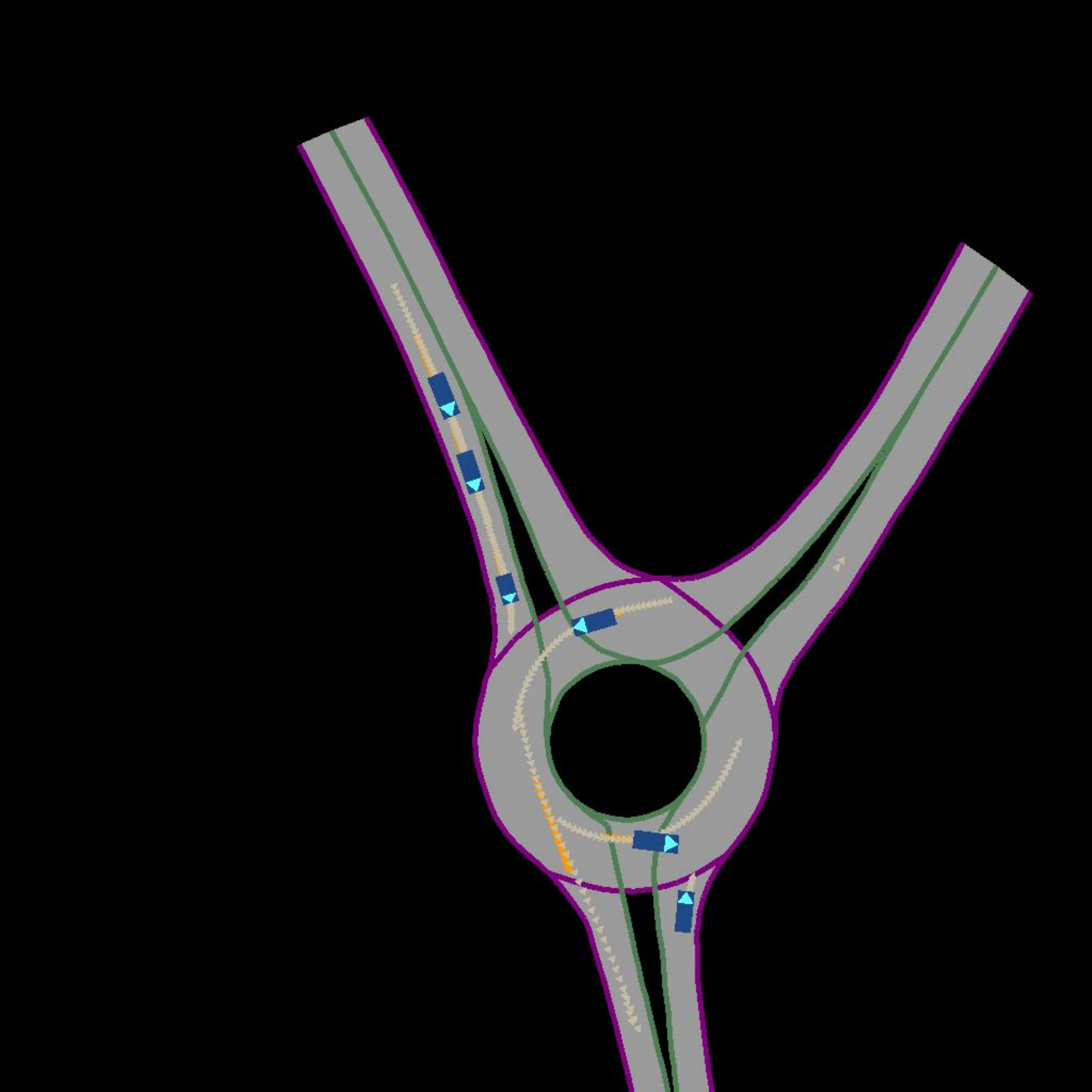}
       \caption*{DPOK}
        \label{}
    \end{subfigure}
    \hfill
    \begin{subfigure}[b]{0.24\textwidth}
        \centering
        \includegraphics[width=1\textwidth]{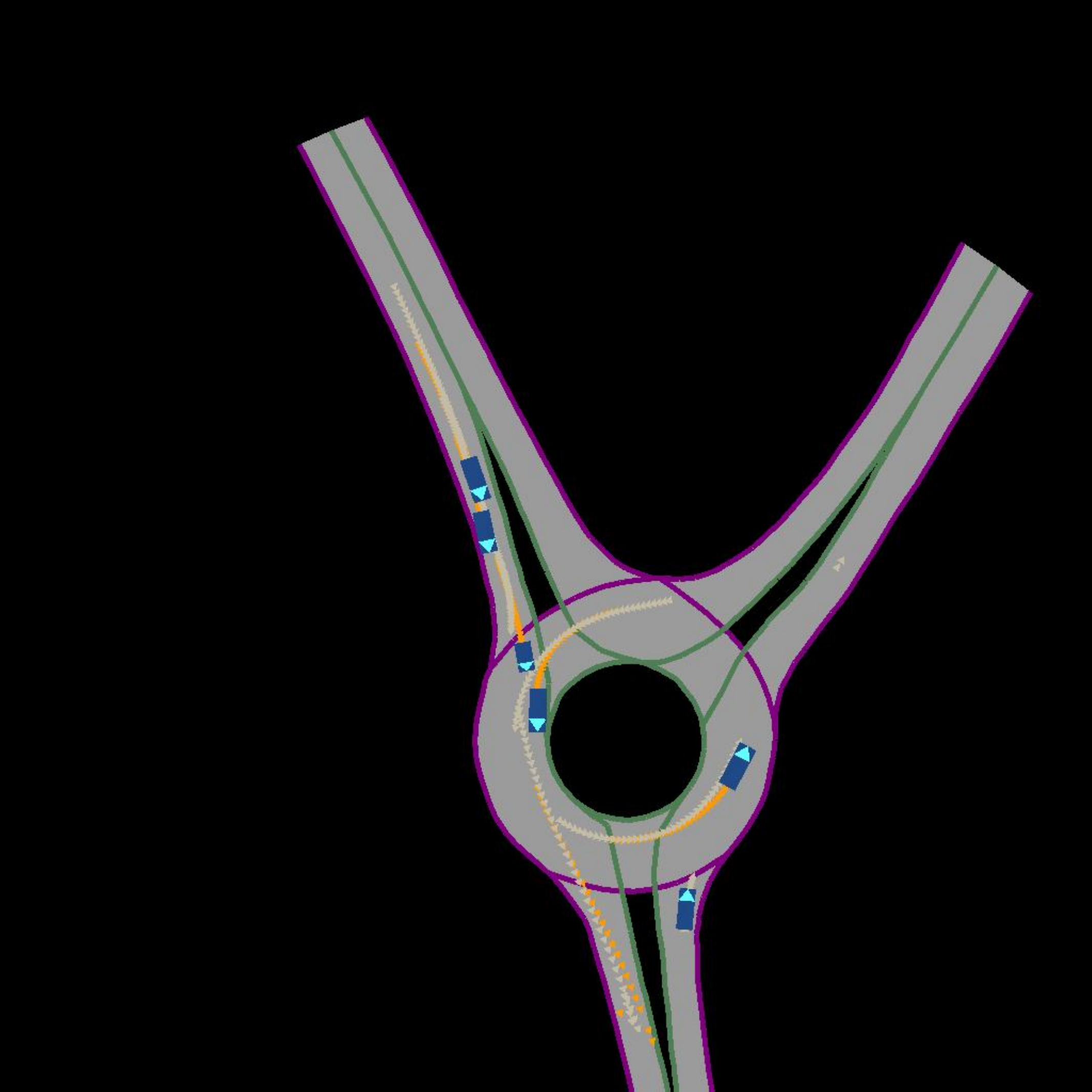}
        \caption*{Rollout-only}
        \label{}
    \end{subfigure}
    \hfill
    \begin{subfigure}[b]{0.24\textwidth}
        \centering
        \includegraphics[width=1\textwidth]{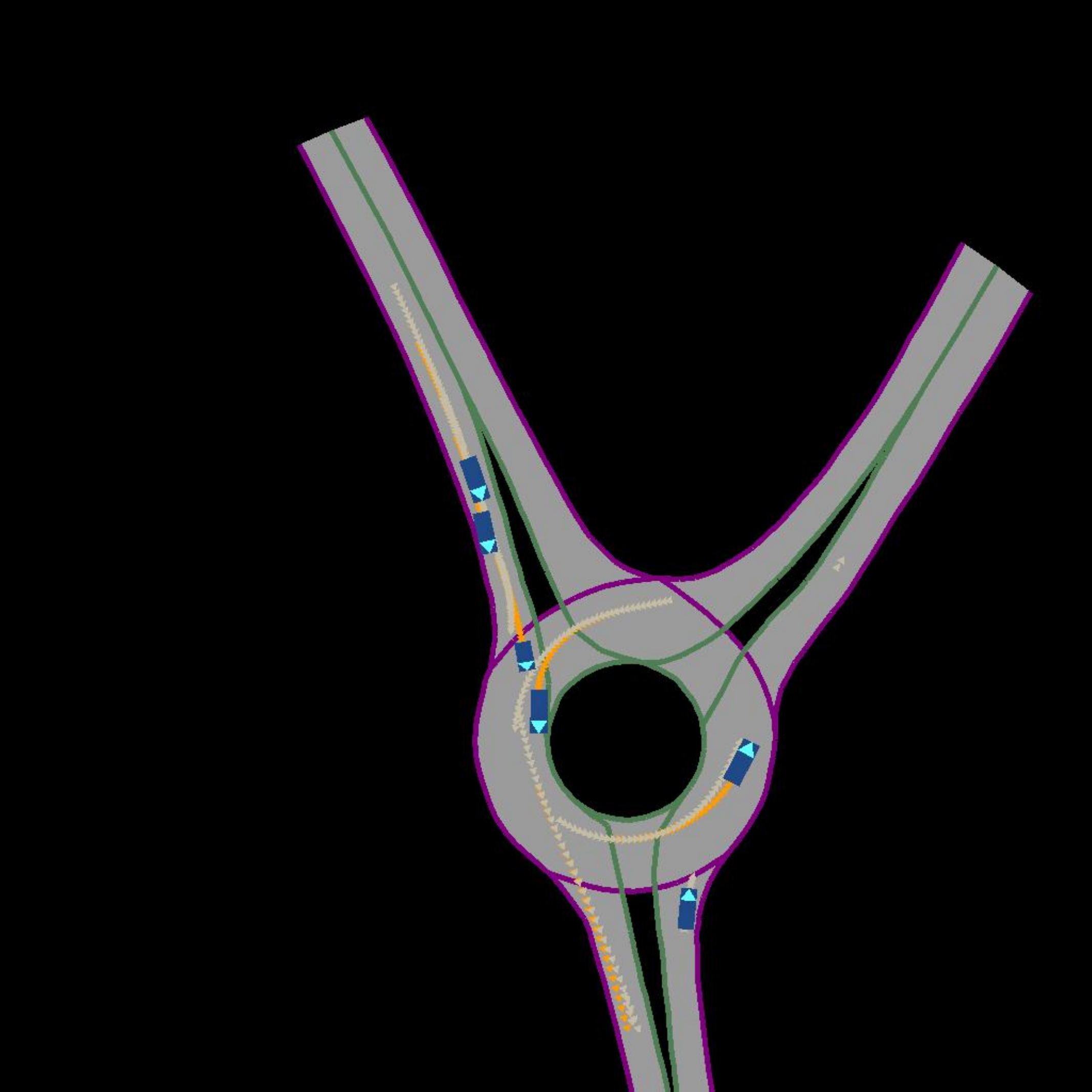}
        \caption*{\method{}}
        \label{}
    \end{subfigure}
    \hfill
    \begin{subfigure}[b]{0.24\textwidth}
        \centering
        \includegraphics[width=1\textwidth]{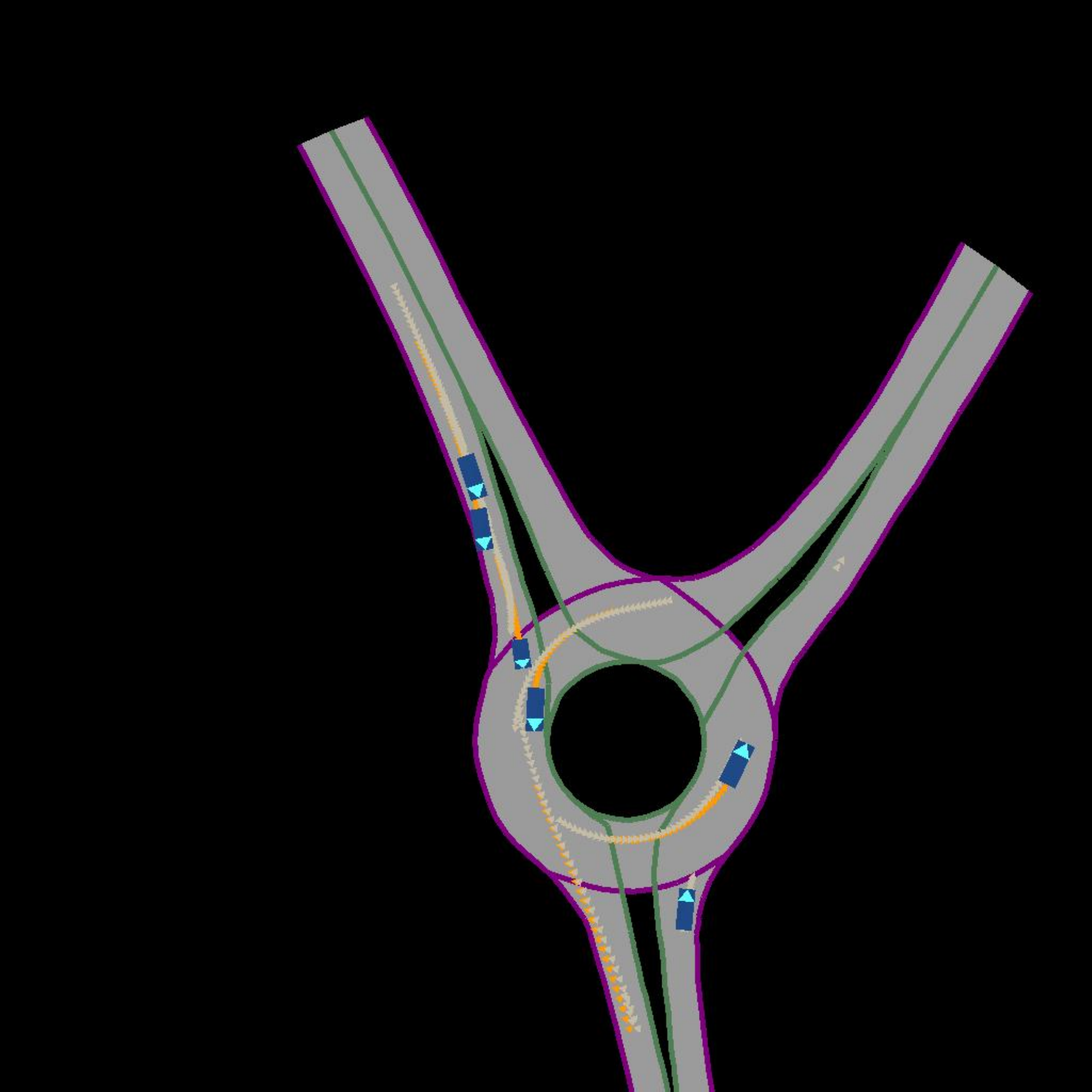}
        \caption*{\method{} (staged)}
        \label{}
    \end{subfigure}
\end{figure*}

\begin{figure*}[!h]
    \centering
    \begin{subfigure}[b]{0.24\textwidth}
        \centering
       \includegraphics[width=1\textwidth]{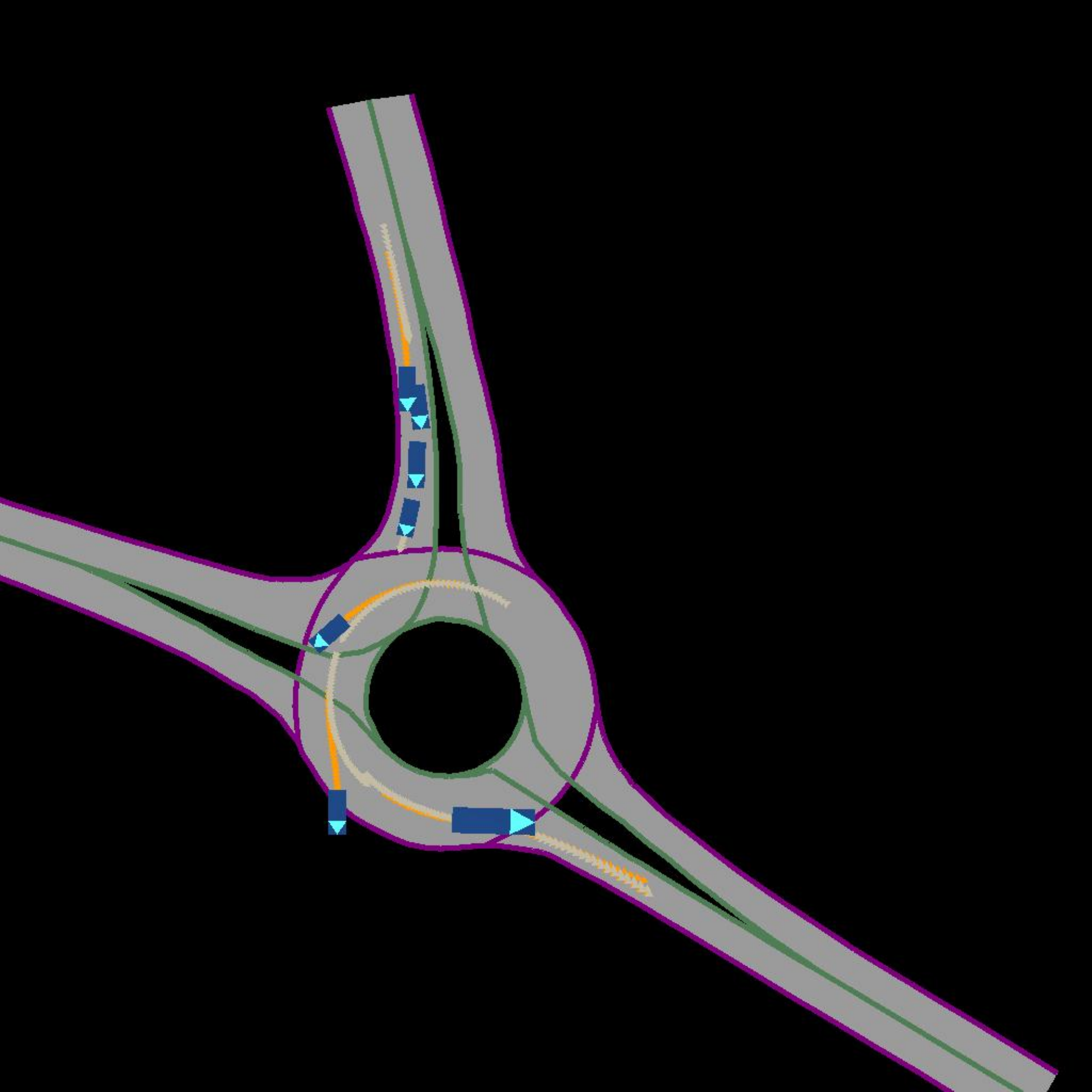}
       \caption*{Standard Diffusion}
        \label{}
    \end{subfigure}
    \hfill
    \begin{subfigure}[b]{0.24\textwidth}
        \centering
        \includegraphics[width=1\textwidth]{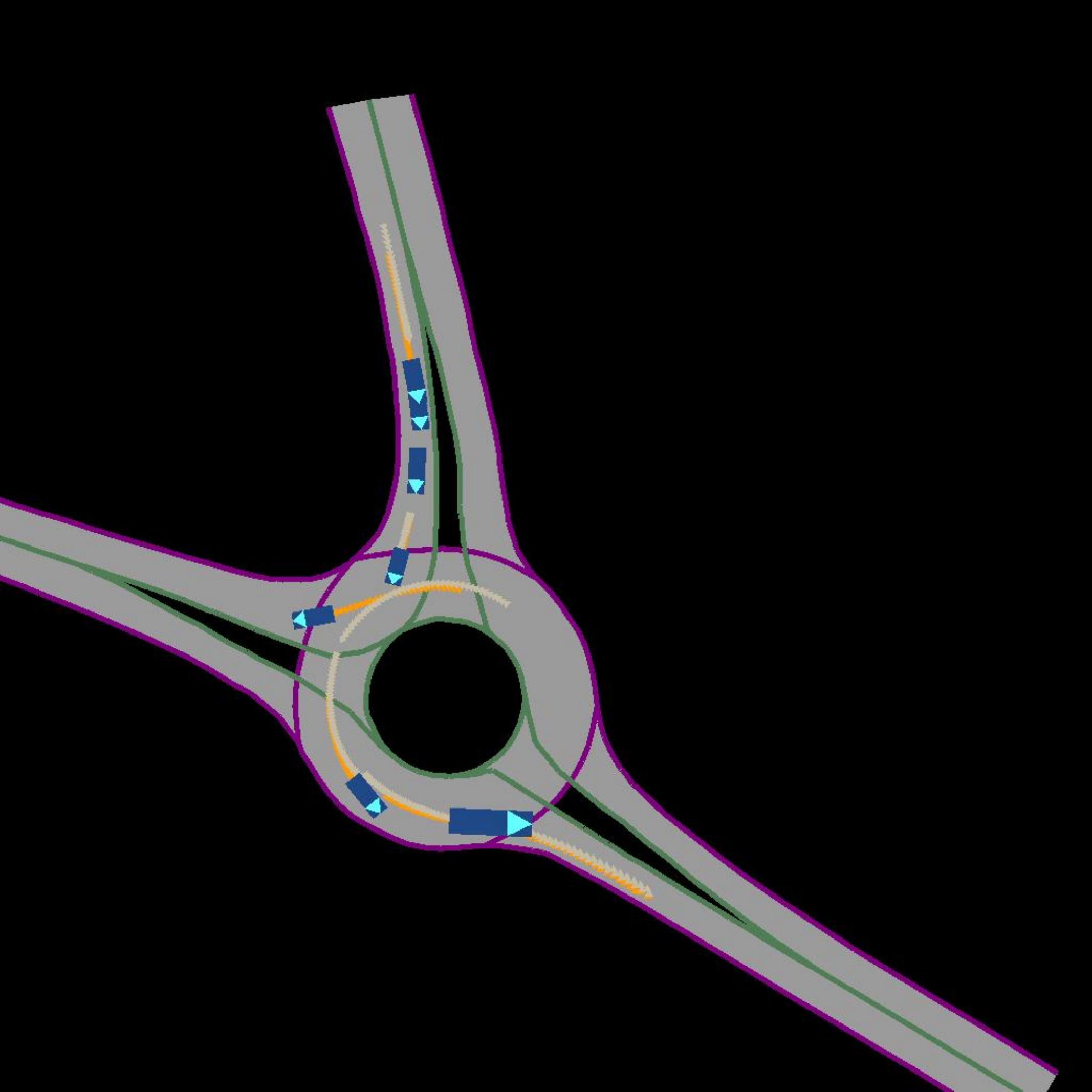}
        \caption*{MPGD w/o projection}
        \label{}
    \end{subfigure}
    \hfill
    \begin{subfigure}[b]{0.24\textwidth}
        \centering
        \includegraphics[width=1\textwidth]{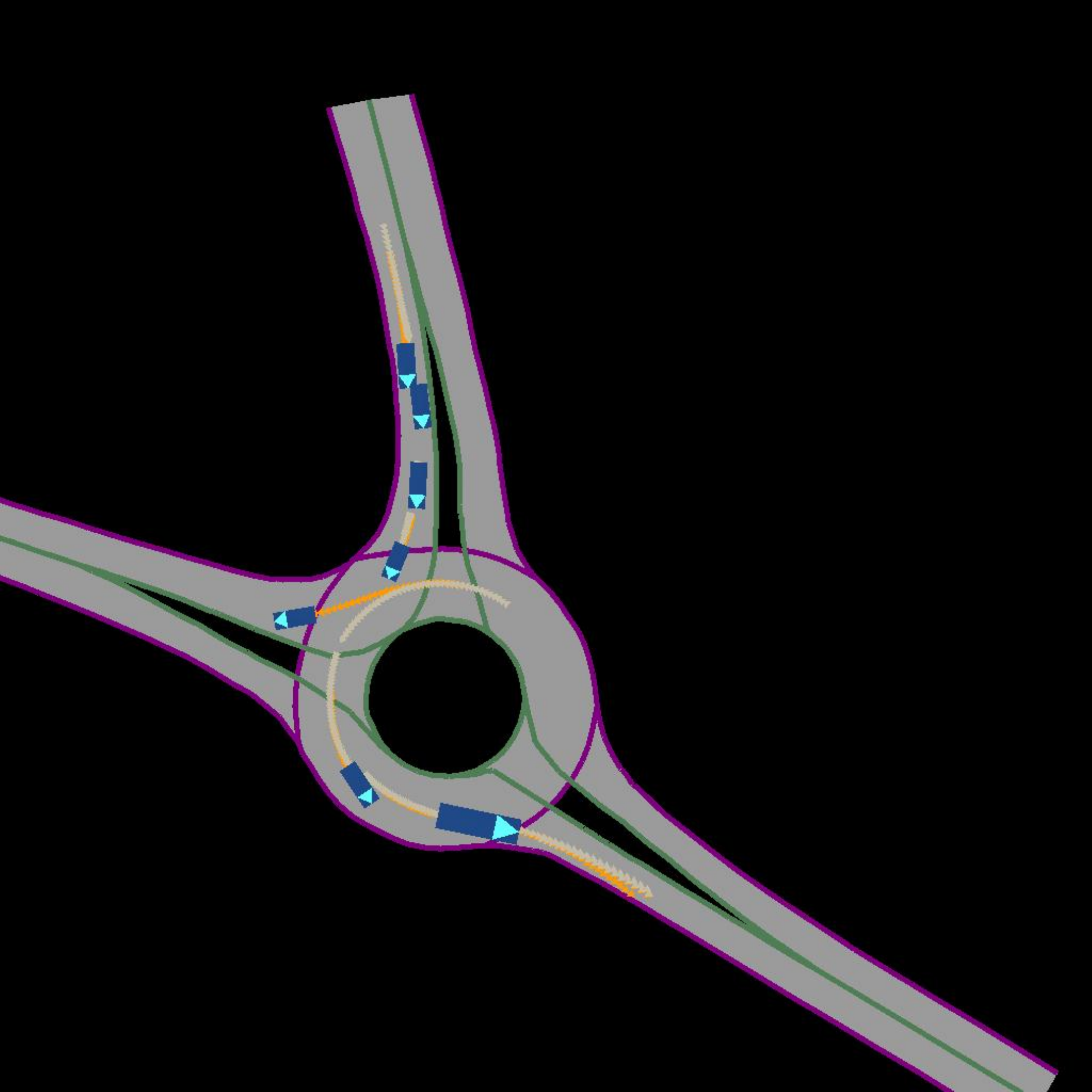}
        \caption*{MBM++}
        \label{}
    \end{subfigure}
    \hfill
    \begin{subfigure}[b]{0.24\textwidth}
        \centering
        \includegraphics[width=1\textwidth]{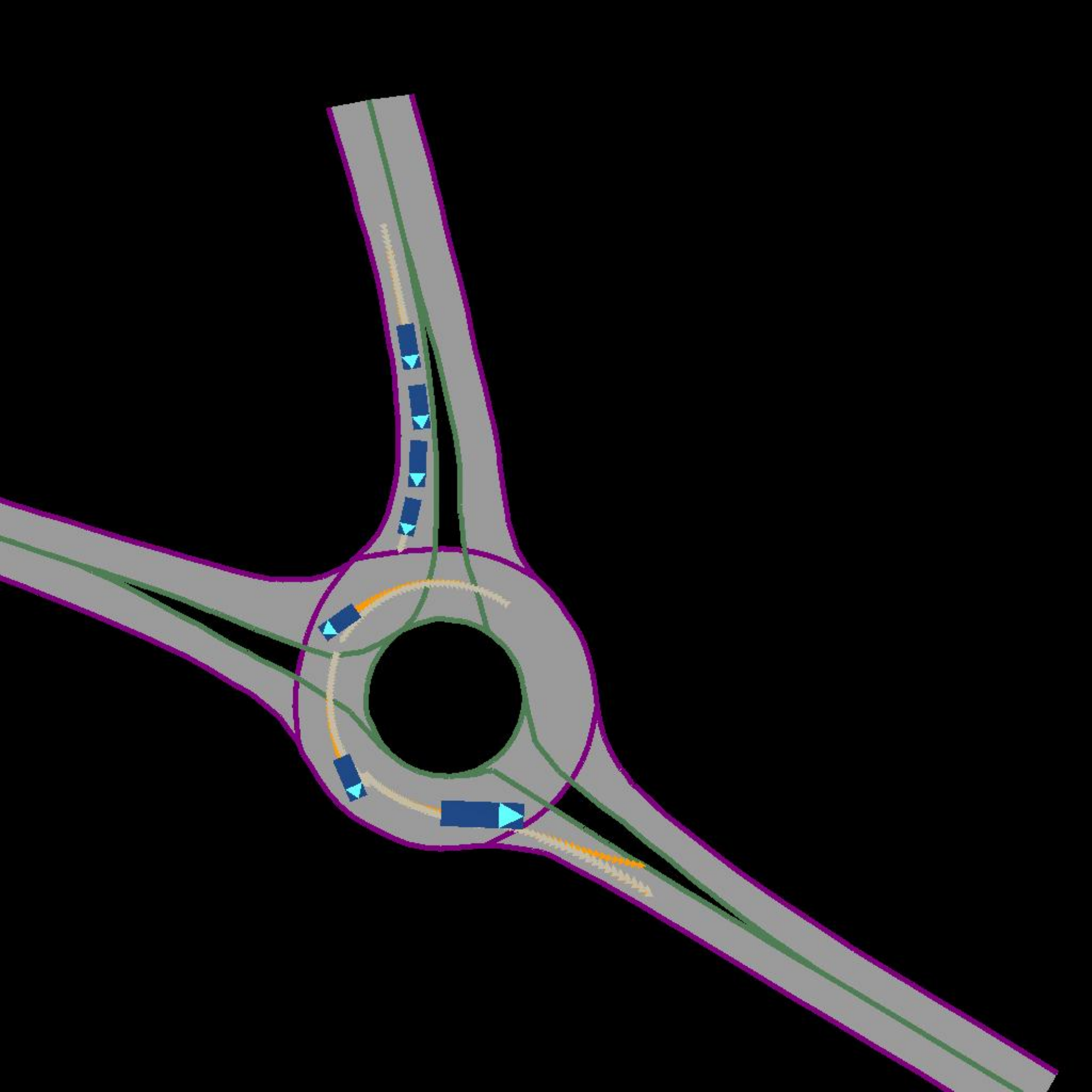}
        \caption*{PIDM}
        \label{}
    \end{subfigure}
    \vspace{0.6em}
    \begin{subfigure}[b]{0.24\textwidth}
        \centering
       \includegraphics[width=1\textwidth]{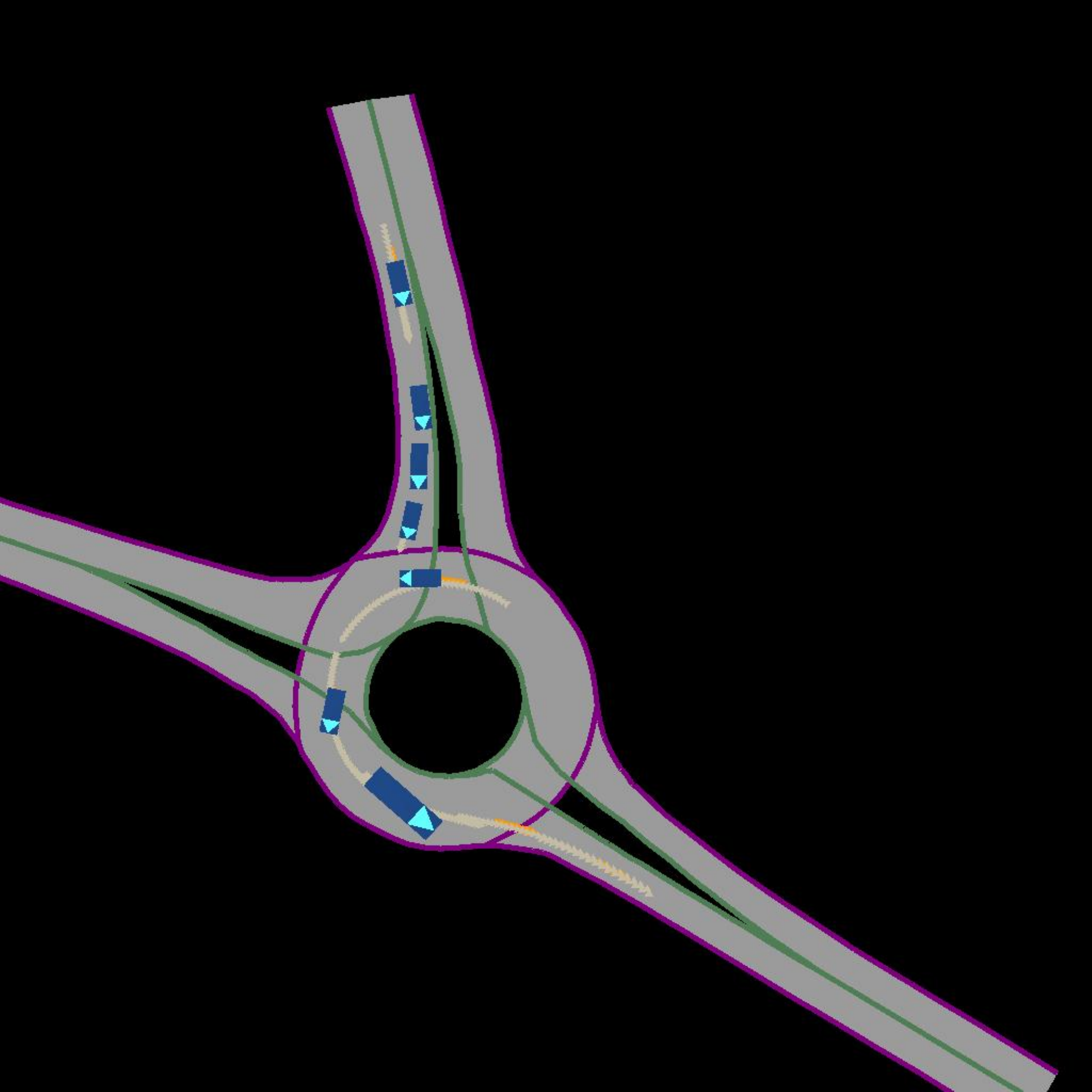}
       \caption*{DPOK}
        \label{}
    \end{subfigure}
    \hfill
    \begin{subfigure}[b]{0.24\textwidth}
        \centering
        \includegraphics[width=1\textwidth]{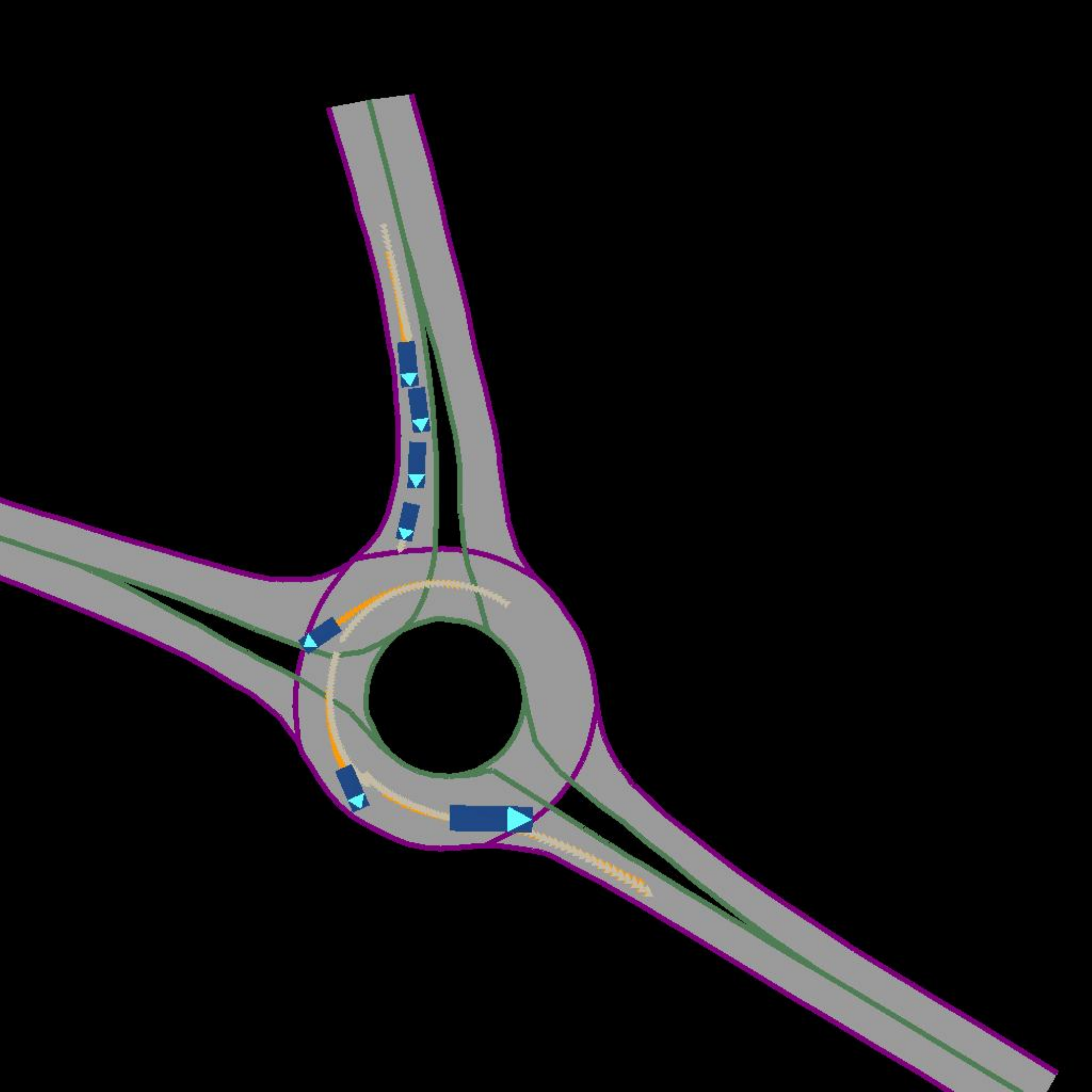}
        \caption*{Rollout-only}
        \label{}
    \end{subfigure}
    \hfill
    \begin{subfigure}[b]{0.24\textwidth}
        \centering
        \includegraphics[width=1\textwidth]{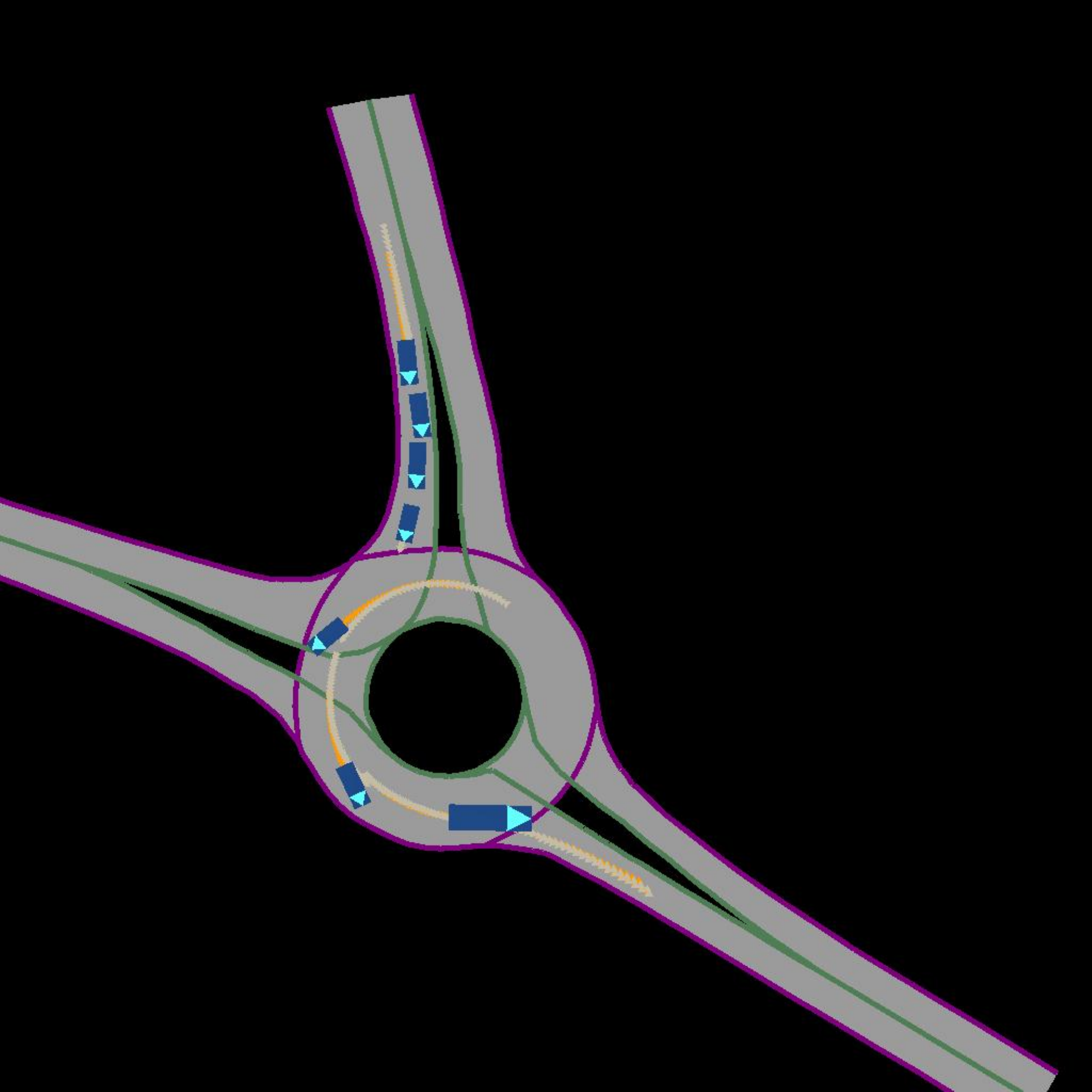}
        \caption*{\method{}}
        \label{}
    \end{subfigure}
    \hfill
    \begin{subfigure}[b]{0.24\textwidth}
        \centering
        \includegraphics[width=1\textwidth]{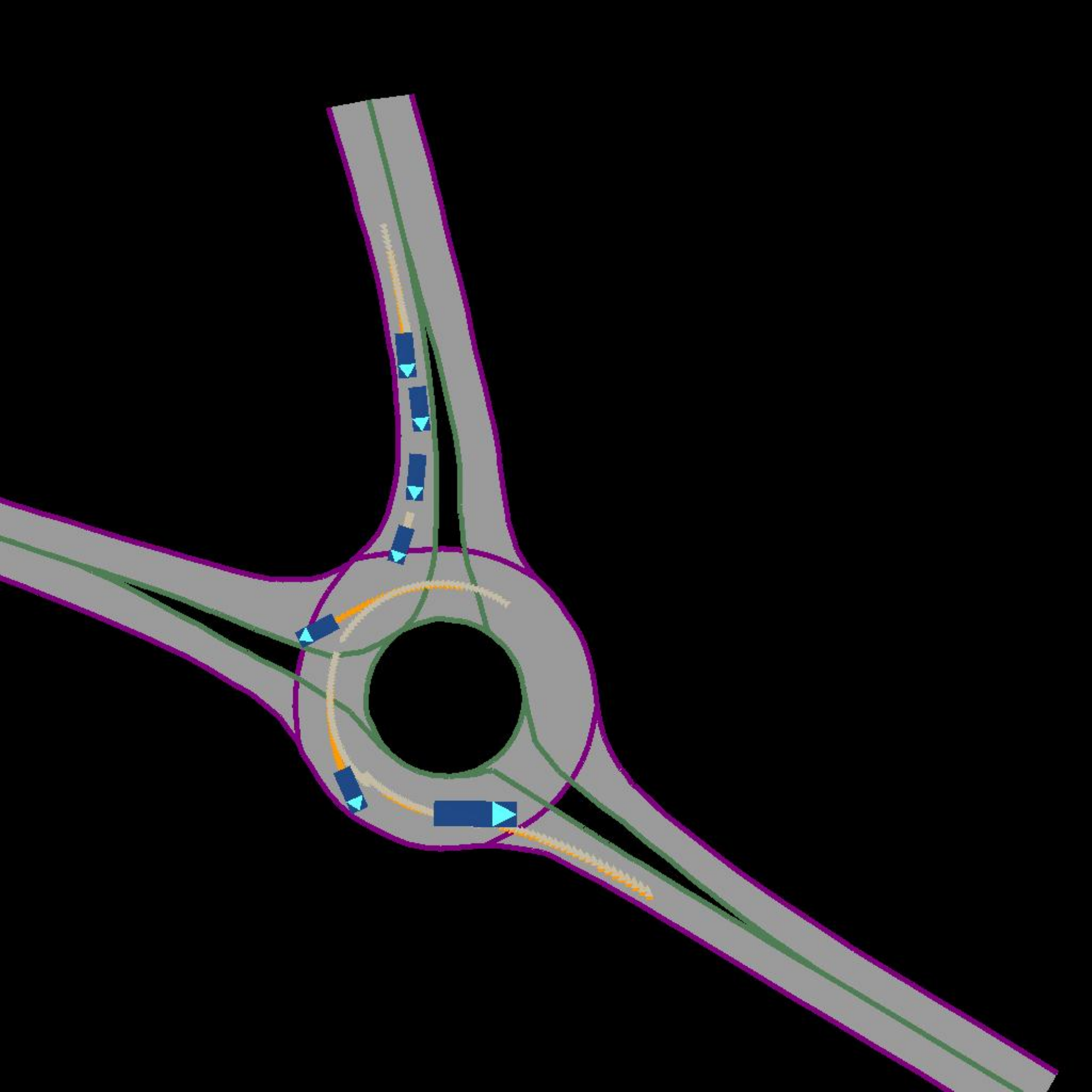}
        \caption*{\method{} (staged)}
        \label{}
    \end{subfigure}
    
\end{figure*}

\begin{figure*}[!h]
    \centering
    \begin{subfigure}[b]{0.24\textwidth}
        \centering
       \includegraphics[width=1\textwidth]{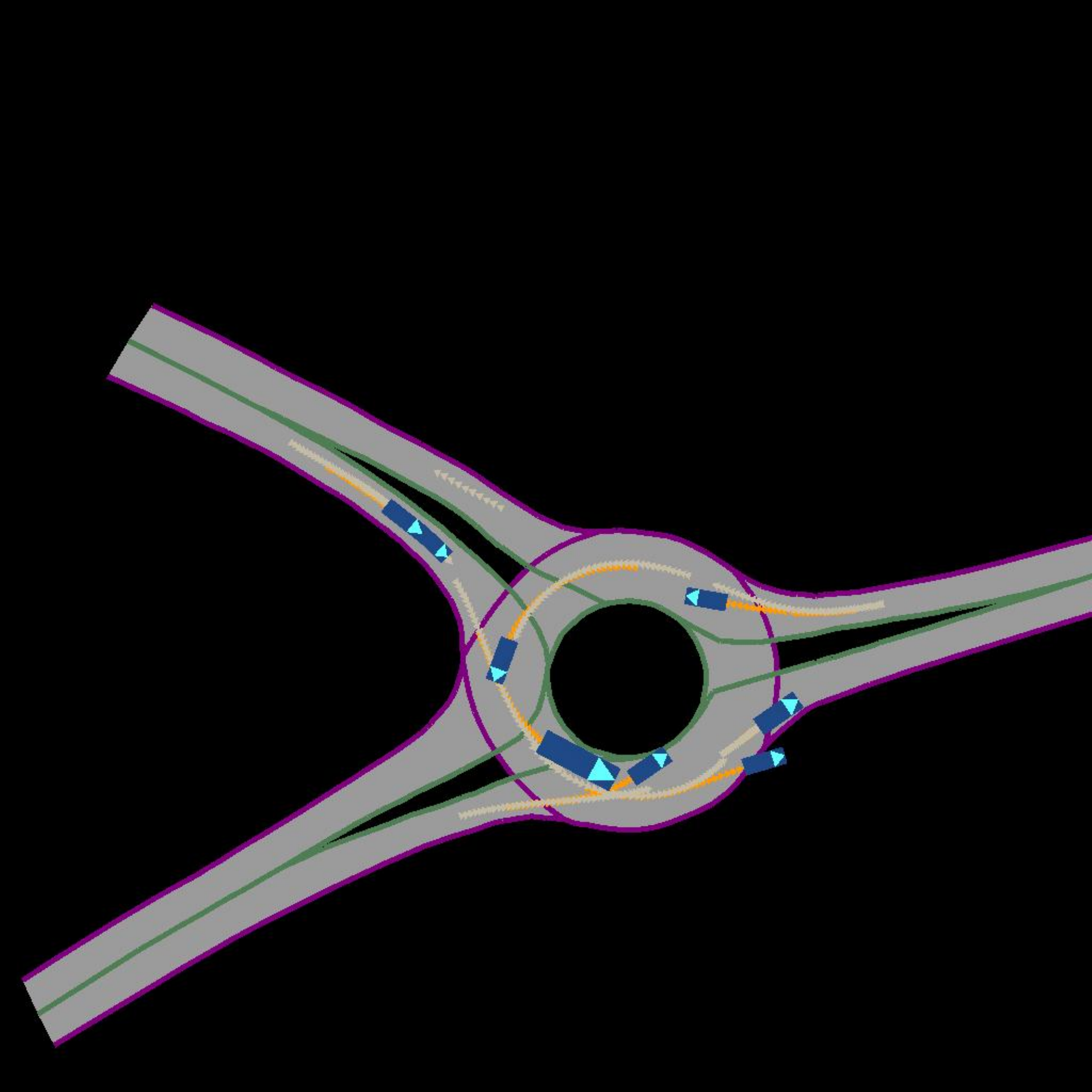}
       \caption*{Standard Diffusion}
        \label{}
    \end{subfigure}
    \hfill
    \begin{subfigure}[b]{0.24\textwidth}
        \centering
        \includegraphics[width=1\textwidth]{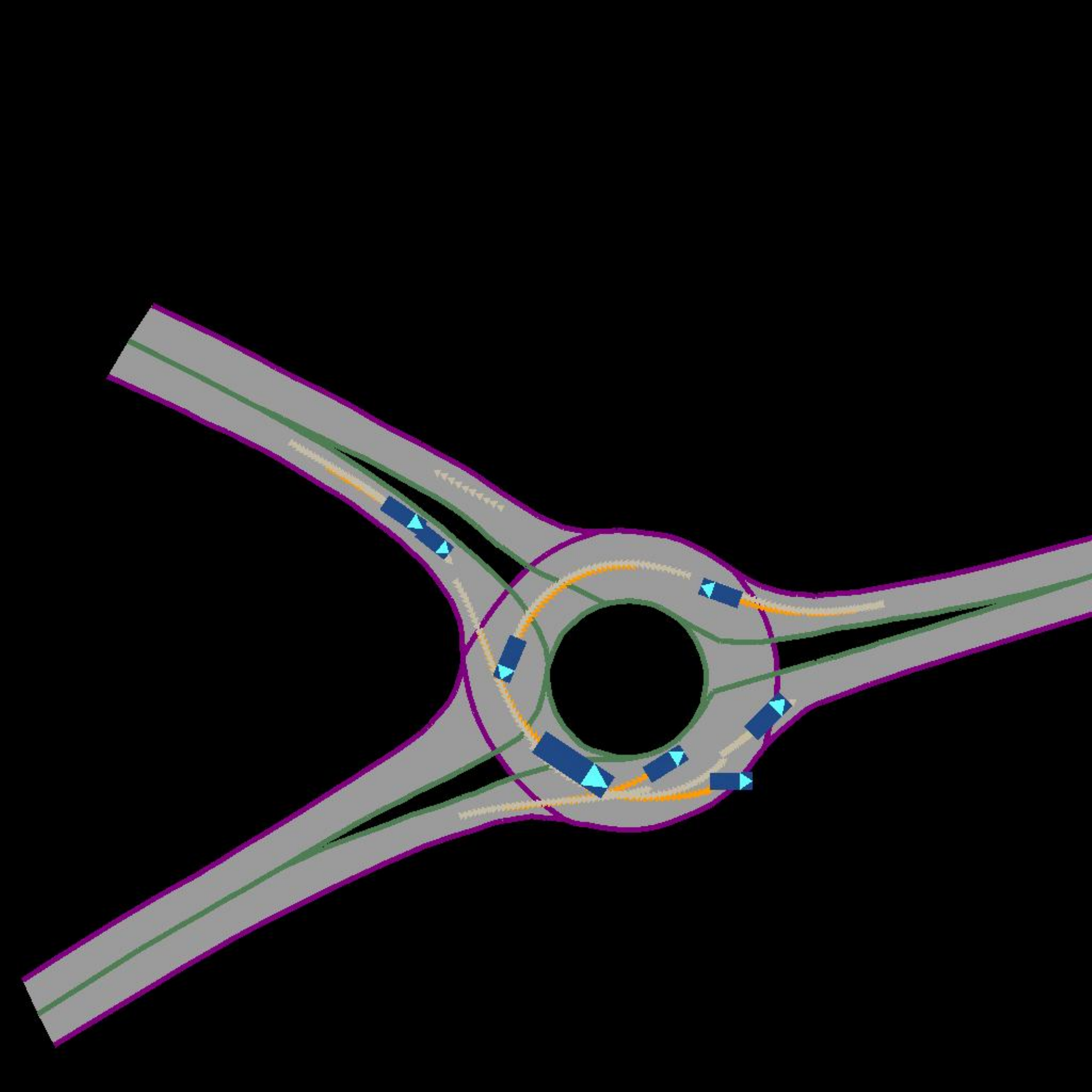}
        \caption*{MPGD w/o projection}
        \label{}
    \end{subfigure}
    \hfill
    \begin{subfigure}[b]{0.24\textwidth}
        \centering
        \includegraphics[width=1\textwidth]{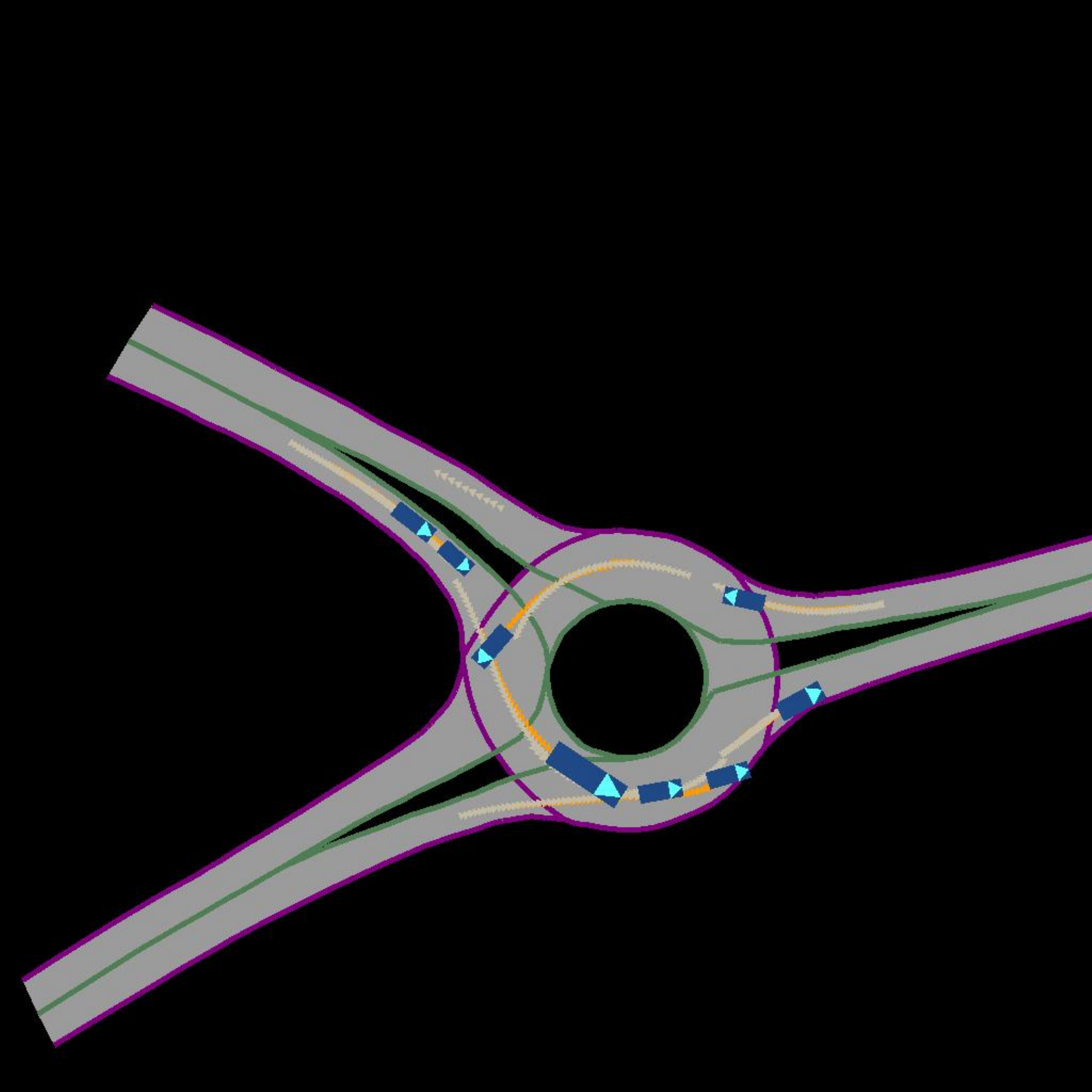}
        \caption*{MBM++}
        \label{}
    \end{subfigure}
    \hfill
    \begin{subfigure}[b]{0.24\textwidth}
        \centering
        \includegraphics[width=1\textwidth]{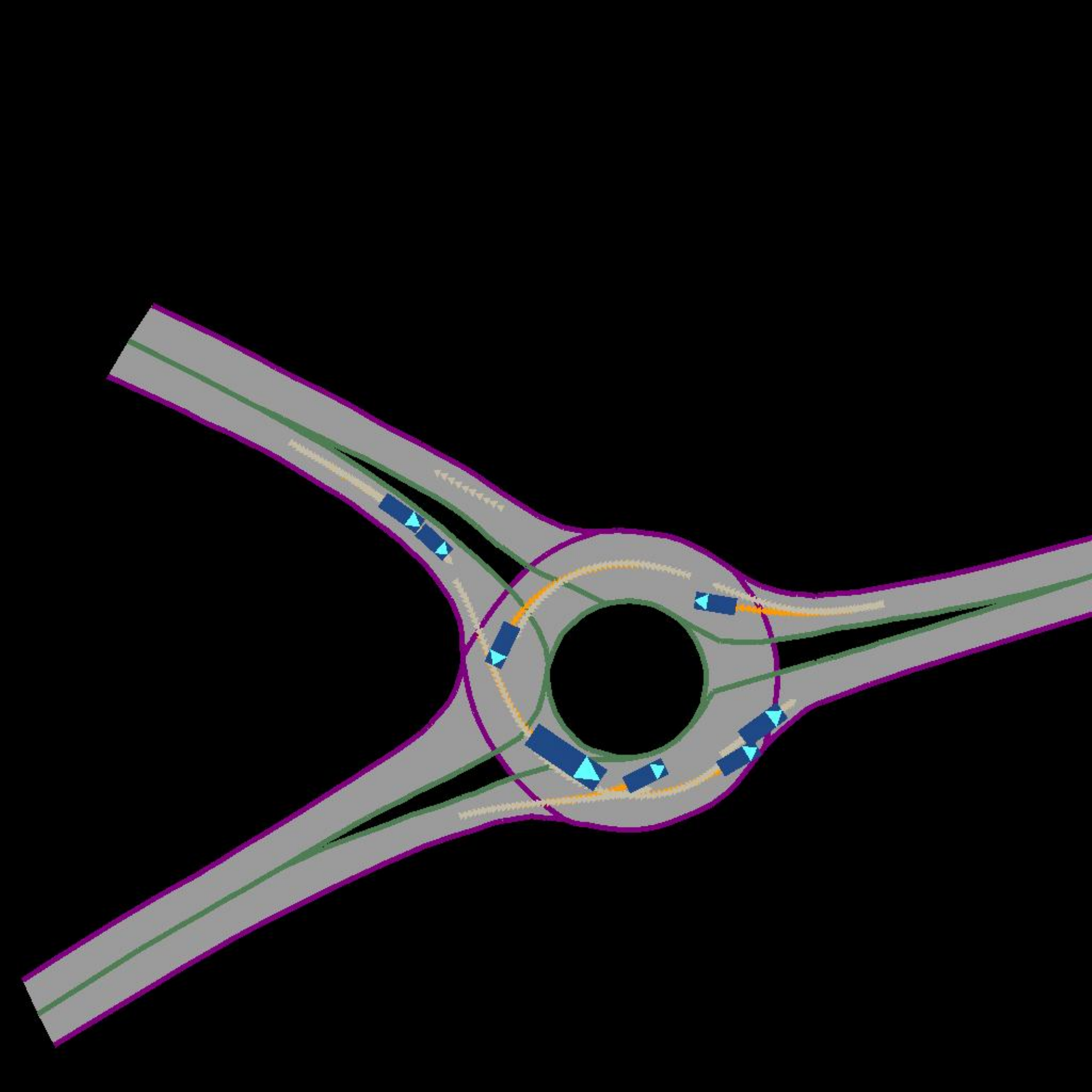}
        \caption*{PIDM}
        \label{}
    \end{subfigure}
    \vspace{0.6em}
    \begin{subfigure}[b]{0.24\textwidth}
        \centering
       \includegraphics[width=1\textwidth]{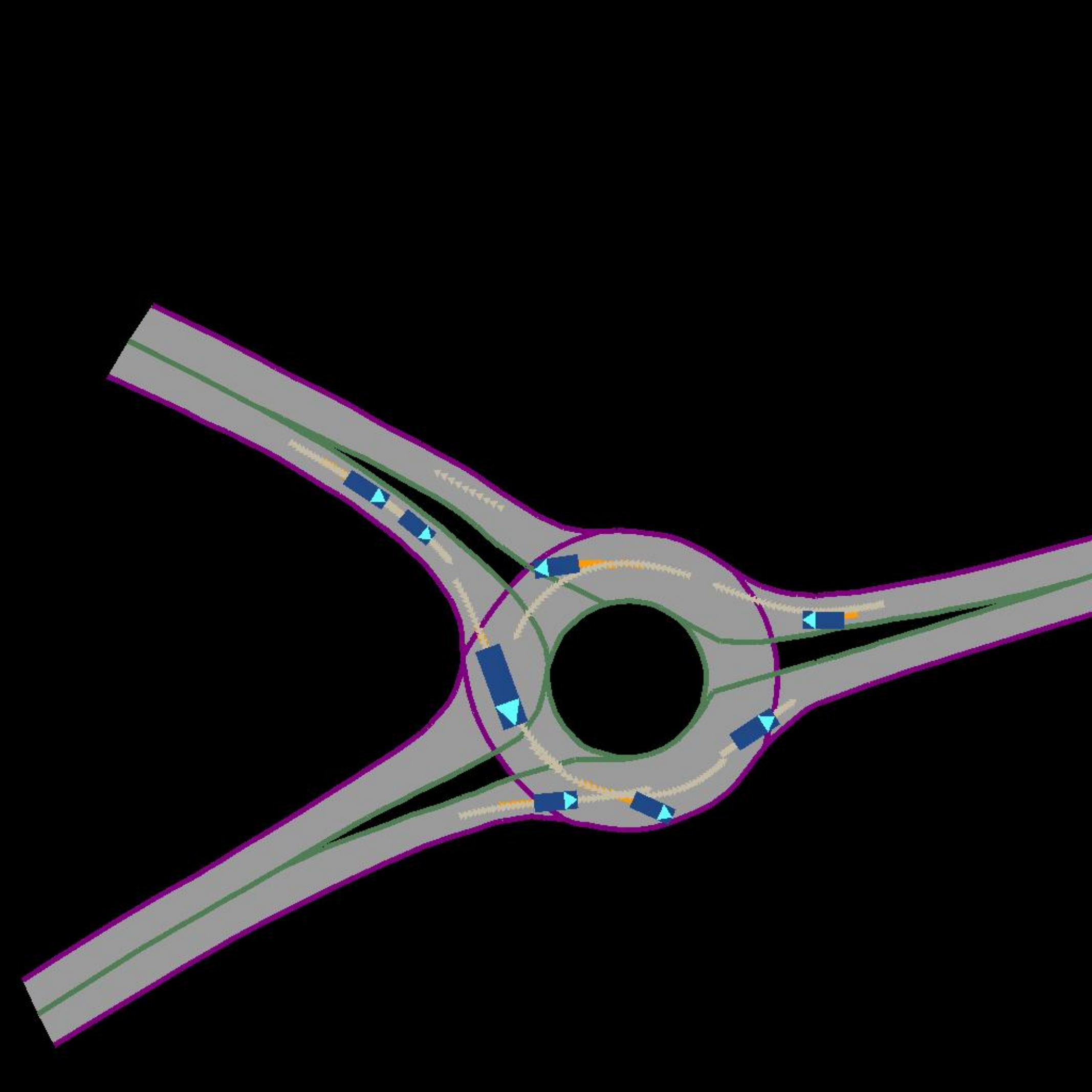}
       \caption*{DPOK}
        \label{}
    \end{subfigure}
    \hfill
    \begin{subfigure}[b]{0.24\textwidth}
        \centering
        \includegraphics[width=1\textwidth]{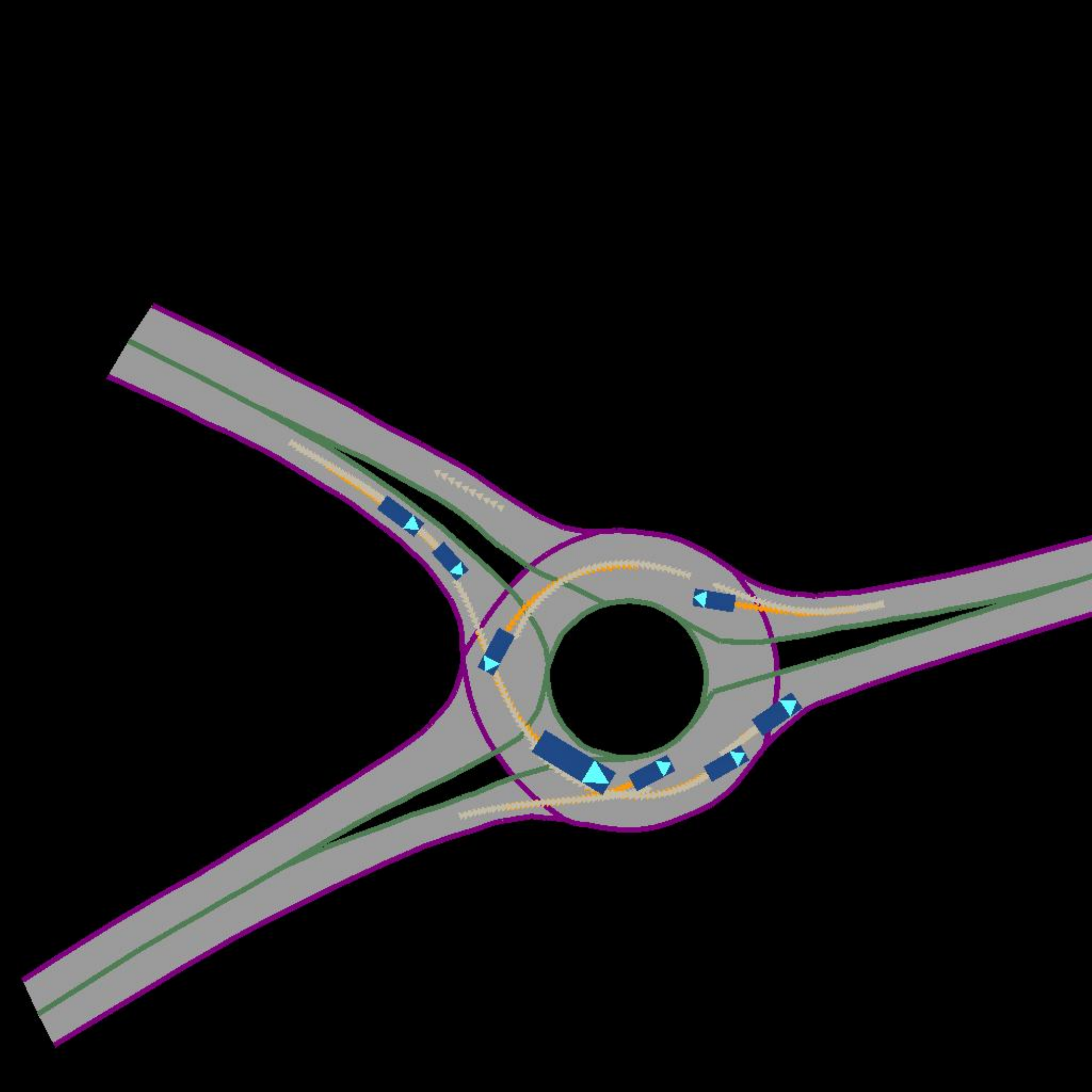}
        \caption*{Rollout-only}
        \label{}
    \end{subfigure}
    \hfill
    \begin{subfigure}[b]{0.24\textwidth}
        \centering
        \includegraphics[width=1\textwidth]{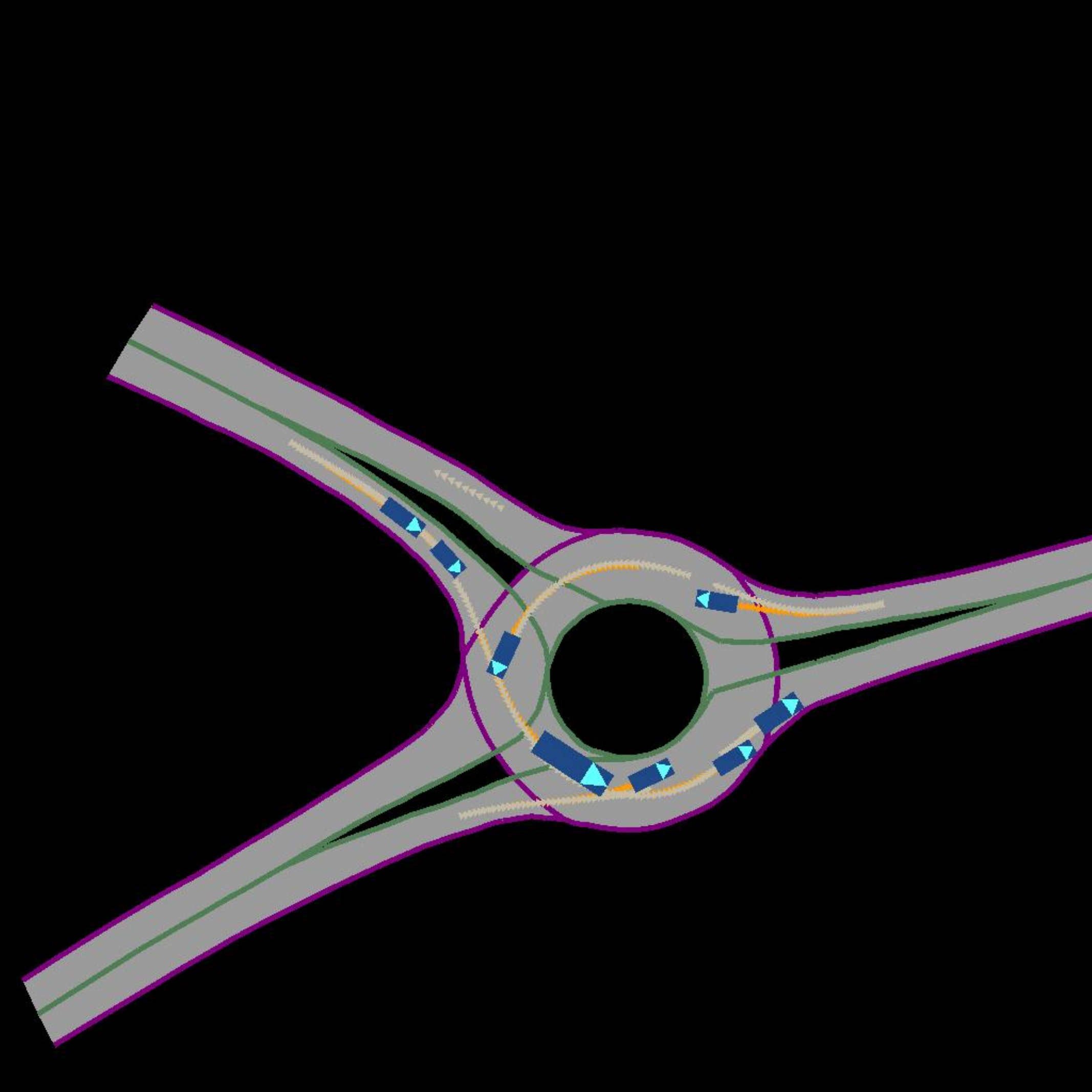
        }
        \caption*{\method{}}
        \label{}
    \end{subfigure}
    \hfill
    \begin{subfigure}[b]{0.24\textwidth}
        \centering
        \includegraphics[width=1\textwidth]{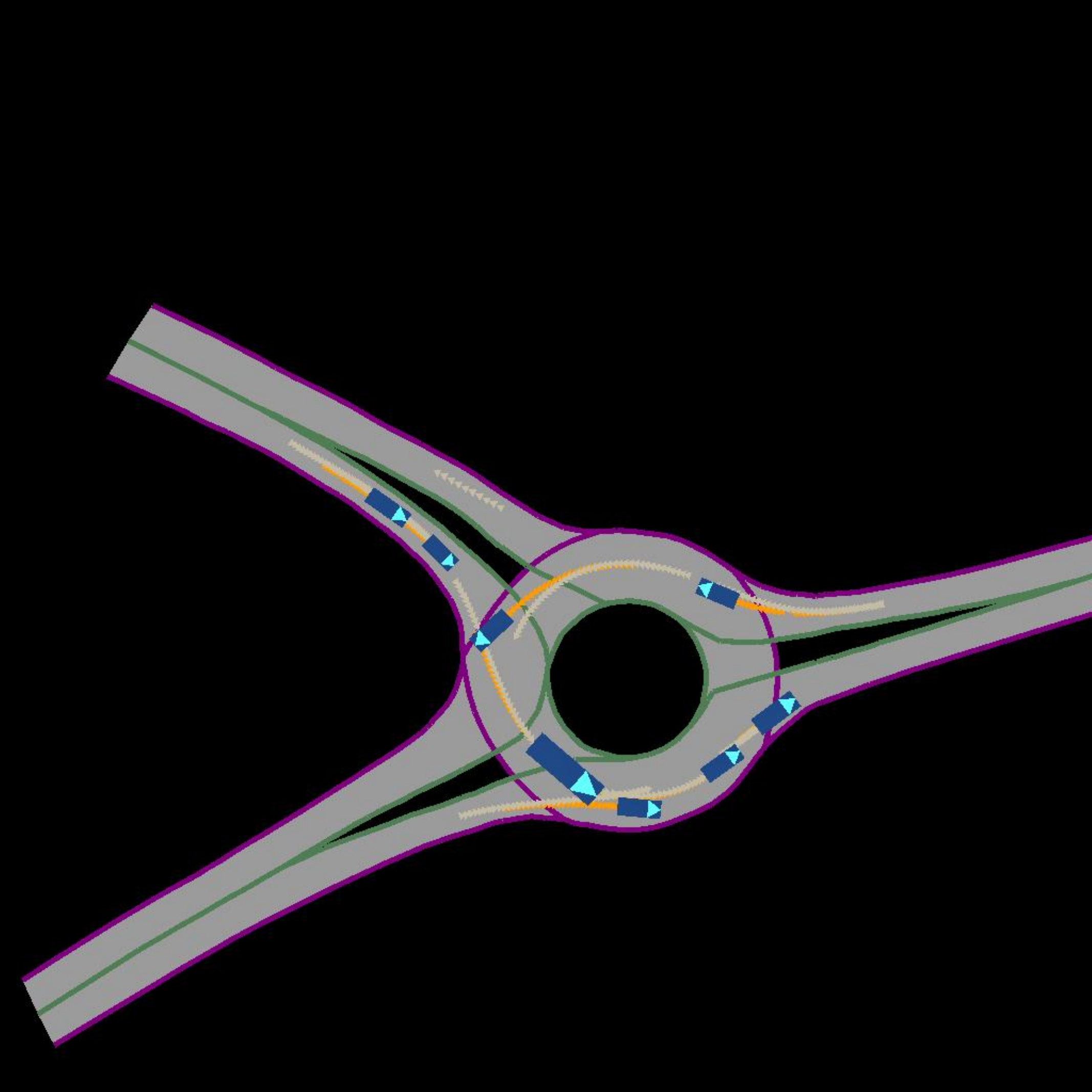}
        \caption*{\method{} (staged)}
        \label{}
    \end{subfigure}
    
\end{figure*}

\begin{figure*}[!h]
    \centering
    \begin{subfigure}[b]{0.24\textwidth}
        \centering
       \includegraphics[width=1\textwidth]{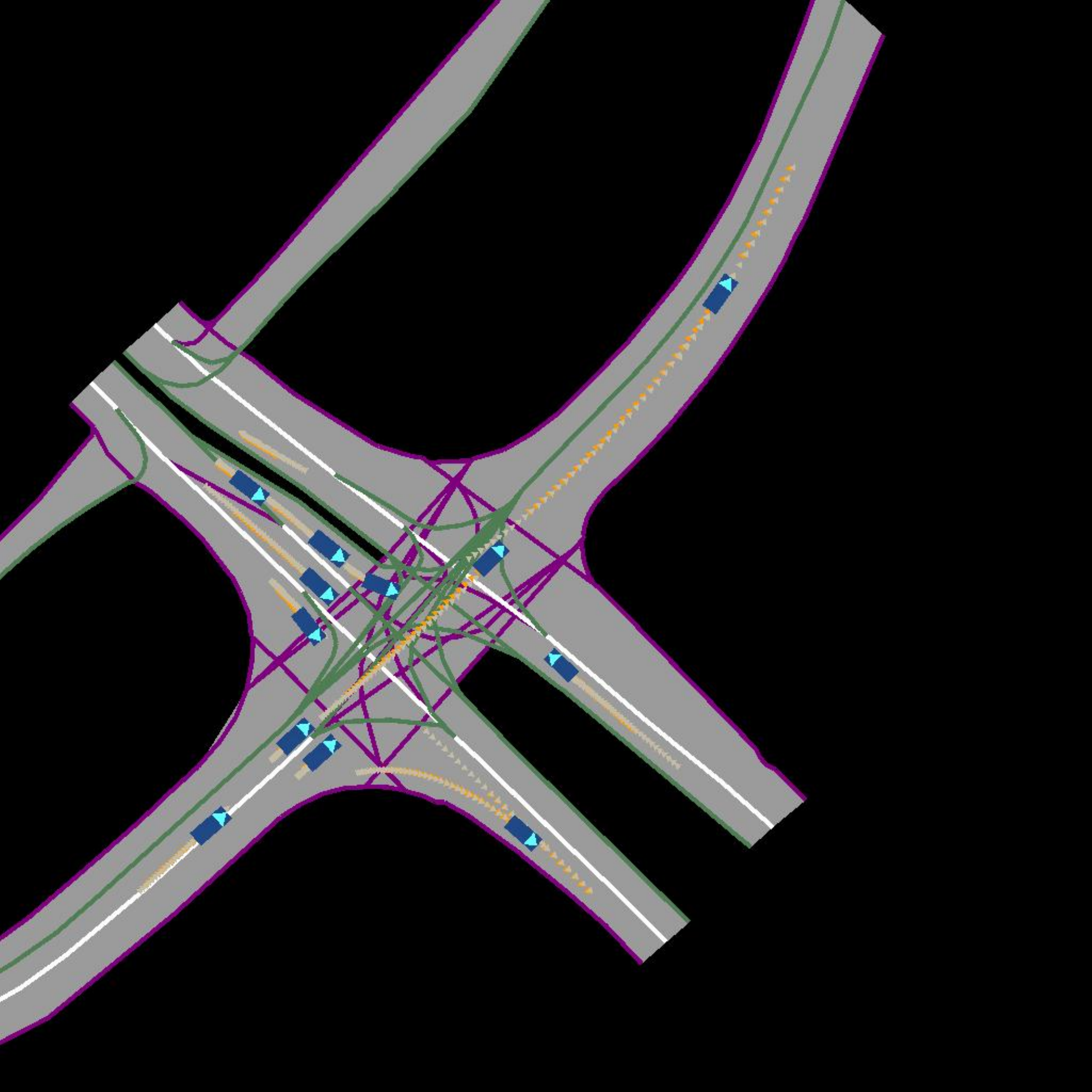}
       \caption*{Standard Diffusion}
        \label{}
    \end{subfigure}
    \hfill
    \begin{subfigure}[b]{0.24\textwidth}
        \centering
        \includegraphics[width=1\textwidth]{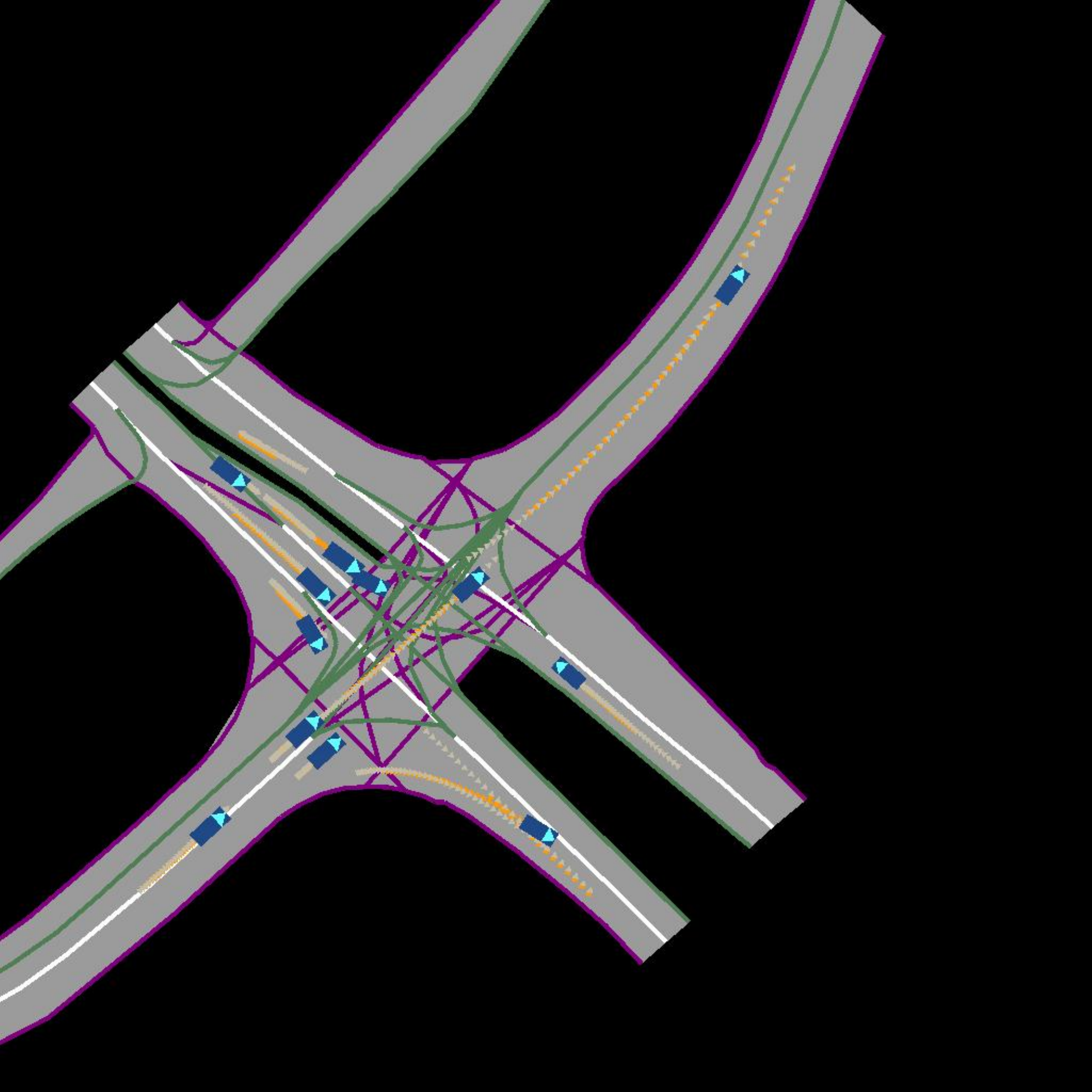}
        \caption*{MPGD w/o projection}
        \label{}
    \end{subfigure}
    \hfill
    \begin{subfigure}[b]{0.24\textwidth}
        \centering
        \includegraphics[width=1\textwidth]{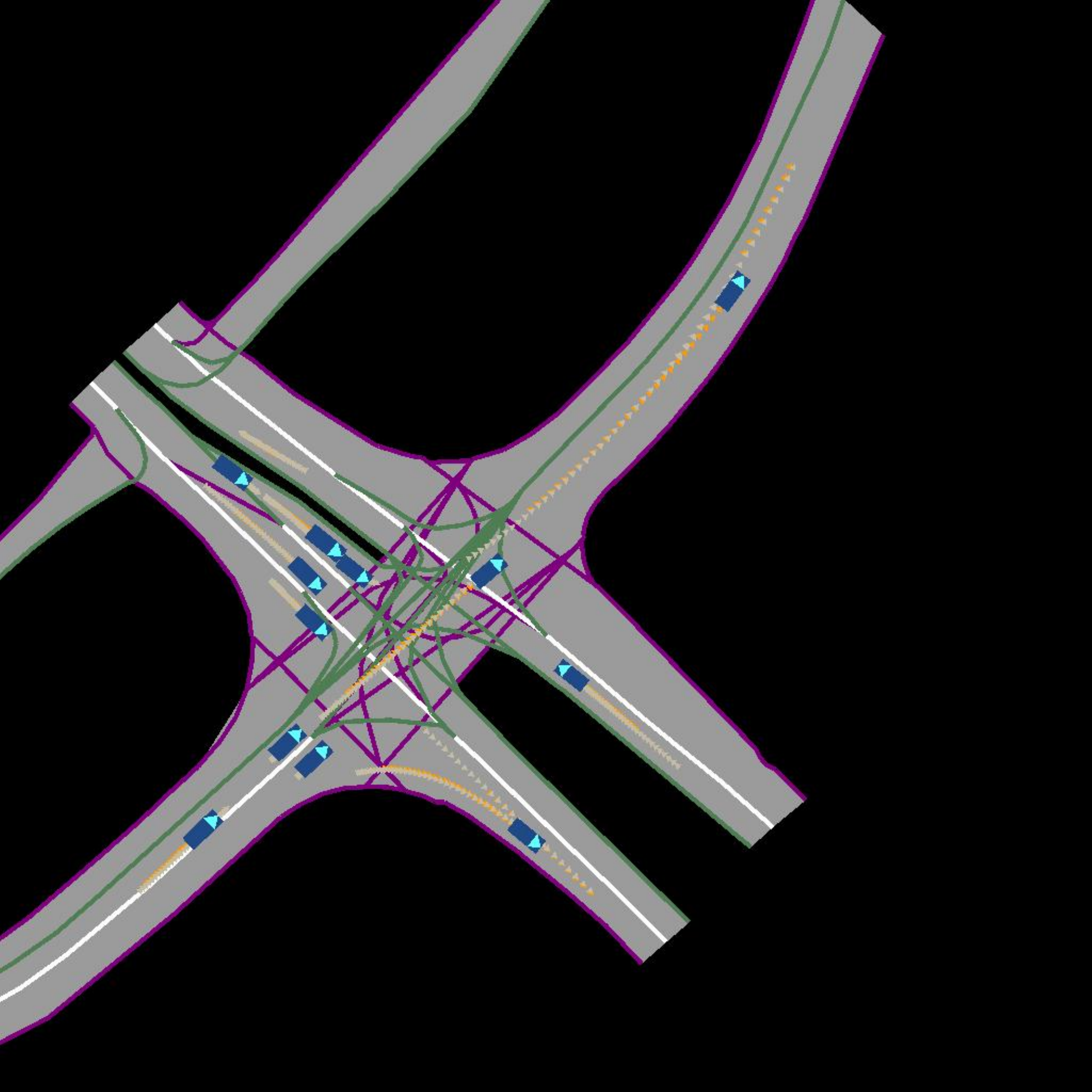}
        \caption*{MBM++}
        \label{}
    \end{subfigure}
    \hfill
    \begin{subfigure}[b]{0.24\textwidth}
        \centering
        \includegraphics[width=1\textwidth]{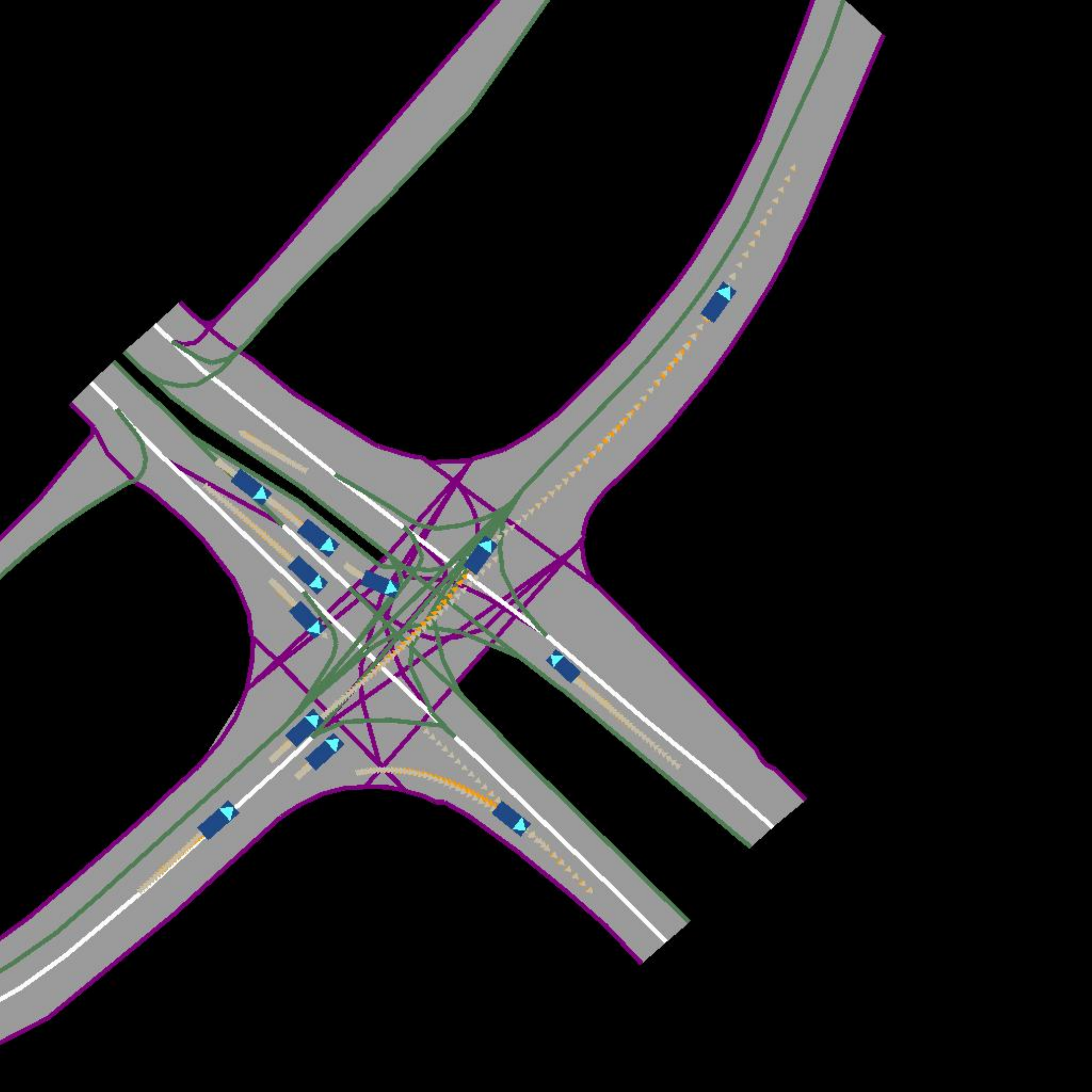}
        \caption*{PIDM}
        \label{}
    \end{subfigure}
    \vspace{0.6em}
    \begin{subfigure}[b]{0.24\textwidth}
        \centering
       \includegraphics[width=1\textwidth]{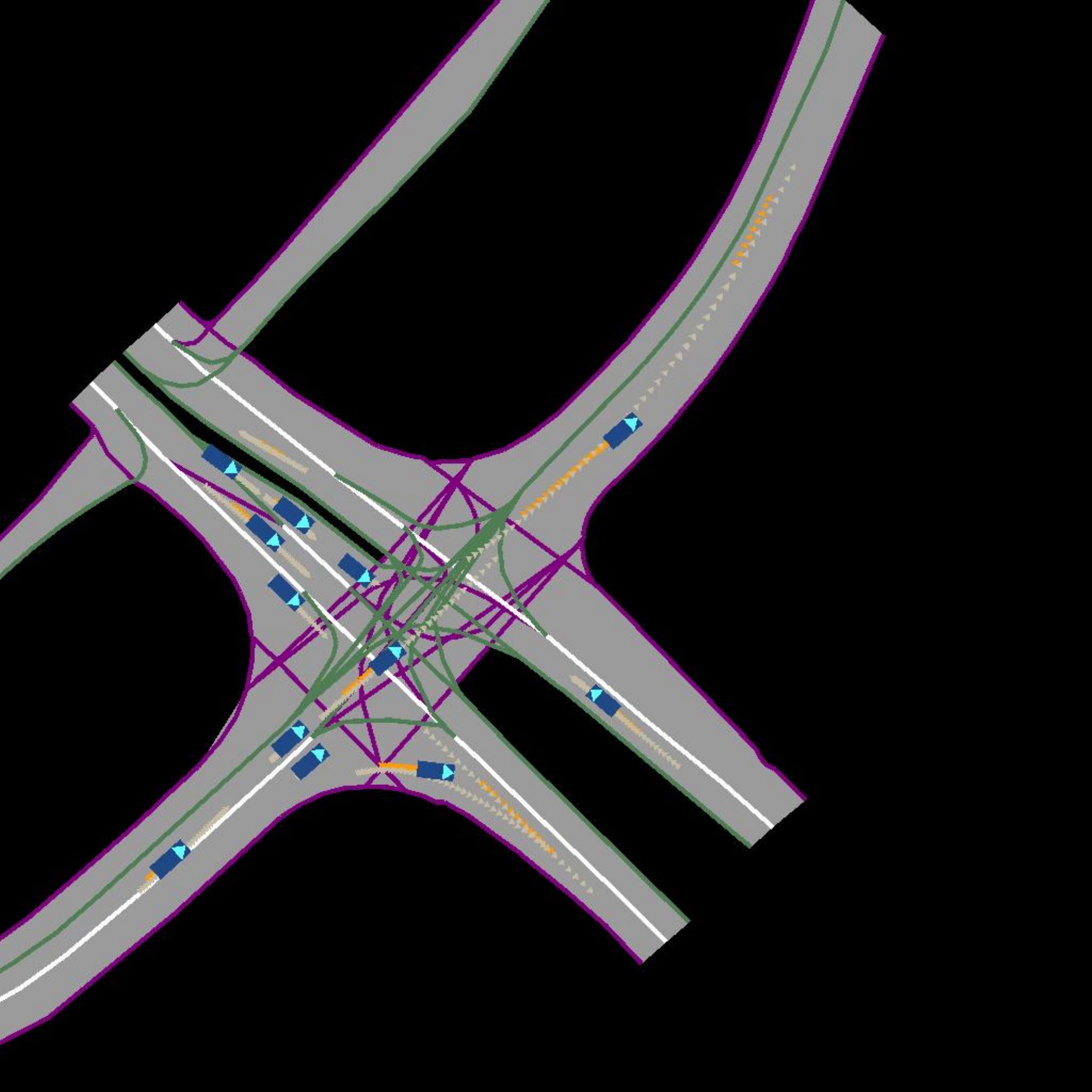}
       \caption*{DPOK}
        \label{}
    \end{subfigure}
    \hfill
    \begin{subfigure}[b]{0.24\textwidth}
        \centering
        \includegraphics[width=1\textwidth]{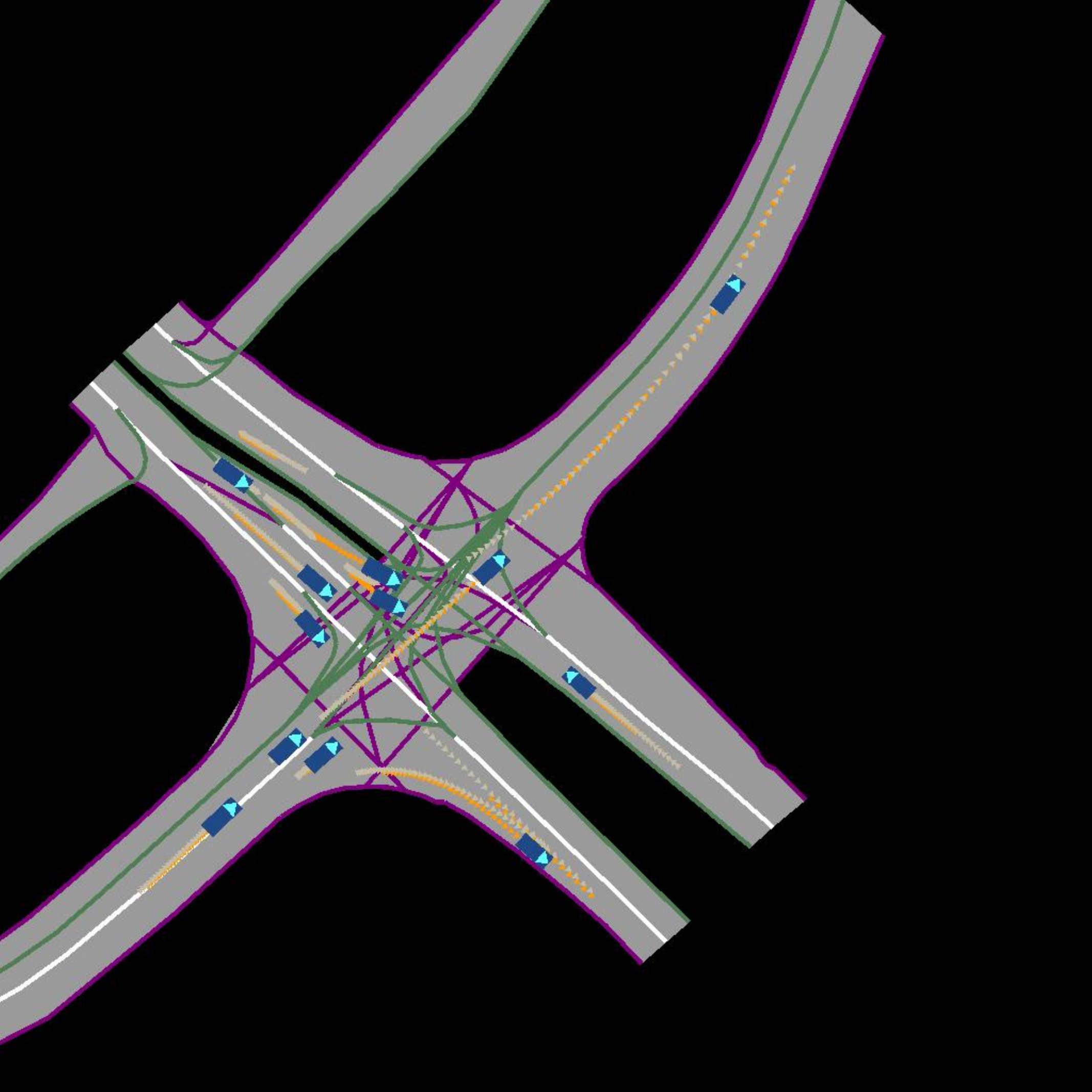}
        \caption*{Rollout-only}
        \label{}
    \end{subfigure}
    \hfill
    \begin{subfigure}[b]{0.24\textwidth}
        \centering
        \includegraphics[width=1\textwidth]{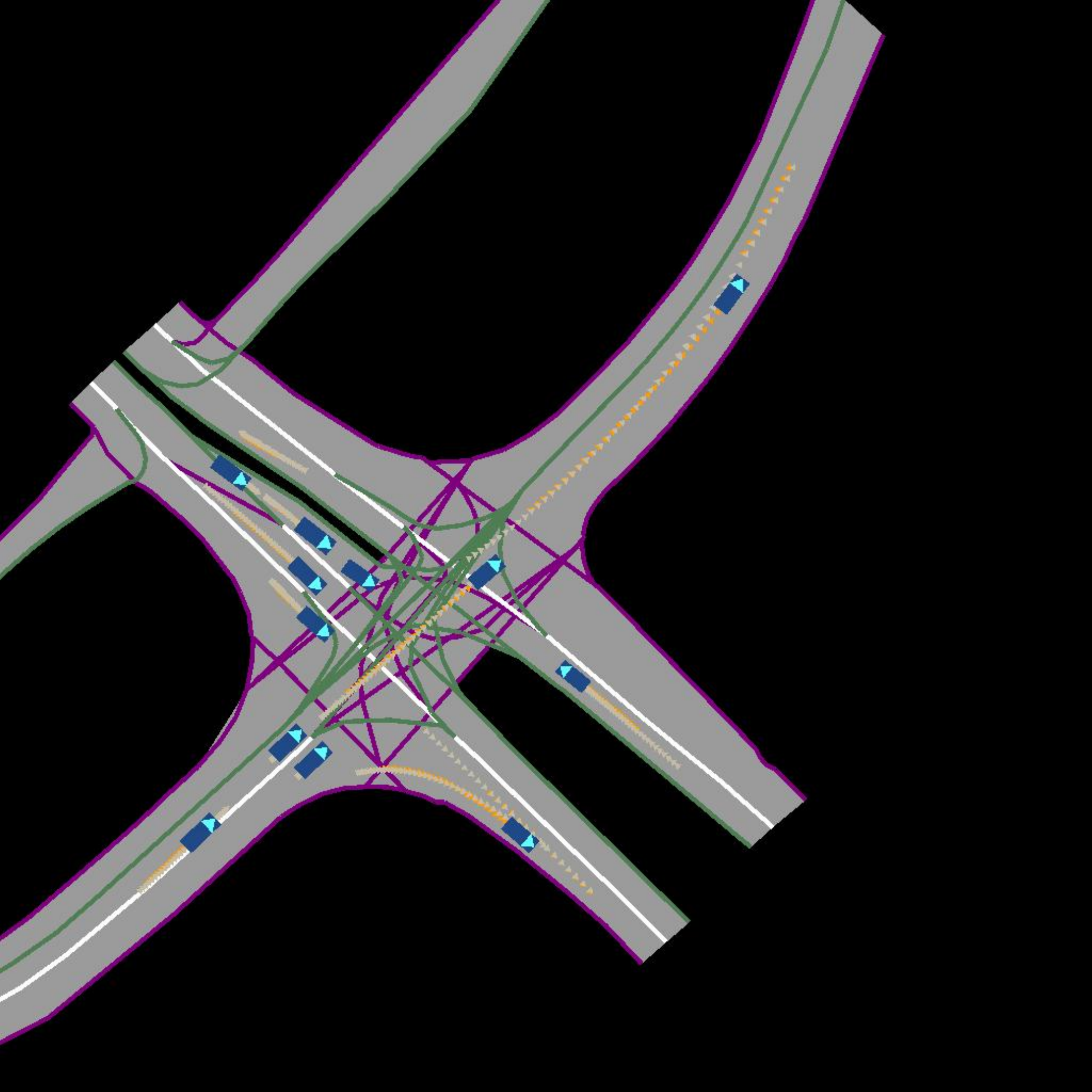
        }
        \caption*{\method{}}
        \label{}
    \end{subfigure}
    \hfill
    \begin{subfigure}[b]{0.24\textwidth}
        \centering
        \includegraphics[width=1\textwidth]{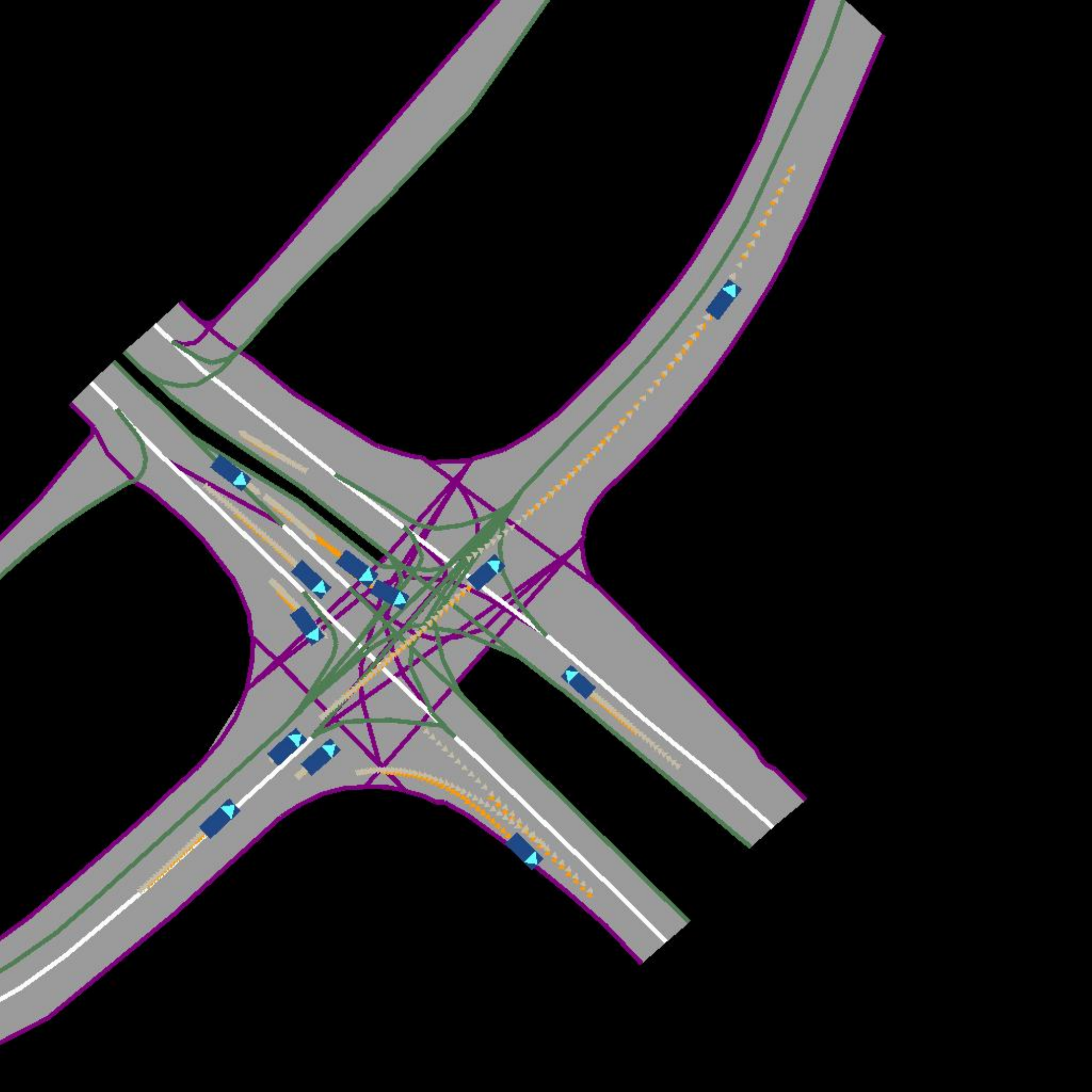}
        \caption*{\method{} (staged)}
        \label{}
    \end{subfigure}
\end{figure*}

\begin{figure*}[!h]
    \centering
    \begin{subfigure}[b]{0.24\textwidth}
        \centering
       \includegraphics[width=1\textwidth]{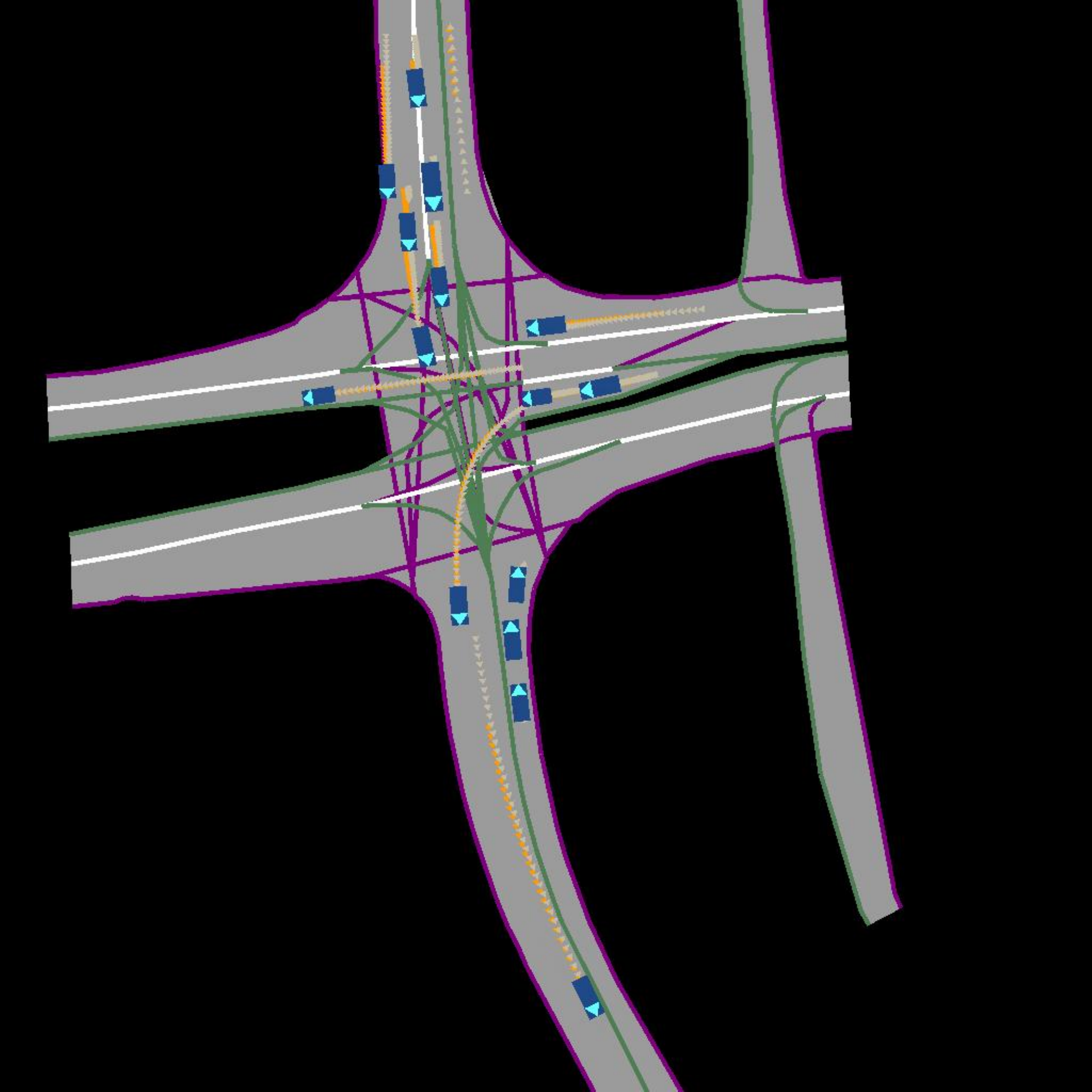}
       \caption*{Standard Diffusion}
        \label{}
    \end{subfigure}
    \hfill
    \begin{subfigure}[b]{0.24\textwidth}
        \centering
        \includegraphics[width=1\textwidth]{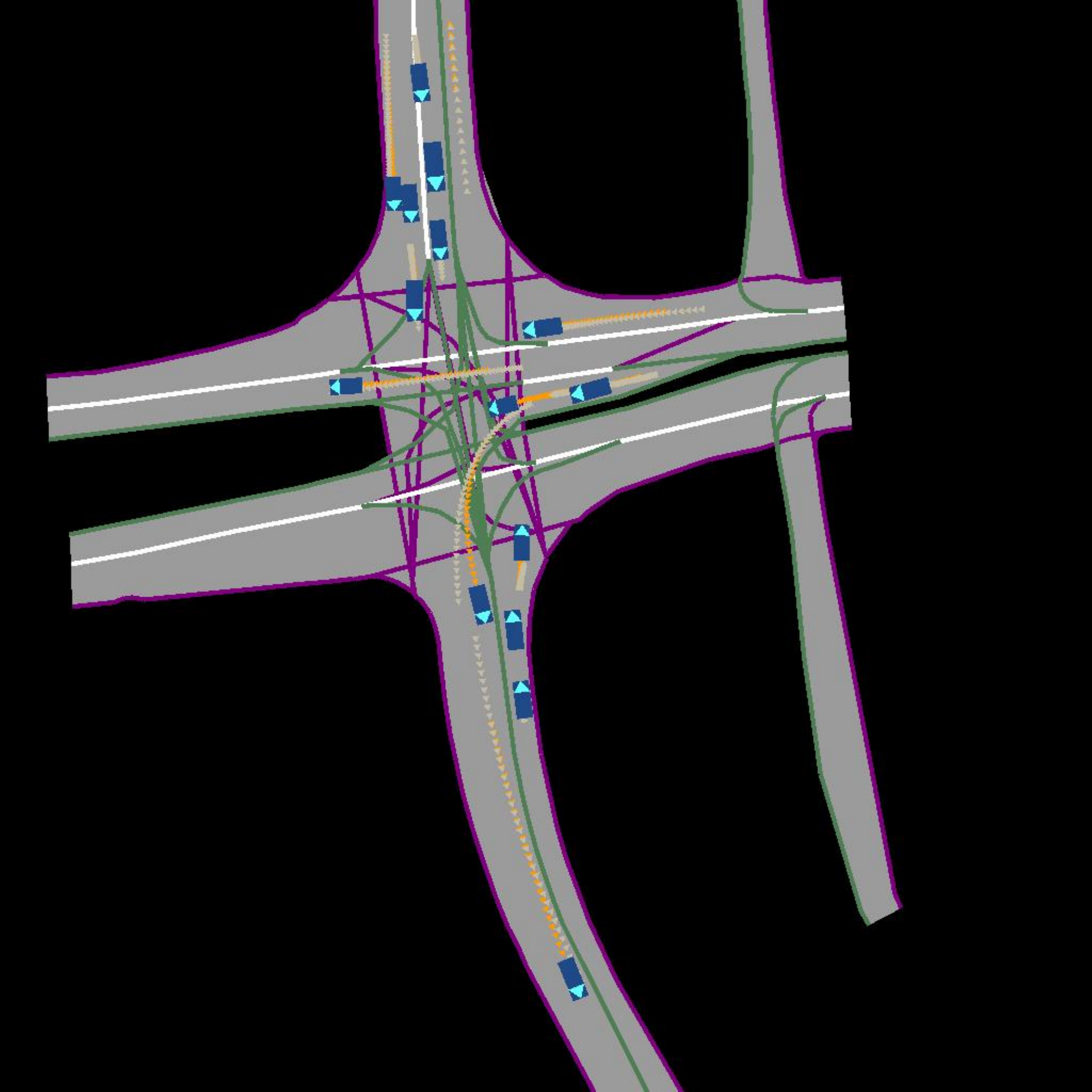}
        \caption*{MPGD w/o projection}
        \label{}
    \end{subfigure}
    \hfill
    \begin{subfigure}[b]{0.24\textwidth}
        \centering
        \includegraphics[width=1\textwidth]{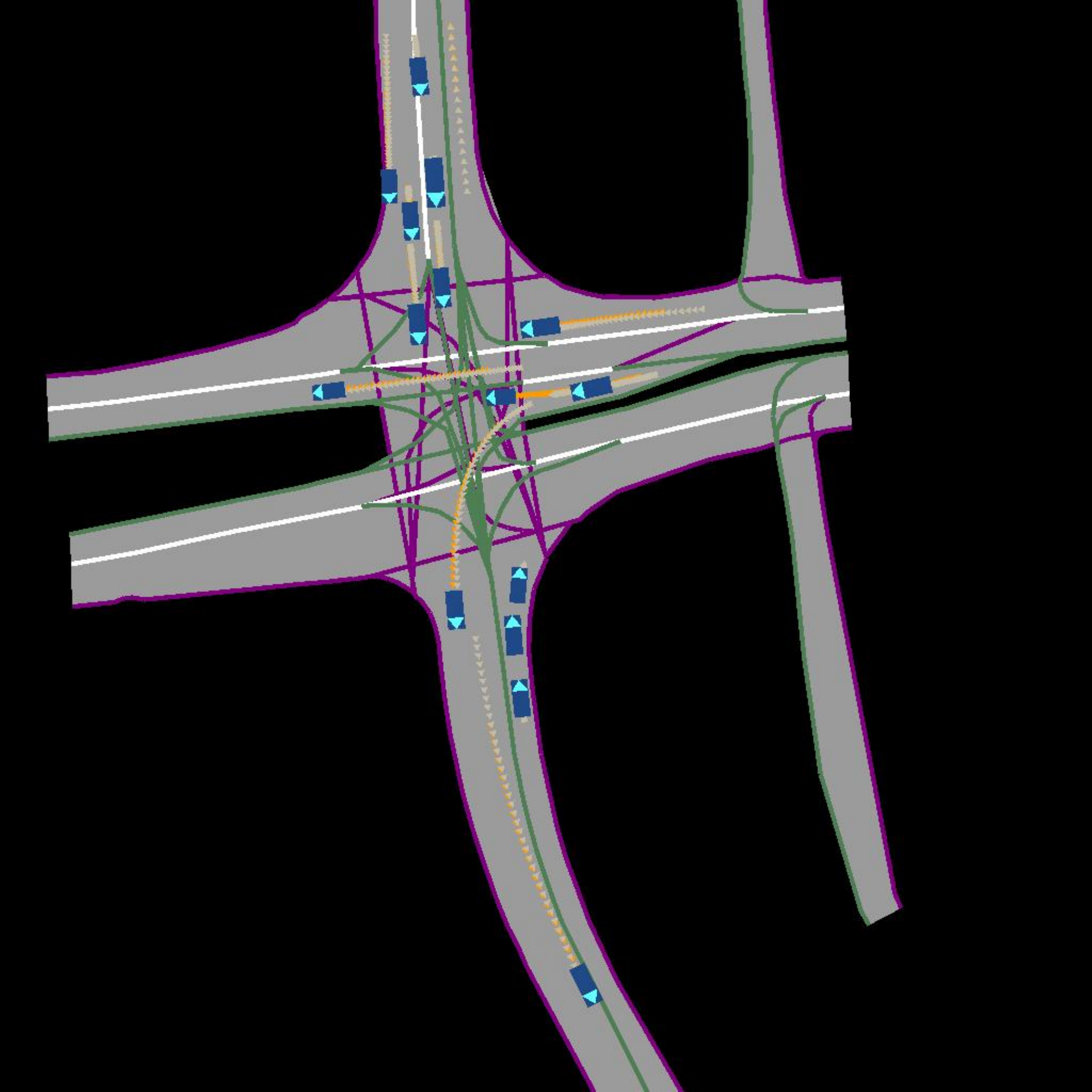}
        \caption*{MBM++}
        \label{}
    \end{subfigure}
    \hfill
    \begin{subfigure}[b]{0.24\textwidth}
        \centering
        \includegraphics[width=1\textwidth]{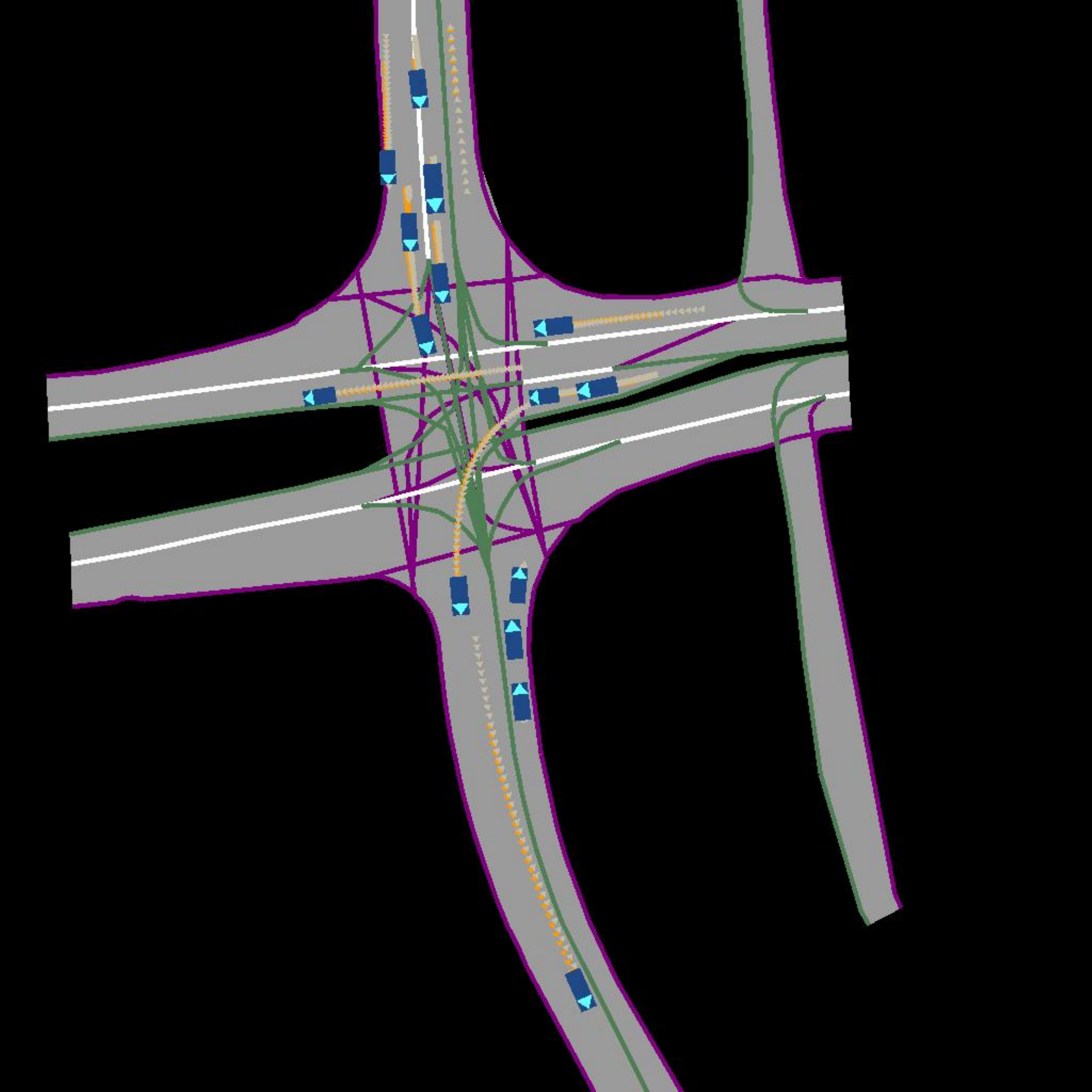}
        \caption*{PIDM}
        \label{}
    \end{subfigure}
    \vspace{0.6em}
    \begin{subfigure}[b]{0.24\textwidth}
        \centering
       \includegraphics[width=1\textwidth]{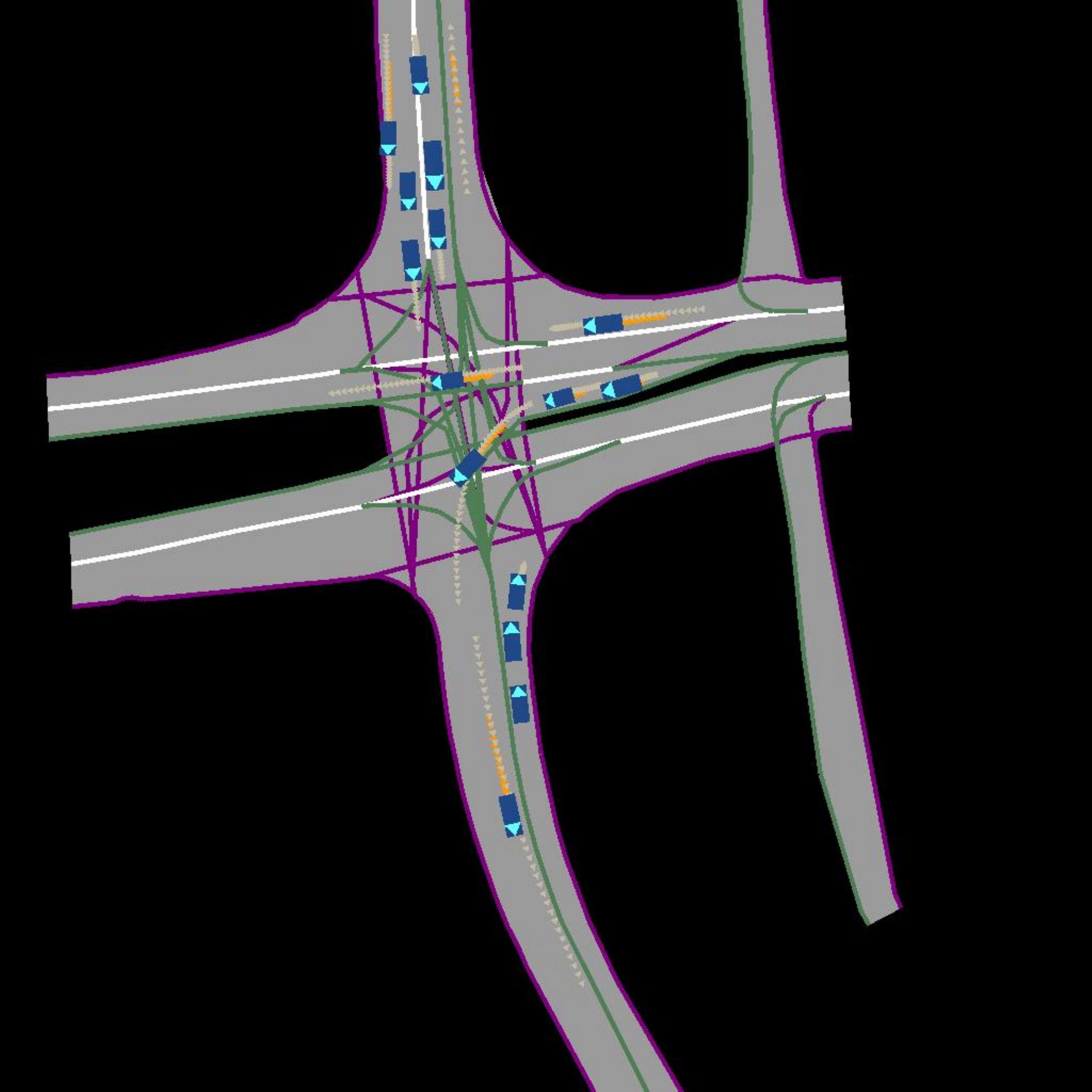}
       \caption*{DPOK}
        \label{}
    \end{subfigure}
    \hfill
    \begin{subfigure}[b]{0.24\textwidth}
        \centering
        \includegraphics[width=1\textwidth]{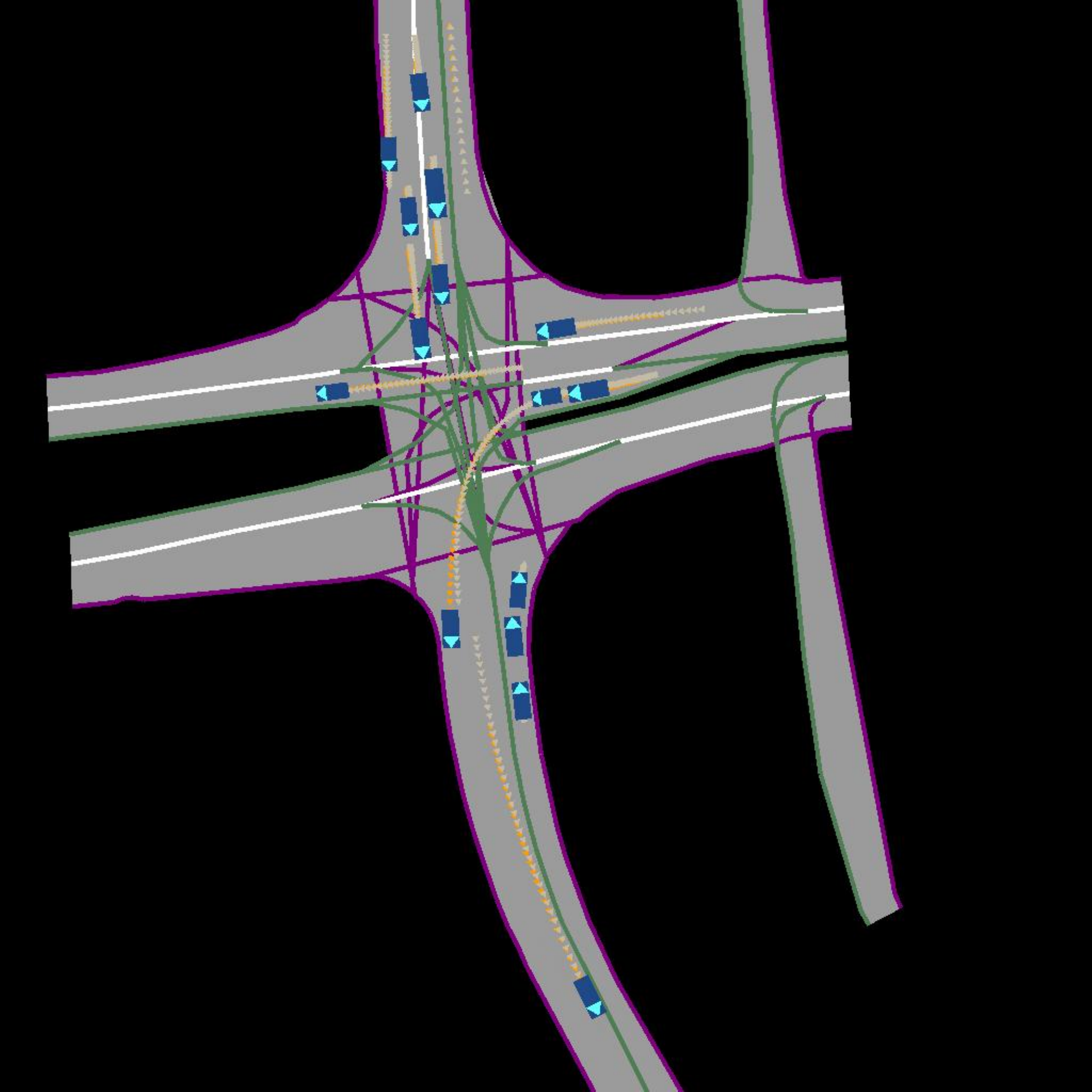}
        \caption*{Rollout-only}
        \label{}
    \end{subfigure}
    \hfill
    \begin{subfigure}[b]{0.24\textwidth}
        \centering
        \includegraphics[width=1\textwidth]{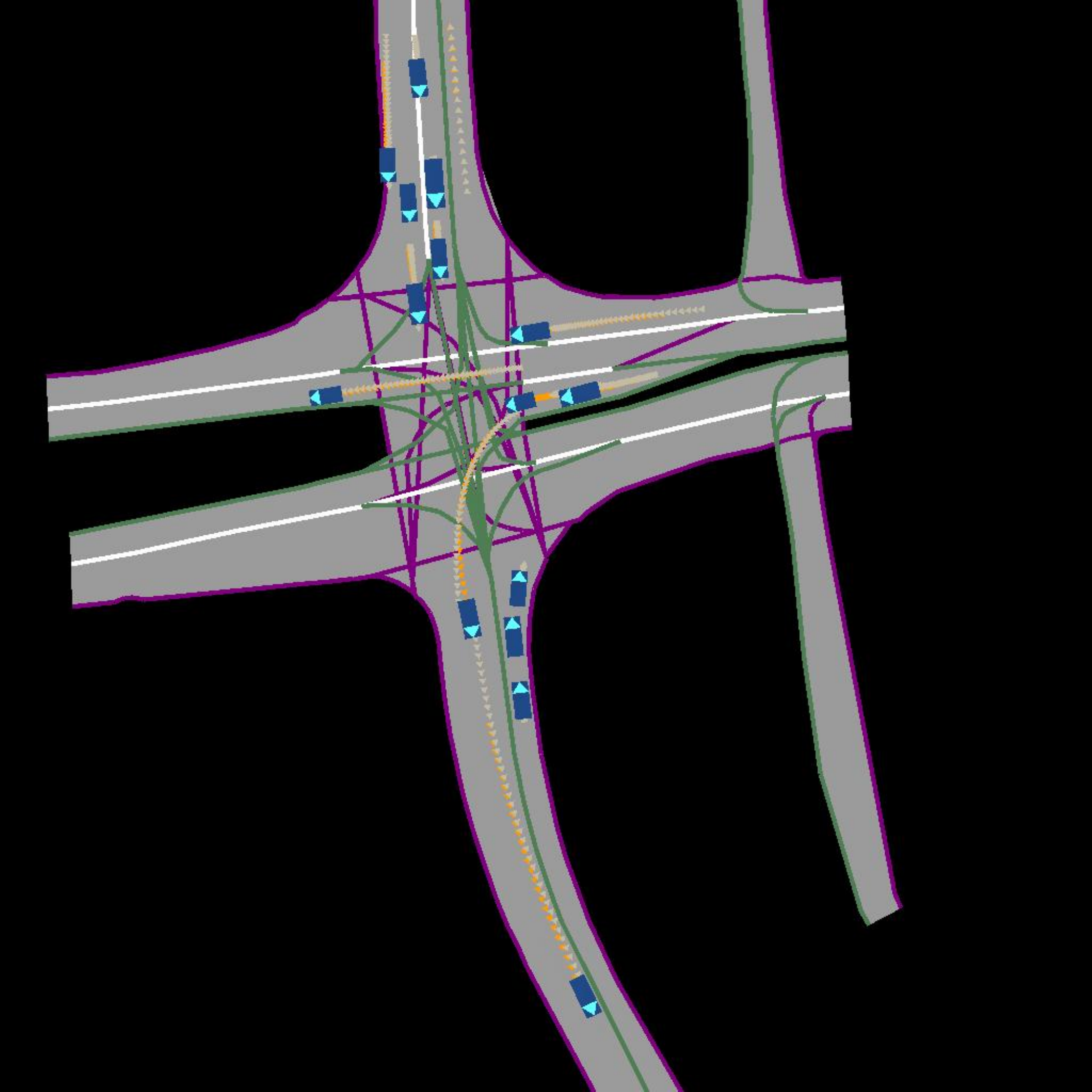
        }
        \caption*{\method{}}
        \label{}
    \end{subfigure}
    \hfill
    \begin{subfigure}[b]{0.24\textwidth}
        \centering
        \includegraphics[width=1\textwidth]{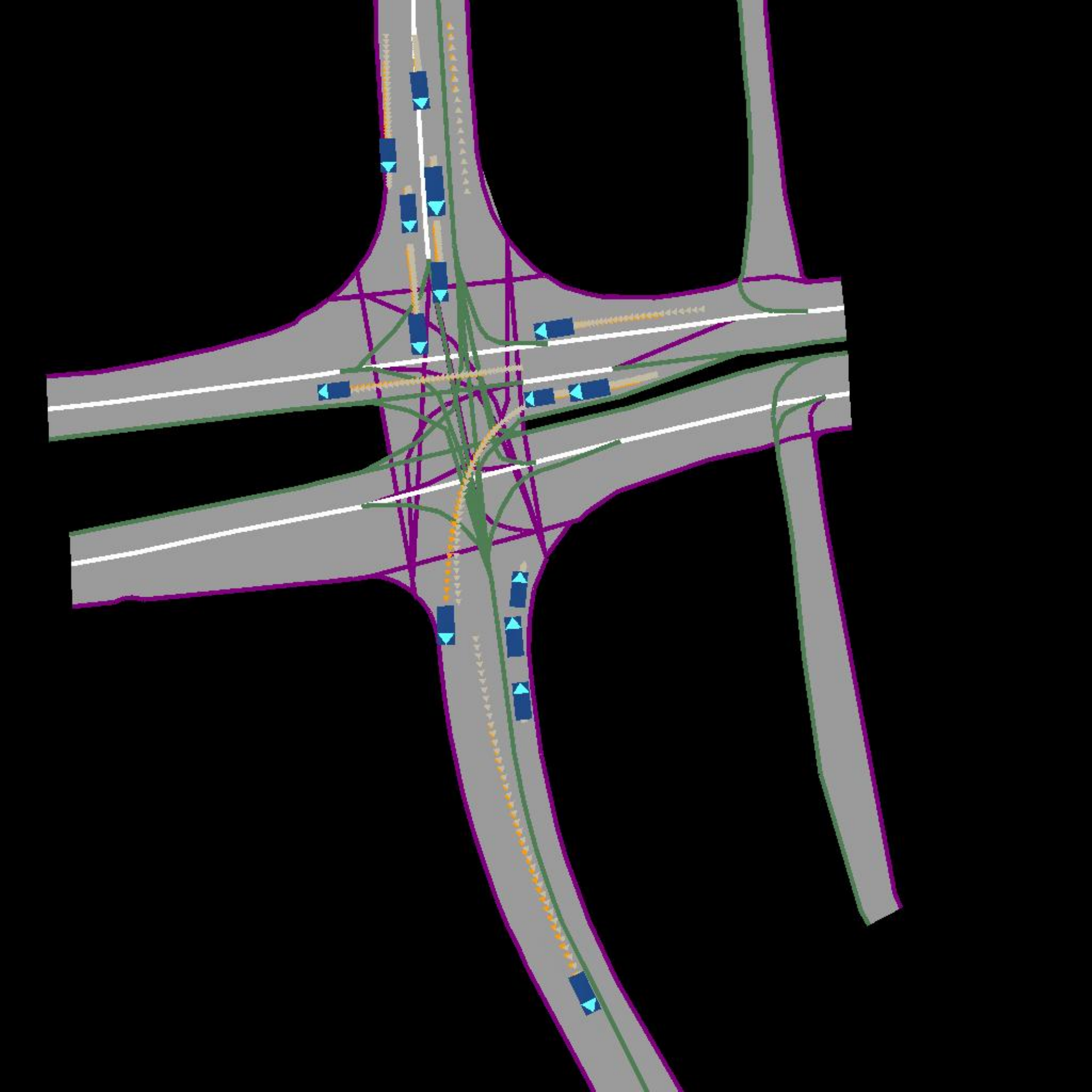}
        \caption*{\method{} (staged)}
        \label{}
    \end{subfigure}
\end{figure*}

\begin{figure*}[!h]
    \centering
    \begin{subfigure}[b]{0.24\textwidth}
        \centering
       \includegraphics[width=1\textwidth]{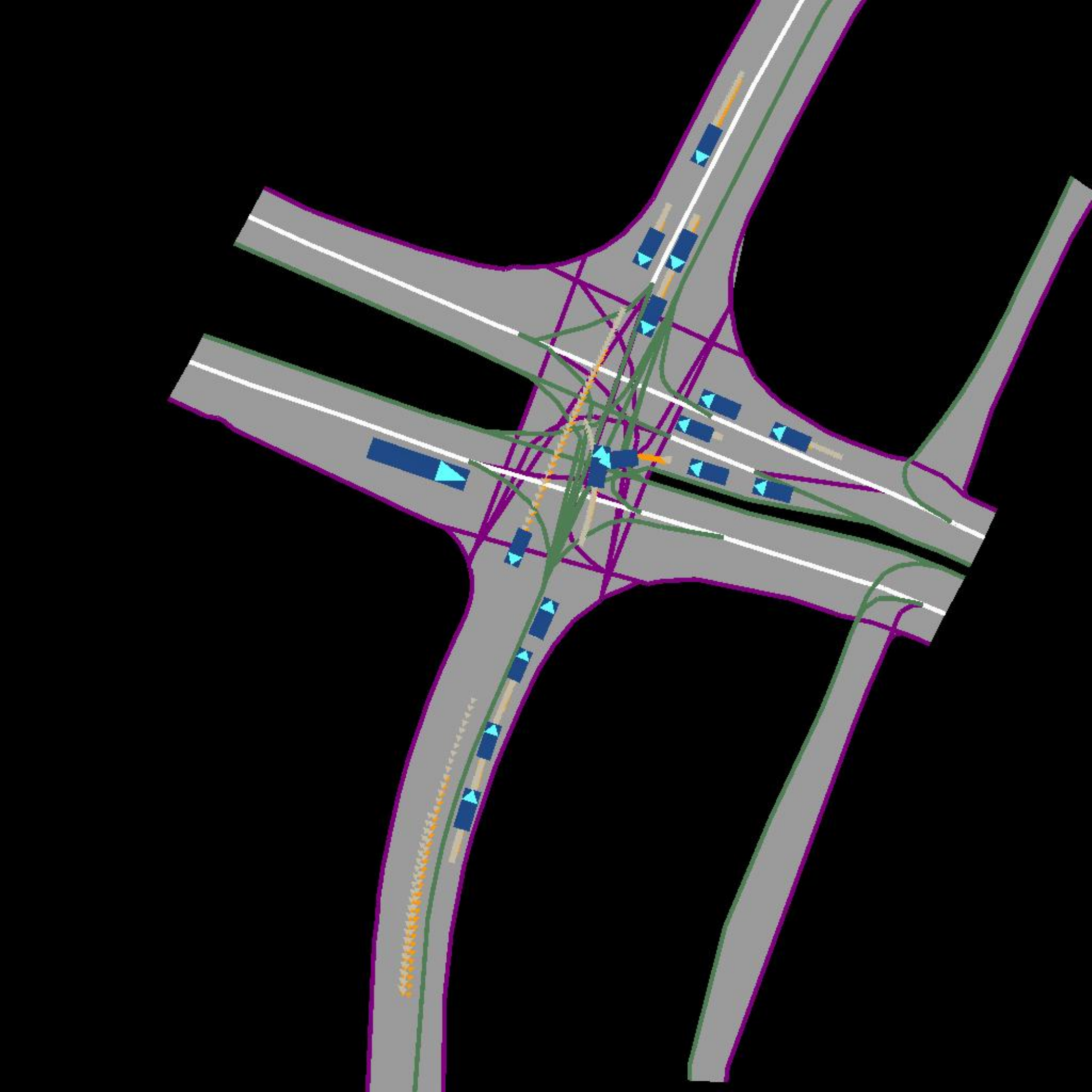}
       \caption*{Standard Diffusion}
        \label{}
    \end{subfigure}
    \hfill
    \begin{subfigure}[b]{0.24\textwidth}
        \centering
        \includegraphics[width=1\textwidth]{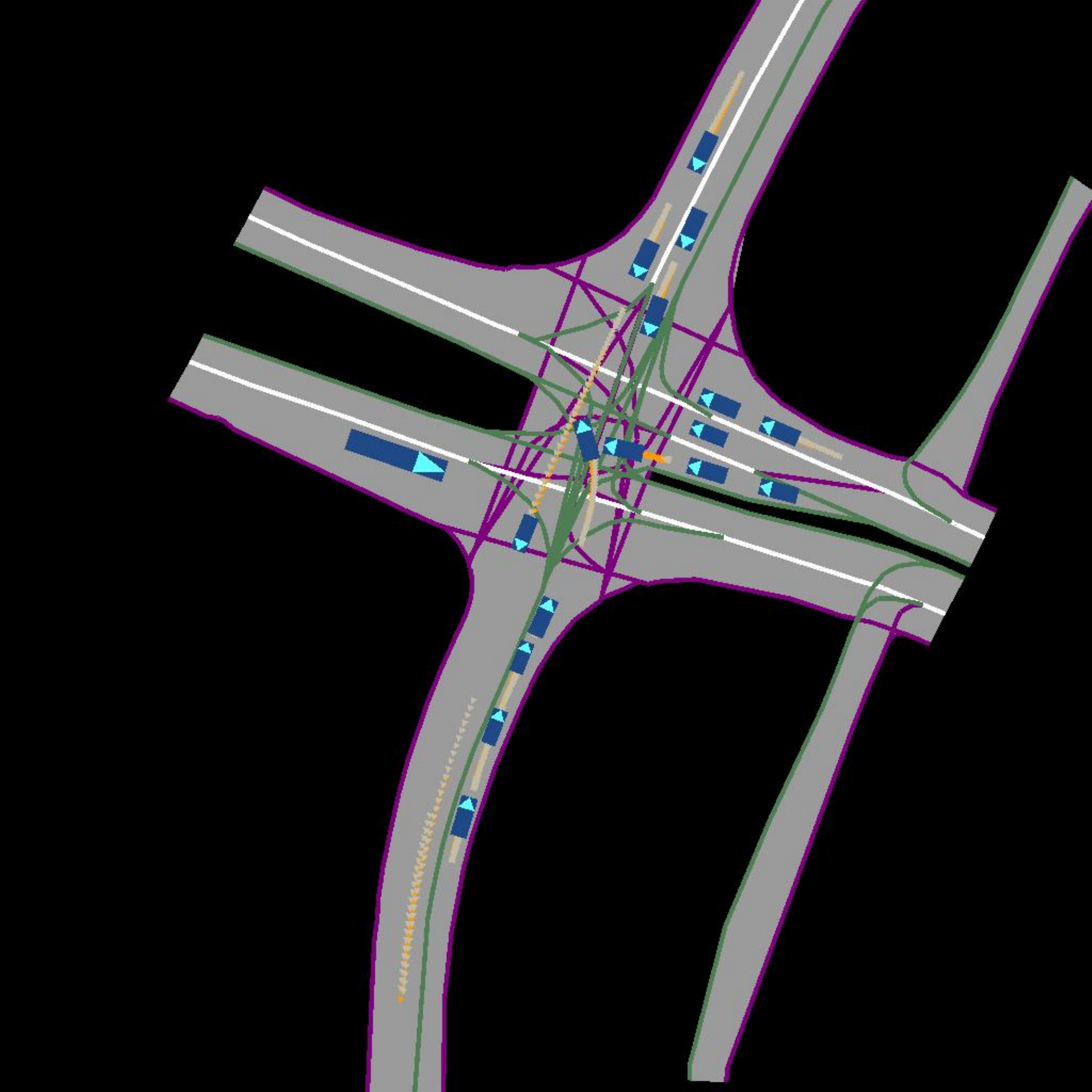}
        \caption*{MPGD w/o projection}
        \label{}
    \end{subfigure}
    \hfill
    \begin{subfigure}[b]{0.24\textwidth}
        \centering
        \includegraphics[width=1\textwidth]{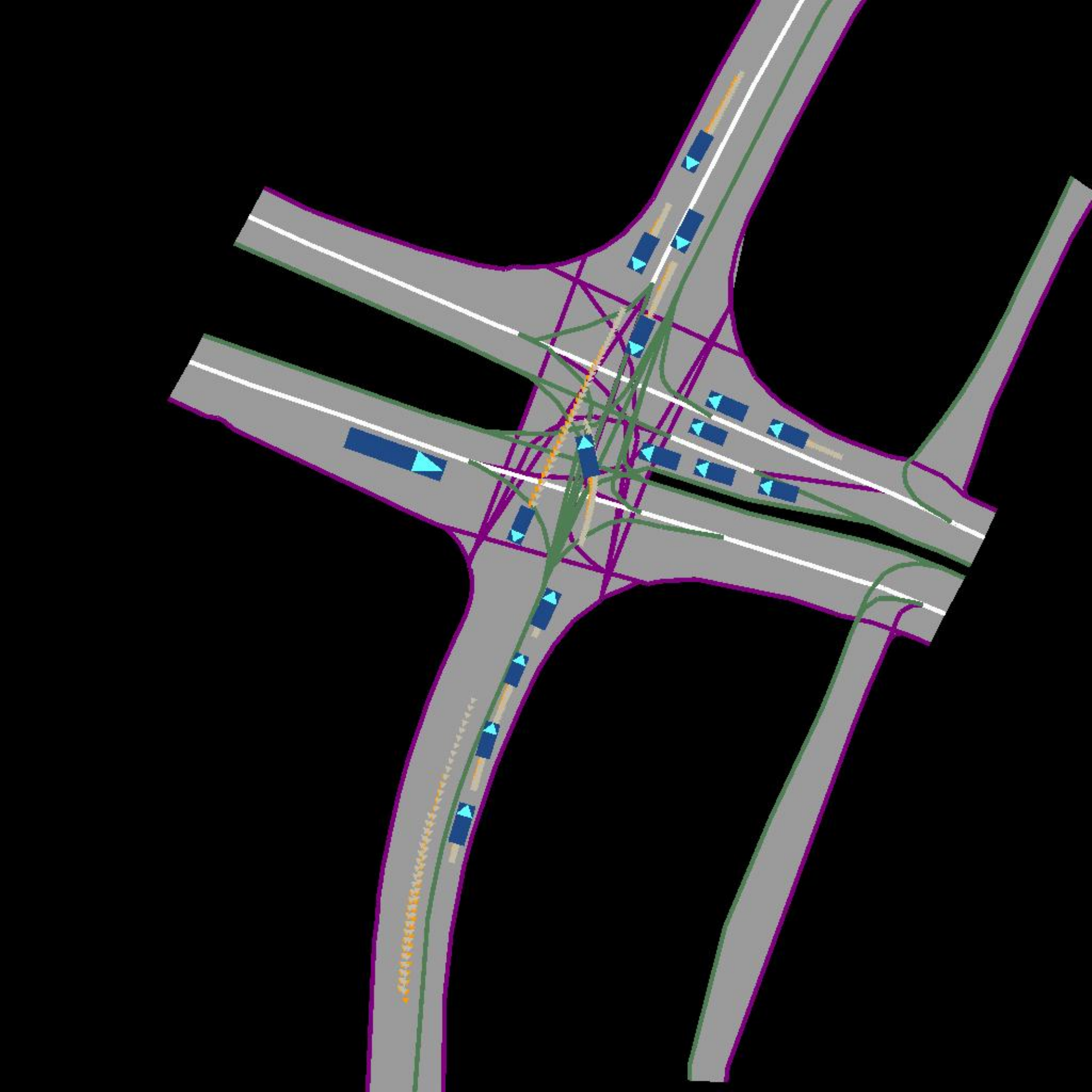}
        \caption*{MBM++}
        \label{}
    \end{subfigure}
    \hfill
    \begin{subfigure}[b]{0.24\textwidth}
        \centering
        \includegraphics[width=1\textwidth]{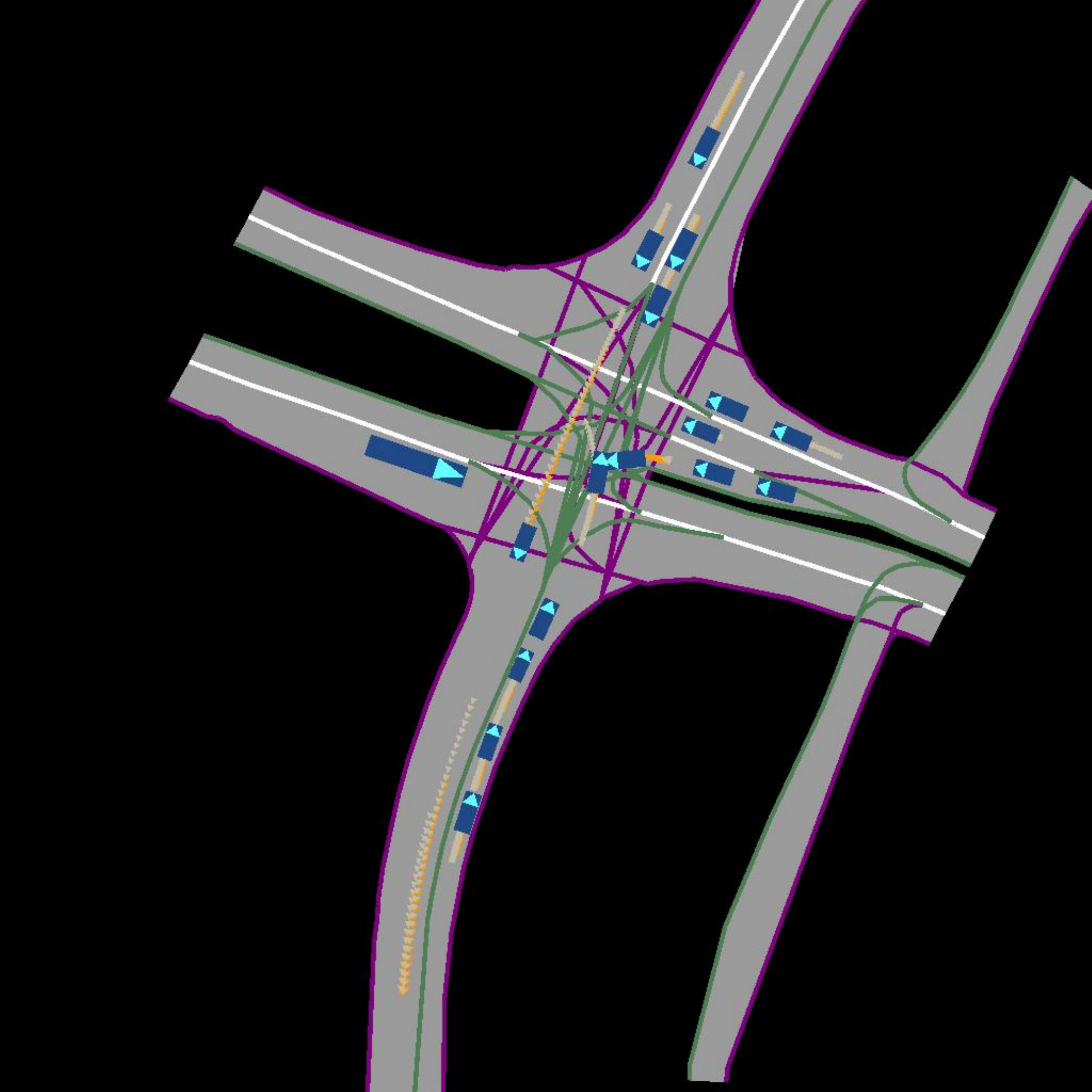}
        \caption*{PIDM}
        \label{}
    \end{subfigure}
    \vspace{0.6em}
    \begin{subfigure}[b]{0.24\textwidth}
        \centering
       \includegraphics[width=1\textwidth]{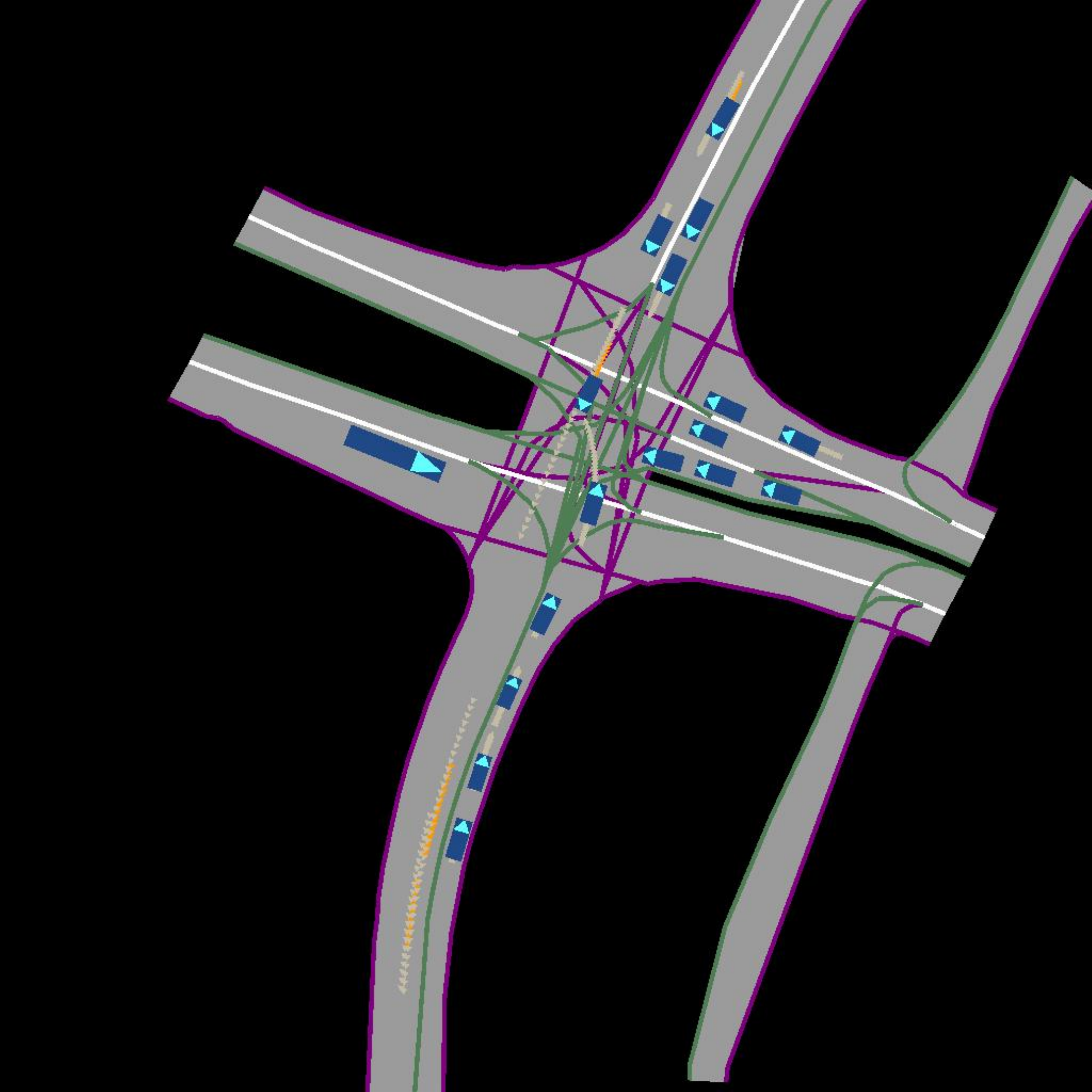}
       \caption*{DPOK}
        \label{}
    \end{subfigure}
    \hfill
    \begin{subfigure}[b]{0.24\textwidth}
        \centering
        \includegraphics[width=1\textwidth]{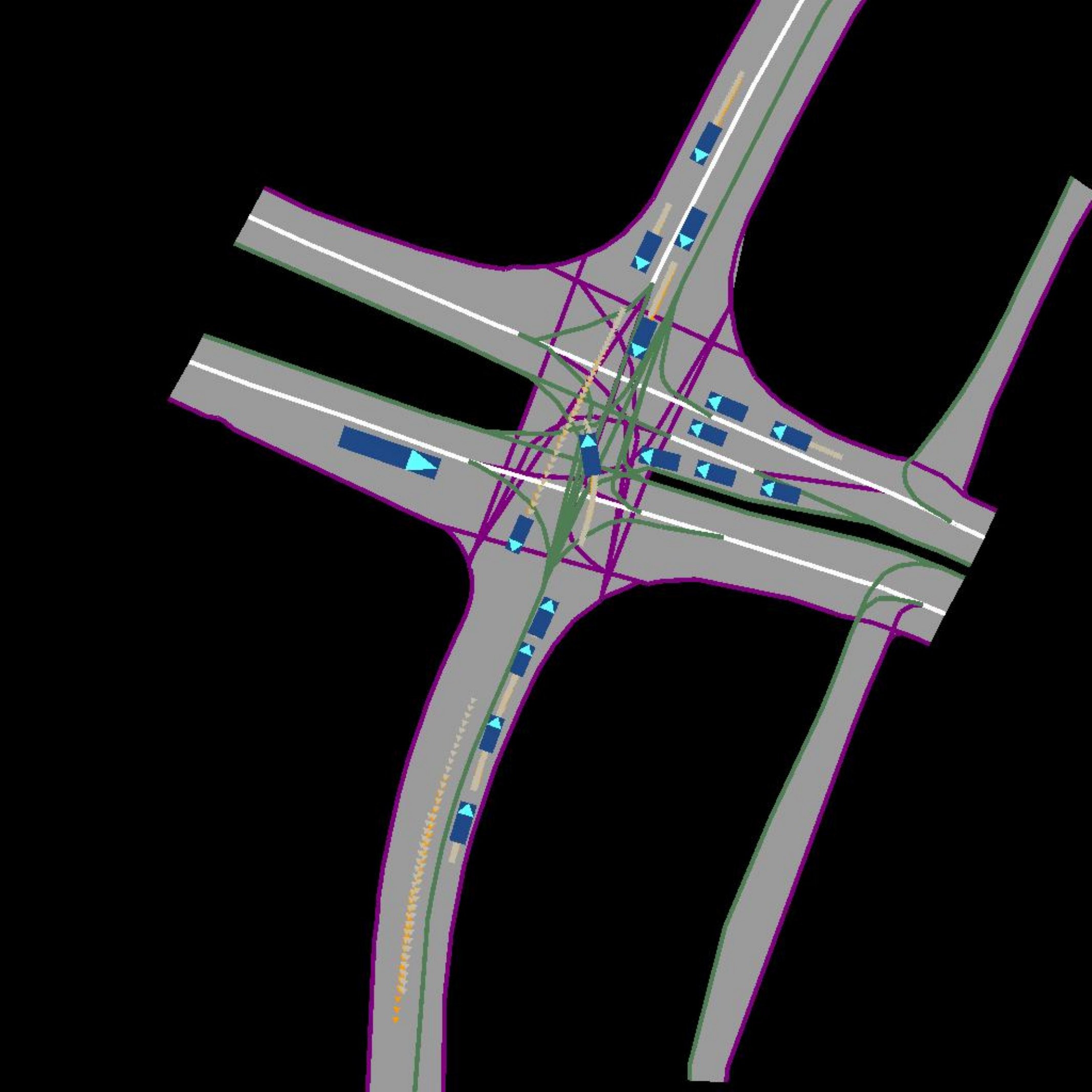}
        \caption*{Rollout-only}
        \label{}
    \end{subfigure}
    \hfill
    \begin{subfigure}[b]{0.24\textwidth}
        \centering
        \includegraphics[width=1\textwidth]{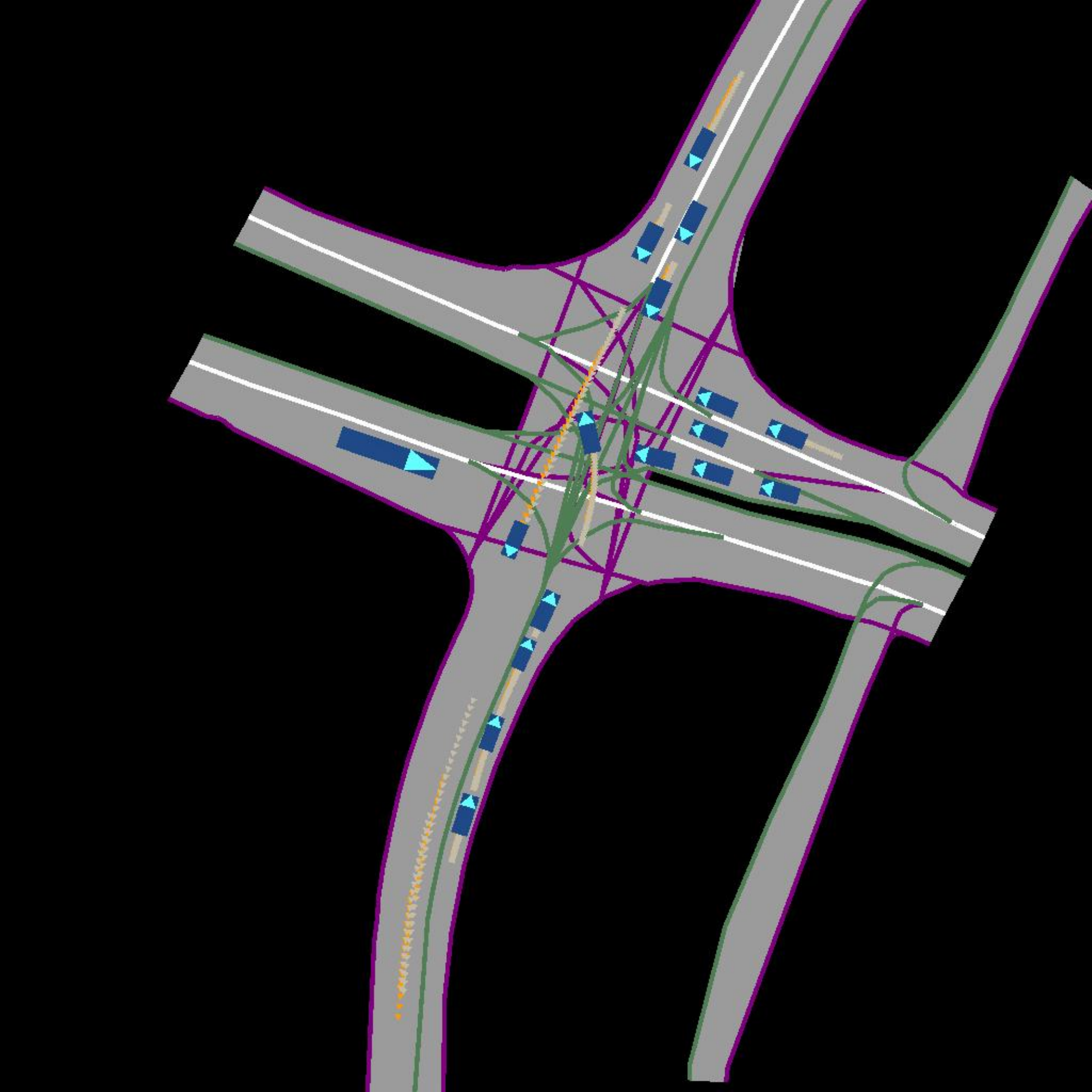
        }
        \caption*{\method{}}
        \label{}
    \end{subfigure}
    \hfill
    \begin{subfigure}[b]{0.24\textwidth}
        \centering
        \includegraphics[width=1\textwidth]{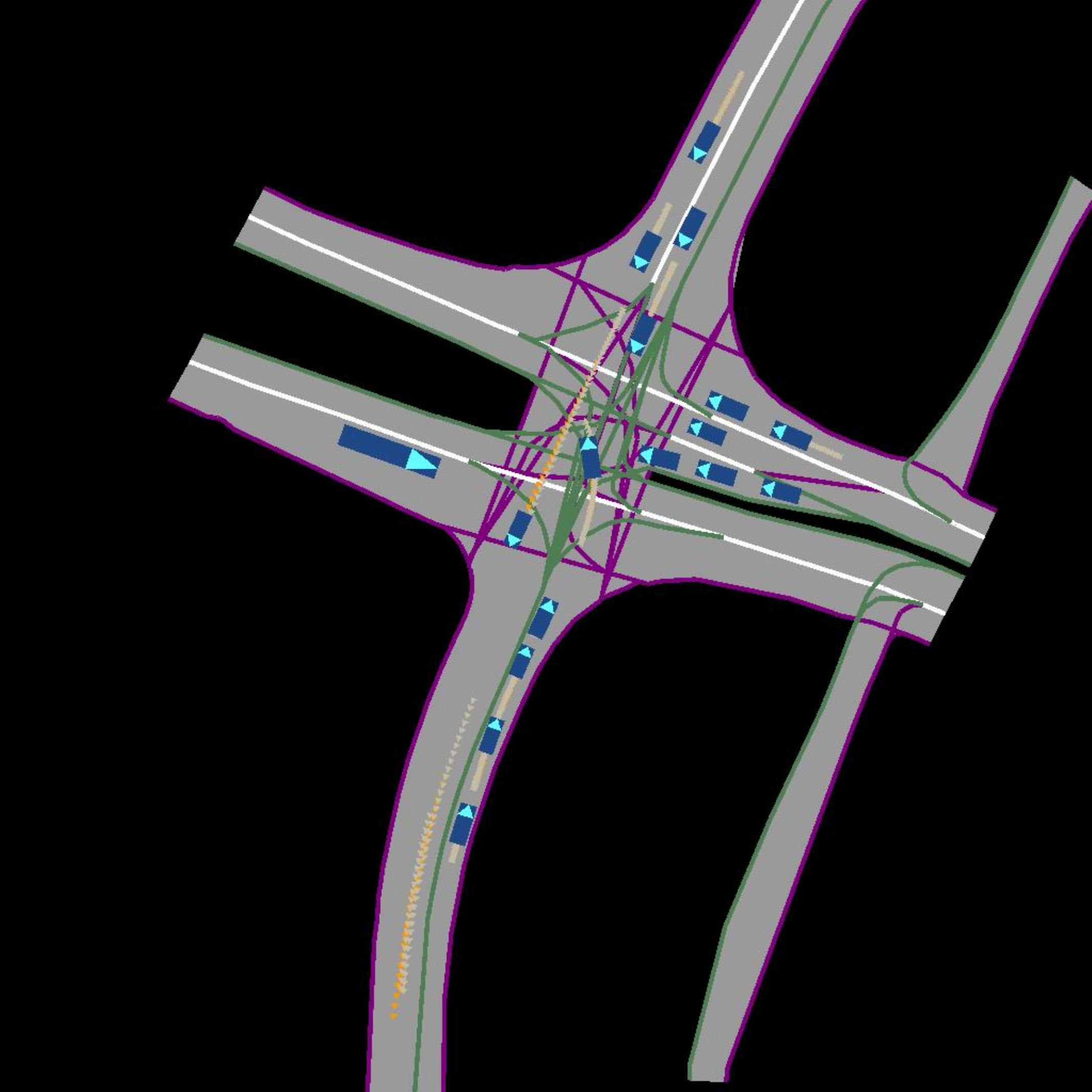}
        \caption*{\method{} (staged)}
        \label{}
    \end{subfigure}
\end{figure*}

\end{document}